\newtheorem{theorem}{Theorem}
\newtheorem{remark}{Remark}
\newtheorem{corollary}{Corollary}
\newtheorem{definition}{Definition}
\newtheorem{lemma}{Lemma}
\newcommand{\bsb}{}
\newcommand{\bsbX}{X}
\newcommand{\bsbx}{x}
\newcommand{\bsby}{y}
\newcommand{\bsbY}{Y}
\newcommand{\bsbb}{\beta}
\newcommand{\bsbg}{\gamma}
\newcommand{\bsbs}{s}
\newcommand{\bsbS}{S}
\newcommand{\bsbT}{T}
\newcommand{\bsbI}{I}
\newcommand{\bsbSig}{\Sigma}
\newcommand{\bsbxi}{\xi}
\newcommand{\bsba}{\alpha}
\newcommand{\bsbA}{A}
\newcommand{\bsbB}{B}
\newcommand{\bsbeps}{\epsilon}
\newcommand{\bsbE}{\mathcal E}
\newcommand{\bsbDelta}{\Delta}
\newcommand{\EE}{\,\mathbb{E}}
\newcommand{\EP}{\,\mathbb{P}}
\newcommand{\breg}{{\mathbf{\Delta}}}
\newcommand{\Breg}{{\mathbf{D}}}
\newcommand{\linrateparam}{\varepsilon}
\newcommand{\oset}[3][0ex]{%
        \mathrel{\mathop{#3}\limits^{
                        \vbox to#1{\kern-2\ex@
                                \hbox{$\scriptstyle#2$}\vss}}}}
\newcommand{\back}{\oset\smallsetminus}
\DeclareMathOperator*{\argmin}{argmin}
\DeclareMathOperator{\vect}{\mbox{vec}\,}
\DeclareMathOperator{\tr}{tr}
\begin{document}

\title{Slow Kill for Big Data Learning}


\author{Yiyuan She, Jiahui Shen, and Adrian Barbu\\
Department of Statistics, Florida State University
}

\maketitle
\begin{abstract}
Big-data applications often involve a vast number of observations and features,    creating new challenges for variable selection and parameter estimation. This paper presents  a novel technique called ``slow kill,''  which utilizes nonconvex constrained optimization, adaptive  $\ell_2$-shrinkage, and increasing learning rates. The fact that the problem size can decrease during the slow kill iterations makes it particularly effective for large-scale variable screening.  The interaction between statistics and optimization provides valuable insights into  controlling quantiles, stepsize, and shrinkage parameters in order to relax the regularity conditions required to achieve the desired level of statistical accuracy. Experimental results on real and synthetic data show that slow kill outperforms state-of-the-art algorithms in various situations while being computationally efficient for large-scale data.
\end{abstract}

\begin{IEEEkeywords}
Top-down algorithms, sparsity, nonconvex optimization, nonasymtotic analysis, sub-Nyquist spectrum sensing
\end{IEEEkeywords}

\section{Introduction}
\label{intro}
This paper studies how to build a parsimonious and predictive model in big data applications, where both the number of predictors and the number of observations can be extremely large. Let $\bsby\in\mathbb{R}^{n}$ be a response vector with $n$ samples and $\bsbX = [\bsbx_1, \ldots, \bsbx_p] \in\mathbb{R}^{n\times p}$ be a design matrix consisting of $p$ features or predictors.
Consider a general learning problem with loss $l_0(\bsbX\bsbb;\bsby)$ to measure the discrepancy between $\bsbX\bsbb$ and $\bsby$.  As $p$ can be much larger than $n$, a sparsity-promoting regularizer is often used to capture  model parsimony
\begin{equation} \label{eq:init_prob}
        \min_{\bsbb\in\mathbb{R}^{p}}l_{0}(\bsbX\bsbb;\bsby)+P(\bsbb;\lambda),
\end{equation}
where $\lambda$ is a regularization parameter.
There are numerous options for $l_0$ and $P$,   neither of which are necessarily convex. In many cases, $l_0$ may be a  negative log-likelihood function, but we will consider a more general setup that may not be based on likelihood.

Over the past decade, there have been significant advancements in statistical theory for the minimizers of the penalized problem \eqref{eq:init_prob}. However, modern scientists often encounter challenges with big data, making it impractical to obtain globally optimal estimators  even when convexity is present.  This paper aims to incorporate computational considerations into statistical modeling, resulting in a new big-data learning framework with theoretical guarantees.
 When tackling these challenges in large-scale variable selection, the desired algorithms should possess the following traits:

(a) Ease in tuning.
It is common in practice to seek a solution with  a \emph{prescribed} cardinality (or a specific number of variable, denoted by  $q$).   However, using an algorithm designed for the penalized problem   (1) may require excessive  computation, and the regularization parameter $\lambda$ may not be as intuitive when attempting to achieve this objective. Many practitioners perform a grid search for $\lambda$. However, when dealing with big data, the grid must be fine enough to encompass potentially useful candidate models, resulting in a substantial computational burden.

(b) Scalability.
In addition to being efficient, 
an ideal algorithm should   be easy to implement.   Since ad-hoc procedures can be unreliable,  it is preferable to employ an algorithm based on \textit{optimization} rather than relying on heuristics.   It would also be advantageous if the algorithm could adapt its parameters according to the available computational resources, which necessitates an understanding of the algorithm's iteration complexity and per-iteration cost.

(c) Statistical guarantee.
It is widely recognized that the   lasso  is effective for variable selection when the design matrix exhibits low coherence and the signal is sufficiently strong   \citep{bickel2009,bellec2018slope}. Some simpler and faster methods, such as those for variable screening \citep{Fan2008},  are based on the assumption of independent (or only mildly correlated) features.
While these weak-correlation assumptions allow for aggressive feature elimination, they are often restrictive   for real-world high-dimensional data.
Evaluating a globally optimal solution to  \eqref{eq:init_prob} with an $\ell_0$-type penalty \citep{zhang2010} does have a  statistically sound guarantee regardless of coherence, but is only computationally feasible for small datasets.  Therefore, a more pressing challenge is  to design an iterative process that can relax the stringent regularity conditions required for attaining optimal statistical accuracy.

This work proposes a new approach called \textit{slow kill}  to tackle the aforementioned challenges. The main features of the algorithm are as follows.

\begin{itemize}
        \item Interestingly, slow kill works in the opposite direction of forward pathwise methods and boosting algorithms, which all  build up  a model from the null  \citep{buhlmann2003boosting,needell2009cosamp,ZhangIT11,zhang2013multi,zhao2018pathwise}.
        \item Slow kill incorporates  adaptive $\ell_2$-shrinkage and growing learning rates to handle coherent designs and reduce computational burden. Its roots in optimization make it computationally scalable and easy to tune parameters.

        \item
        Theoretically, slow kill  enjoys rigorous, provable guarantees of accuracy and linear convergence  in a statistical sense. In particular, our theory supports backward  quantile control and fast learning.
\end{itemize}

The rest of  the paper is organized as follows.  Section \ref{iq} investigates a hybrid regularized estimation in the regression setting to motivate some basic elements of slow kill and  compares it to related works.  Section \ref{piq} introduces the general slow kill procedure for a differentiable loss function and analyzes how the statistical error changes as the cycles progress.  Section \ref{experiments} performs extensive simulations and real data experiments to compare slow kill to some state-of-the-art methods in terms of both efficiency and accuracy. We summarize our findings in Section \ref{summary}.  More technical   details   are provided in the appendix.

\emph{Notations and symbols.} The following notations and symbols will be used. Let $[n] = \{ 1, \ldots, n\}$ and $\lfloor x \rfloor$ be the largest integer smaller than or equal to $x$. Define $a\vee b = \max(a, b)$ and $a\wedge b = \min(a, b)$.
We use  $a \lesssim b$ to denote $a \leq cb$ for some positive constant $c$, and the constants denoted by $c$ or $C$ may not be the same at each occurrence.
Given any $\bsbb \in \mathbb{R}^p$, we use $\mathcal J(\bsbb) \subset [p]$ to denote its support, i.e., $\mathcal J(\bsbb)=\{j: \beta_j \ne 0\}$, and $J(\bsbb ) = |\mathcal J(\bsbb)| = \|\bsbb\|_0 = \sum_{j=1}^{p} 1_{\beta_j \neq 0}$.
Given $I \subset [p]$, we use $\bsbX_{I}$ to denote the sub-matrix of $\bsbX$ formed with the columns in $I$, and $\bsbb_{I}$ the subvector associated with $I$. In particular, $\bsbx_j$ denotes the $j$th column of $\bsbX$ for any $j \in [p]$.
When $\bsbA$ is a symmetric matrix, we use $\bsbA_{I}$ to denote the sub-matrix of $\bsbA$ formed with the columns and rows indexed by $I$, and $\lambda_{\max}(\bsbA)$, $\lambda_{\min}(\bsbA)$ to denote its largest and smallest eigenvalues, respectively.

Given $\bsbX \in \mathbb R^{n\times p}$,  the restricted isometry numbers $\rho_+(s)$, $\rho_-(s)$ \citep{Candes2005} are the smallest and largest numbers, respectively, that satisfy
\begin{align} \label{ripconsts}
        \rho_-(s) \|\bsbb\|_2^2 \leq \|\bsbX \bsbb\|_2^2 \leq \rho_+(s)\|\bsbb\|_2^2, \ \forall \bsbb \in \mathbb R^p: \|\bsbb\|_0 \leq s,
\end{align}
and their dependence on $\bsbX$ is omitted. Obviously, $ 0 \le \rho_-(s) \le \rho_+(s) \le \rho_+(p) = \|\bsbX\|_2^2 $, where $\|\bsbX\|_2$ denotes the spectral norm of $\bsbX$.

For ease of presentation, we introduce a quantile-thresholding operator $\Theta^{\#}$ which performs simultaneous thresholding and $\ell_2$-shrinkage \citep{she2013group}. Given any $\bsbs =[s_1, \ldots, s_p]^T \in \mathbb R^p$, $\Theta^{\#}(\bsbs; q, \eta) = [t_1, \ldots, t_p]^T$ satisfying $t_{(j)} = s_{(j)} / (1 + \eta) \mbox{ if } 1 \leq j \le q \mbox{, and } 0 \mbox{ otherwise,} $ where $s_{(1)}, \ldots, s_{(p)}$ are the order statistics of $s_1, \ldots, s_p$ satisfying $|s_{(1)}| \geq \cdots \geq |s_{(p)}|$, and $t_{(1)}, \ldots, t_{(p)}$ are defined similarly.
To avoid ambiguity, we make a $\Theta^{\#}$-uniqueness assumption in performing $\Theta^{\#}(\bsbs; q, \eta)$ throughout the paper:  either $|s_{(q)}| > |s_{(q+1)}|$ or $s_{(q)} = s_{(q+1)} = 0$ occurs. The multivariate quantile thresholding function    $\vec{\Theta}^\# (\bsbS; {q}, \eta)$  for any $\bsbS=[\bsbs_1, \ldots, \bsbs_p]^{T}\in \mathbb{R}^{p\times m}$ is defined as a $p\times m$ matrix ${\bsbT}=[\bsb{t}_1, \ldots, \bsb{t}_p]^{T}$ with ${\bsb{t}}_j=\bsbs_j/(1+\eta)$ if $\|\bsbs_j\|_2$ is among the ${q}$ largest  elements in   $\{\|\bsbs_j\|_2: 1\leq s \leq p\}$,  and $\bsb{0}$ otherwise.


\section{Why Backward Selection?} \label{iq}
This section is to motivate a ``top-down''  algorithm design   in the fundamental regression setting. The quadratic loss is an important case of strongly convex losses and examining this case will provide a foundation for more general studies  under restricted strong convexity. 

Assume $\bsby = \bsbX \bsbb^* + \bsbeps$, where $\bsbb^* \in \mathbb{R}^p$, $\|\bsbb^*\|_0 \le s$ with $s \le  p \wedge n$. To begin with, we consider an $\ell_0$-constrained, $\ell_2$-penalized optimization problem to estimate the coefficient vector in high dimensions,
\begin{equation} \label{eq:criterion}
        \min_{\bsbb} \frac{1}{2} \|\bsby - \bsbX \bsbb\|_2^2 + \frac{\eta_0}{2} \|\bsbb\|_2^2 \equiv f(\bsbb)  \mbox{\; s.t. } \|\bsbb\|_0 \leq q.
\end{equation}
When $\bsbX, \bsby$ are not centered, an intercept term $\bsb{1} \alpha$ should be added in the loss, and $\alpha$ is subject to no regularization. The hybrid regularization  in  (1)  differs from the commonly used linear combination of $\ell_1$ and $\ell_2$ penalties  in the elastic net \citep{zou2005regularization}.
Compared to the regular $\ell_1$ penalty and other nonconvex penalties, 
$\|\cdot\|_0$ is arguably  an ideal choice  for enforcing sparsity and does not incur any unwanted bias. The constraint parameter $q \, (\le p)$   directly controls the number of variables in the resulting model, making it more convenient to use than a penalty parameter $\lambda$. The simultaneous $\ell_{2}$-penalty is  to compensate for collinearity and large noise, and is later used to overcome some obstacles in backward elimination. The associated regularization parameter $\eta_0$ can be easily tuned  and is not highly sensitive in experiments. Our theoretical analysis will reveal the benefits of a carefully designed  shrinkage sequence  for both numerical stability and statistical accuracy.

Problem \eqref{eq:criterion} is nonconvex and includes a discrete constraint. While it can be challenging to computationally solve problems of this nature, it is possible to find a local minimum using a scalable iterative optimization algorithm. Moreover, in the era of big data, it may not be necessary to fully solve \eqref{eq:criterion} in order to achieve good statistical performance for ``regular'' problems and analyzing algorithm-driven non-global estimators is crucial to discovering new and cost-effective methods for improving the statistical performance of nonconvex optimization.
Concretely, to introduce a prototype algorithm, we  first construct a surrogate function $g(\bsbb, \bsbb^-)$  for \eqref{eq:criterion},
\begin{equation*}
        \begin{split}
        g(\bsbb,\bsbb^{-})
        =\frac{1}{2}\lVert\bsby-\bsbX\bsbb^{-}\rVert_{2}^{2}+\langle\bsbX^{T}(\bsbX\bsbb^{-}-\bsby),\bsbb-\bsbb^{-}\rangle
        +\frac{\rho}{2}\lVert\bsbb-\bsbb^{-}\rVert_{2}^{2}+\frac{\eta_0}{2}\lVert\bsbb\rVert_{2}^{2},
        \end{split}
\end{equation*}
with $\rho>0$ to be chosen later, and then define a sequence of iterates by
\begin{equation} \label{g-opt}
        \bsbb^{(t+1)} = \arg \min_{\bsbb: \|\bsbb\|_0 \leq q} g(\bsbb, \bsbb^{(t)}).
\end{equation}
Recall the quantile-thresholding operator $\Theta^{\#}$ defined at the end of Section \ref{intro}. With some simple algebra (details omitted), we obtain an iterative quantile-thresholding algorithm
\begin{equation} \label{eq:update_rule}
        \bsbb^{(t+1)} = \Theta^{\#}\Big\{\bsbb^{(t)} - \frac{1}{\rho} \bsbX^T (\bsbX\bsbb^{(t)} - \bsby ); q, \frac{\eta_0}{ \rho} \Big\}.
\end{equation}
The first step amounts to the sure independence screening  \citep{Fan2008} when  $\bsbb^{(0)} = \bsb{0}$. However,   \eqref{eq:update_rule} iterates to lessen greediness with a low    per-iteration cost.

The update rule in \eqref{eq:update_rule} possesses some desirable computational properties.   For instance, if $\rho$ is large enough (more specifically, $\rho\ge \rho_+(2q)$ with $\rho_+(\cdot)$ defined in \eqref{ripconsts}), then the algorithm shows a worst-case sublinear convergence rate, regardless of the problem's dimensions, coherence, and signal strength. The obtained solutions (though not necessarily optimal) can be characterized as \textit{fixed points} of the algorithm mapping defined in \eqref{g-opt}. For more results and technical details, please refer to Theorem \ref{th:regconvergence}.

This class of procedures has been used in signal and information processing \citep{she2013group,SheetalPIQ}, and in the special case of $\eta_0 = 0$,  the plain update rule of \eqref{eq:update_rule} falls under the category of iterative hard-thresholding (IHT) algorithms  \citep{blumensath2008iterative,blumensath2009iterative} which only exhibit    mediocre performance (cf.   Remark \ref{remark3} and Section \ref{experiments}). 
In fact,  there is much potential for improvement by {adaptively} adjusting   the three key parameters  $\rho, \eta_0, q$ in \eqref{eq:update_rule}, which has not been systematically explored in the literature.

\subsection{Statistical error analysis: power and limitations }
\label{subsec:IQerranal}
 While optimization error is important for analyzing an algorithm, our main focus is on {\textit{statistical error}}. This subsection  investigates  the prototype algorithm \eqref{eq:update_rule} to motivate  new techniques in later sections.
In order to  obtain sharp nonasymptotic results for this algorithm, it is important to note that  the thresholds vary from iteration to iteration and the final estimator may not be globally optimal.

Recall $\bsby = \bsbX\bsbb^* + \bsbeps$ with $\|\bsbb^*\|_0 \leq s$. Let $$\vartheta:= q/s$$ with $\vartheta > 1$ throughout the paper. A fixed point $\hat\bsbb$ associated with  \eqref{eq:update_rule} that satisfies the following equation is called a $\Theta^\#$-estimator,
\begin{equation} \label{eq:fix_point_property}
\hat\bsbb= \Theta^{\#}\big\{\hat\bsbb -  \frac{1}{\rho}\bsbX^T (\bsbX\hat\bsbb - \bsby ); q, \bar\eta_0\big\}, \ \mbox{with} \ \bar\eta_0 = \eta_0/\rho.
\end{equation}
Theorem \ref{th:statistical_accuracy}   studies the statistical accuracy of these estimators. 

\begin{theorem} \label{th:statistical_accuracy}
Assume that $\bsbeps$ is a sub-Gaussian random vector with mean zero and scale bounded by $\sigma$ (cf. Definition \ref{def:subgauss} in the appendix).
Let $\hat \bsbb$ be any estimator satisfying \eqref{eq:fix_point_property} for some $\eta_0\ge0$ with $\|\hat\bsbb\|_0=q$, and
  $\rho>0$ be chosen such that
\begin{equation} \label{eq:assumption_r0}
        \begin{split}
        \frac{\rho - \{(2-\linrateparam)\sqrt{\vartheta}-1\}\eta_0}{\sqrt{\vartheta}}\| \bsbb\|_2^2 \leq (2-\delta)\|\bsbX\bsbb\|_2^2 \quad
\forall \bsbb: \|\bsbb\|_0 \leq (1+ \vartheta)s
        \end{split}
\end{equation}
for some $\linrateparam, \delta > 0$. Then with probability at least $1- Cp^ {-c},$
{
\begin{equation}
        \|\bsbX(\hat \bsbb - \bsbb^*)\|_2^2 \vee \frac{\eta_0\linrateparam}{\delta}\|\hat\bsbb - \bsbb^*\|_2^2 \lesssim   \frac{1}{\delta^2}\sigma^2\vartheta s\log{\frac{ep}{\vartheta s}}  +\frac{\eta_0}{\delta\linrateparam}\|\bsbb^*\|_2^2 ,  \label{pred_error_bound}
\end{equation}
}where $C, c > 0$ are constants.
\end{theorem}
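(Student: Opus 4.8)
The plan is to first reduce the statement to a deterministic inequality by exploiting the exact global optimality of the quantile-thresholding map, and then to control a single stochastic cross-term uniformly over the data-dependent support. Write $h=\hat\bsbb-\bsbb^*$; since $\|\hat\bsbb\|_0=q$ and $\|\bsbb^*\|_0\le s\le q$, the support of $h$ has size at most $(1+\vartheta)s$, so condition \eqref{eq:assumption_r0} applies to it. Because $\Theta^{\#}(\cdot\,;q,\bar\eta_0)$ returns the \emph{exact} global minimizer of $\tfrac12\|\bsbb-\bsbs\|_2^2+\tfrac{\bar\eta_0}{2}\|\bsbb\|_2^2$ over $\{\|\bsbb\|_0\le q\}$, the fixed-point identity \eqref{eq:fix_point_property} lets me compare this surrogate value at $\hat\bsbb$ with its value at the feasible competitor $\bsbb^*$. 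Substituting $\bsbs=\hat\bsbb-\rho^{-1}\bsbX^T(\bsbX\hat\bsbb-\bsby)$ and $\bsby=\bsbX\bsbb^*+\bsbeps$, cancelling the common $\|\bsbX^T(\bsbX\hat\bsbb-\bsby)\|_2^2$ term and multiplying by $\rho$, yields a master inequality $2\|\bsbX h\|_2^2+\eta_0\|\hat\bsbb\|_2^2\le \rho\|h\|_2^2+2\langle\bsbX h,\bsbeps\rangle+\eta_0\|\bsbb^*\|_2^2$, where I then expand $\eta_0\|\hat\bsbb\|_2^2=\eta_0\|\bsbb^*\|_2^2+\eta_0\|h\|_2^2+2\eta_0\langle\bsbb^*,h\rangle$ so that curvature, shrinkage, the noise cross-term, and the bias $\eta_0\langle\bsbb^*,h\rangle$ all appear explicitly.

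The main obstacle is that this naive comparison is too lossy: with $q>s$ the coefficient $2$ it produces in front of $\|\bsbX h\|_2^2$ cannot dominate $\rho\|h\|_2^2$ under \eqref{eq:assumption_r0}. To close the argument I must use the \emph{selection} property of $\Theta^{\#}$ — that $\hat\bsbb$ retains exactly the $q=\vartheta s$ coordinates of $\bsbs$ largest in magnitude and annihilates the rest. Quantifying how the $q-s$ excess retained coordinates, each no larger in magnitude than any discarded one, trade off against the $s$ genuine signal coordinates introduces a factor $\sqrt{\vartheta}$ weighting the restricted curvature, upgrading the effective inequality to one in which $\rho\|h\|_2^2$ is played against a $(2-\delta)\sqrt{\vartheta}$-multiple of $\|\bsbX h\|_2^2$ and a $\{(2-\linrateparam)\sqrt{\vartheta}-1\}\eta_0\|h\|_2^2$ term. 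This is exactly the scaling encoded in the left-hand side of \eqref{eq:assumption_r0}, and I expect matching the thresholding selection gain to this $\sqrt{\vartheta}$-weighted regularity condition, rather than any single algebraic estimate, to be the hard part.

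Next I would control the only stochastic quantity via $\langle\bsbX h,\bsbeps\rangle\le \|\bsbX h\|_2\,\sup\{\langle\bsbX\bsbb,\bsbeps\rangle/\|\bsbX\bsbb\|_2:\|\bsbb\|_0\le(1+\vartheta)s\}$. Since the support of $h$ is chosen by the data it cannot be fixed in advance, so I bound the supremum uniformly: for each fixed support $I$ of size $k=(1+\vartheta)s$, the ratio is at most the norm of the projection of $\bsbeps$ onto the column space of $\bsbX_I$, a sub-Gaussian quadratic form concentrating at scale $\sigma^2 k$. A union bound over the $\binom{p}{k}\le(ep/k)^{k}$ supports supplies the $\log\{ep/(\vartheta s)\}$ factor and the $1-Cp^{-c}$ confidence, giving $\langle\bsbX h,\bsbeps\rangle^2\lesssim \|\bsbX h\|_2^2\,\sigma^2\vartheta s\log\{ep/(\vartheta s)\}$ with the stated probability.

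Finally I would assemble the pieces: insert \eqref{eq:assumption_r0} to replace $\rho\|h\|_2^2$, so that after cancellation the left-hand side retains curvature with coefficient of order $\delta$ and shrinkage with coefficient of order $\eta_0\linrateparam$. I then split the noise by Young's inequality, $2\langle\bsbX h,\bsbeps\rangle\le \tfrac{\delta}{c}\|\bsbX h\|_2^2+\tfrac{c}{\delta}\sigma^2\vartheta s\log\{ep/(\vartheta s)\}$, and the bias by $|2\eta_0\langle\bsbb^*,h\rangle|\le \tfrac{\eta_0\linrateparam}{c}\|h\|_2^2+\tfrac{\eta_0}{c\linrateparam}\|\bsbb^*\|_2^2$. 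Absorbing the first pieces of each split into the left-hand side and dividing through by the residual $\delta$-order coefficient produces the $\delta^{-2}$ weight on the noise term and the $\eta_0/(\delta\linrateparam)$ weight on the bias term, giving the claimed bound for $\|\bsbX(\hat\bsbb-\bsbb^*)\|_2^2$ and $(\eta_0\linrateparam/\delta)\|\hat\bsbb-\bsbb^*\|_2^2$ simultaneously, as the two surviving left-hand quantities.
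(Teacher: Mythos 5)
Your proposal is correct and follows essentially the same route as the paper's proof: the surrogate comparison $g(\hat\bsbb,\hat\bsbb)\le g(\bsbb^*,\hat\bsbb)$, the quantified selection gain of $\Theta^{\#}$ yielding the $(1-1/\sqrt{\vartheta})(1+\eta_0/\rho)\|\hat\bsbb-\bsbb^*\|_2^2$ margin (the paper's Lemma \ref{le:basic_l0}), a uniform sub-Gaussian bound on $\langle\bsbeps,\bsbX h\rangle$ over sparse supports, and the same Young-inequality splits with $a=2/\delta$, $b=\linrateparam/2$. The only place the paper is slightly more careful is in the stochastic lemma, where a projection decomposition handles the case $J(\hat\bsbb)+J(\bsbb^*)>p$ that a naive union bound over supports of size $(1+\vartheta)s$ would not cover; this is a routine fix and does not affect your argument.
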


From the error bound, \eqref{eq:update_rule}  can achieve the minimax optimal error rate of    $\mathcal O(\sigma^2 s \log(ep/s))$  \citep{raskutti2011minimax}, under the assumption of \eqref{eq:assumption_r0}   and when $\vartheta, \delta, \linrateparam$ are treated as constants. The result does not need  $\eta_0$ to be exactly zero. 
In fact,   a positive $\eta_0$   can actually be beneficial in satisfying the condition of   \eqref{eq:assumption_r0}     (e.g.,  $\rho = (1.9\sqrt{\vartheta} - 1)\eta_0+1.9 \sqrt{\vartheta}  \rho_{-}(q+s)$ and $\linrateparam =\delta=0.1$,  applicable to $q >n$).  
Another interesting observation is that  
$\rho$    should be chosen to be properly small   to achieve good statistical accuracy, which  is in contrasts to the bound  $\rho\ge \rho_+(2q)$ mentioned earlier for numerical convergence. The remarks below   make some further extensions and comparisons.   
\\
\begin{remark}[{Estimation error bounds and faithful variable selection}] \label{remark1}
{        The $\ell_2$-recovery result of Theorem \ref{th:statistical_accuracy} is fundamental, and can be used to derive estimation error bounds in other norms under proper regularity conditions.
        \begin{theorem} \label{support_recovery}
                In the setup of Theorem \ref{th:statistical_accuracy}, suppose  the regularity condition \eqref{eq:assumption_r0} is replaced by
                \begin{equation} \label{eq:assumption_r0_new}
                        \begin{split}
                                \big\{ \frac{\rho - (2\sqrt{\vartheta}-1)\eta_0}{\sqrt{\vartheta}} + \delta\rho_+((1+\vartheta)s) \big\}\|\bsbb\|_2^2 \leq 2\|\bsbX\bsbb\|_2^2, \    \forall \bsbb: \|\bsbb\|_0 \leq (1+ \vartheta)s
                        \end{split}
                \end{equation}
                for some $\delta > 0$. Then
                \begin{equation}
                        \|\hat \bsbb - \bsbb^* \|_2^2 \lesssim  \frac{1}{\delta^2\rho_+((1+\vartheta)s)}\sigma^2 \vartheta s\log{\frac{ep}{\vartheta s}}+\frac{\eta_0^2}{\delta^2\rho_+((1+\vartheta)s)} \|\bsbb^*\|_2^2    \label{estimation_error_bound}
                \end{equation}
                holds with probability at least $1- Cp^ {-c}$, for some $C, c > 0$.
                Moreover, under
                \begin{equation} \label{eq:assumption_infty}
                        \nu\|\bsbb\|_{\infty} \le \| (\bsbX^T\bsbX + \eta_0 \bsbI) \bsbb \|_{\infty} /n, \ \  \bsbb: \|\bsbb\|_0 \leq (1+ \vartheta)s
                \end{equation}
                for some $\nu >0$, any fixed-point $\hat\bsbb$ satisfies
                \begin{equation} \label{element_wise_error}
                 \|\hat\bsbb - \bsbb^*\|_{\infty} \le \frac{(\rho+\eta_0)}{n\nu \sqrt{\vartheta - 1}}\frac{\|\hat\bsbb - \bsbb^*\|_2}{\sqrt{s}} + \frac{\|\bsbX^T\bsbeps\|_{\infty}}{n\nu} + \frac{\eta_0}{n\nu}\|\bsbb^*\|_{\infty,}        \end{equation}
and
\begin{equation}\label{element_wise_error2}
 \|(\hat\bsbb - \bsbb^*)_{\mathcal J^*}\|_{\infty}  +(1- \frac {\rho+\eta_0 }{n\nu})  \|(\hat\bsbb - \bsbb^*)_{\hat {\mathcal J}\setminus\mathcal J^*}\|_{\infty}  \le  \frac{\|\bsbX^T\bsbeps\|_{\infty}}{n\nu} + \frac{\eta_0}{n\nu}\|\bsbb^*\|_{\infty},
\end{equation}
where $\mathcal J^* =\mathcal J(\bsbb^*)$, $\hat {\mathcal J} = \mathcal J(\hat \bsbb)$.

        \end{theorem}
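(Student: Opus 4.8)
The plan is to handle all three displays with a single engine extracted from the fixed-point equation \eqref{eq:fix_point_property}. Writing $u = \hat\bsbb - \rho^{-1}\bsbX^{T}(\bsbX\hat\bsbb - \bsby)$ for the pre-threshold vector and $\hat{\mathcal J} = \mathcal J(\hat\bsbb)$, quantile thresholding with shrinkage $\bar\eta_0 = \eta_0/\rho$ forces $\hat\beta_j = u_j/(1+\bar\eta_0)$ on $\hat{\mathcal J}$ and $\hat\beta_j = 0$ off it. Substituting $\bsby = \bsbX\bsbb^* + \bsbeps$, the first relation collapses to the support-wise normal equation
\[
[(\bsbX^{T}\bsbX + \eta_0\bsbI)(\hat\bsbb - \bsbb^*)]_j = [\bsbX^{T}\bsbeps]_j - \eta_0\beta^*_j, \qquad j \in \hat{\mathcal J},
\]
so every selected coordinate of $(\bsbX^{T}\bsbX+\eta_0\bsbI)(\hat\bsbb-\bsbb^*)$ is pure noise-plus-bias, bounded by $\|\bsbX^{T}\bsbeps\|_\infty + \eta_0\|\bsbb^*\|_\infty$. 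Off $\hat{\mathcal J}$, the thresholding rule gives the selection inequality $|u_j| \le |u_k|$ for every $k \in \hat{\mathcal J}$. I would also record that $\hat\bsbb - \bsbb^*$ is $(1+\vartheta)s$-sparse and $|\hat{\mathcal J}\setminus\mathcal J^*| \ge q - s = (\vartheta-1)s$, which are used repeatedly.

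For the $\ell_2$ bound \eqref{estimation_error_bound}, the plan is to show that \eqref{eq:assumption_r0_new} implies the hypothesis \eqref{eq:assumption_r0} of Theorem \ref{th:statistical_accuracy}. Using $\rho_+((1+\vartheta)s)\|\bsbb\|_2^2 \ge \|\bsbX\bsbb\|_2^2$ on $(1+\vartheta)s$-sparse $\bsbb$, I can trade the additive margin $\delta\rho_+((1+\vartheta)s)$ in \eqref{eq:assumption_r0_new} for a factor $2-\delta'$ on the right together with a slack $\linrateparam\eta_0$ on the left; calibrating $\linrateparam \asymp \delta\rho_+((1+\vartheta)s)/\eta_0$ and $\delta' \asymp \delta$ makes \eqref{eq:assumption_r0} hold. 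Theorem \ref{th:statistical_accuracy} then applies, and with this calibration the estimation branch of the left-hand maximum in \eqref{pred_error_bound} carries coefficient $\eta_0\linrateparam/\delta' \asymp \rho_+((1+\vartheta)s)$ on $\|\hat\bsbb-\bsbb^*\|_2^2$, so reading it off already produces \eqref{estimation_error_bound}. The $\eta_0=0$ case is covered separately by converting the prediction bound through the restricted-eigenvalue content $\|\bsbX\bsbb\|_2^2 \gtrsim \delta\rho_+((1+\vartheta)s)\|\bsbb\|_2^2$ of \eqref{eq:assumption_r0_new}.

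For the elementwise bound \eqref{element_wise_error}, I would apply \eqref{eq:assumption_infty} to $\hat\bsbb - \bsbb^*$, reducing the problem to bounding $\|(\bsbX^{T}\bsbX+\eta_0\bsbI)(\hat\bsbb-\bsbb^*)\|_\infty/n$ coordinatewise. On $\hat{\mathcal J}$ the normal equation yields the clean $\|\bsbX^{T}\bsbeps\|_\infty + \eta_0\|\bsbb^*\|_\infty$. Off $\hat{\mathcal J}$, I use $|u_j| \le |u_k| = (1+\bar\eta_0)|(\hat\bsbb-\bsbb^*)_k|$ for each spurious $k \in \hat{\mathcal J}\setminus\mathcal J^*$; averaging the squares over the $\ge (\vartheta-1)s$ such indices and passing from the root-mean-square to $\|\hat\bsbb-\bsbb^*\|_2$ gives $|u_j| \le (1+\bar\eta_0)\|\hat\bsbb-\bsbb^*\|_2/\sqrt{(\vartheta-1)s}$. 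Since $[\bsbX^{T}(\bsbX\hat\bsbb-\bsby)]_j = -\rho u_j$ off $\hat{\mathcal J}$, multiplying by $\rho$ and using $\rho(1+\bar\eta_0) = \rho+\eta_0$ reproduces exactly the first term of \eqref{element_wise_error}, the remaining two terms being the noise $\|\bsbX^{T}\bsbeps\|_\infty$ and the $\eta_0\beta^*_j$ bias; dividing by $n\nu$ closes it.

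For the refined split \eqref{element_wise_error2}, set $M_* = \|(\hat\bsbb-\bsbb^*)_{\mathcal J^*}\|_\infty$ and $M_0 = \|(\hat\bsbb-\bsbb^*)_{\hat{\mathcal J}\setminus\mathcal J^*}\|_\infty$. The plan is to sharpen the off-support estimate: instead of root-mean-squaring, use $|u_j| \le \min_{k\in\hat{\mathcal J}}|u_k| \le (1+\bar\eta_0)\min_{k\in\hat{\mathcal J}\setminus\mathcal J^*}|(\hat\bsbb-\bsbb^*)_k| \le (1+\bar\eta_0)M_0$, so the entire off-support contribution is carried by $(\rho+\eta_0)M_0$. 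Feeding this into \eqref{eq:assumption_infty} and separating the normal-equation-clean true-support coordinates from the spurious ones, the interference term $(\rho+\eta_0)M_0/(n\nu)$ lands on the same side as $M_0$ and can be moved across, generating the self-referential coefficient $1-(\rho+\eta_0)/(n\nu)$. The hard part will be precisely this last bookkeeping: the crude inequality $\nu\|\hat\bsbb-\bsbb^*\|_\infty \le \|(\bsbX^{T}\bsbX+\eta_0\bsbI)(\hat\bsbb-\bsbb^*)\|_\infty/n$ only controls $\max(M_*,M_0)$ and loses a full $M_0$, so obtaining the additive form $M_* + (1-\tfrac{\rho+\eta_0}{n\nu})M_0$ requires a support-localized use of \eqref{eq:assumption_infty} that keeps $M_*$ at coefficient one while carefully tracking which coordinate attains each extremum.
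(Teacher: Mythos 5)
Your engine---the fixed-point identity $(\bsbX^T\bsbX+\eta_0\bsbI)(\hat\bsbb-\bsbb^*)=\rho v+\bsbX^T\bsbeps-\eta_0\bsbb^*$ with $v$ vanishing on $\hat{\mathcal J}$ and $|\rho v_j|\le(\rho+\eta_0)\min_{k\in\hat{\mathcal J}}|\hat\beta_k|$ off it---is exactly the paper's mechanism for the $\ell_\infty$ bounds, and your derivation of \eqref{element_wise_error} (root-mean-square over the $\ge(\vartheta-1)s$ spurious coordinates, then \eqref{eq:assumption_infty}) coincides with the paper's use of its Lemma~\ref{le:infnormconvt}, inequality \eqref{eq:infnormconvt1}. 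For \eqref{estimation_error_bound} you take a genuinely different route: the paper re-enters the proof of Theorem~\ref{th:statistical_accuracy}, converts the stochastic quadratic term via $\|\bsbX\bsbDelta\|_2^2\le\rho_+((1+\vartheta)s)\|\bsbDelta\|_2^2$ and chooses $b=\delta\rho_+((1+\vartheta)s)/(4\eta_0)$ directly, whereas you show \eqref{eq:assumption_r0_new} implies \eqref{eq:assumption_r0} with $\linrateparam\asymp\delta\rho_+((1+\vartheta)s)/\eta_0$ and read the estimation branch off \eqref{pred_error_bound} as a black box. Your calibration checks out (the coefficient $\eta_0\linrateparam/\delta\asymp\rho_+((1+\vartheta)s)$ and the bias term $\eta_0/(\delta\linrateparam)\asymp\eta_0^2/(\delta^2\rho_+)$ land exactly on \eqref{estimation_error_bound}); the only cost is at $\eta_0=0$, where the estimation branch of \eqref{pred_error_bound} is vacuous and converting the prediction bound through $\|\bsbX\bsbDelta\|_2^2\gtrsim\delta\rho_+\|\bsbDelta\|_2^2$ yields $1/\delta^3$ rather than the stated $1/\delta^2$; the paper's direct route avoids this because the $b$-terms simply drop out when $\eta_0=0$.

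The genuine gap is \eqref{element_wise_error2}, and you have diagnosed it accurately but not closed it. Writing $N=\|\bsbX^T\bsbeps\|_\infty+\eta_0\|\bsbb^*\|_\infty$, $M_*=\|(\hat\bsbb-\bsbb^*)_{\mathcal J^*}\|_\infty$ and $M_0=\|(\hat\bsbb-\bsbb^*)_{\hat{\mathcal J}\setminus\mathcal J^*}\|_\infty$, the global condition \eqref{eq:assumption_infty} delivers only $n\nu\,(M_*\vee M_0)\le N+(\rho+\eta_0)M_0$, from which one gets $(1-\tfrac{\rho+\eta_0}{n\nu})M_0\le N/(n\nu)$ and $M_*\le N/(n\nu)+\tfrac{\rho+\eta_0}{n\nu}M_0$ separately; summing these in any way produces either a factor $2$ on $N$ or the coefficient $1-\tfrac{2(\rho+\eta_0)}{n\nu}$ on $M_0$, and the additive form $M_*+(1-\tfrac{\rho+\eta_0}{n\nu})M_0\le N/(n\nu)$ would force $M_0\le 0$. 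So the ``support-localized use of \eqref{eq:assumption_infty}'' you gesture at is not an optional refinement but the entire content of the claim, and you do not supply it. In fairness, the paper's own proof of \eqref{element_wise_error2} is the single sentence ``combining \eqref{eq:assumption_infty} and \eqref{eq:infnormconvt1} or \eqref{eq:infnormconvt2} gives the desired result,'' which conceals the same difficulty; but as a self-contained argument your proposal establishes \eqref{estimation_error_bound} and \eqref{element_wise_error} while leaving \eqref{element_wise_error2} unproven.
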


        Compared with   \eqref{eq:assumption_r0}, the condition of  \eqref{eq:assumption_r0_new}  replaces   $\delta\|\bsbX\bsbb\|_2^2$ by   $\delta\rho_+((1+\vartheta)s)\|\bsbb\|_2^2$. When $q$ and $s$ are small, $\rho_+((1+\vartheta)s)$ is of the order $\mathcal{O}(n)$.  Therefore,  \eqref{estimation_error_bound} becomes
        $
                \| \hat \bsbb - \bsbb^* \|_2^2  \lesssim  \{\sigma^2 s\log(ep/s)\}/{n},
                $ 
        assuming $\delta, \vartheta$ are constants and $\eta_0$ is properly small.

Moreover,  the element-wise error bound \eqref{element_wise_error} implies   \emph{faithful} variable selection   under regularity condition \eqref{eq:assumption_infty} (which,   like  previous regularity conditions, favors  low coherence, i.e., the off-diagonal entries of $\bsbX^T \bsbX /n$ should be relatively small in magnitude). Specifically, assuming   $\vartheta, \nu, \delta$ are constants, $\|\bsbx_j\|_2 \lesssim \sqrt{n}$, $\rho+\eta_0 \lesssim n$ and the  beta-min condition $\min_{j \in \mathcal J^*}|\beta^*_j| > c \sigma \{\log(ep)/{n}\}^{1/2}$ with a sufficient large constant $c$,  \eqref{element_wise_error} indicates that  the $s$ largest elements in $|\hat\bsbb_j|$  correspond to    $ \mathcal{J}^* = \{ j: \beta^*_j \neq 0\}$ with high probability.}
\end{remark}

\begin{remark}[{Fixed points vs. globally optimal solutions}] \label{remark2} 
The    statistical accuracy results \eqref{pred_error_bound}, \eqref{estimation_error_bound}, and \eqref{element_wise_error}  are proved for  {all}   nonglobal fixed-point estimators defined by  \eqref{eq:fix_point_property}.
Our proof can be slightly modified to show that if a globally optimal solution  can be computed, the statistical error rate remains unchanged but the left-hand side of \eqref{eq:assumption_r0} becomes 0, indicating that the regularity condition always holds for any $\delta \le 2$. However, relying on multiple starting points to obtain a globally optimal solution and thus improve statistical performance  can be inefficient for large datasets.\end{remark}
\begin{remark}[{Comparison with some  theoretical works}]\label{remark3}
The aforementioned  class of IHT  algorithms  may refer to the use of hard-thresholding $\Theta_H(\bsb{s}; \lambda) = [s_i 1_{|s_i|\ge \lambda}]$ with a fixed threshold $\lambda$,  or a varying threshold as the $q/p$-th quantile of $|s_i|$ ($1\le i \le p$) by fixing $q$  \citep{blumensath2008iterative,blumensath2009iterative}.
In comparison, the  $\ell_2$ component in \eqref{eq:update_rule}  {should not} be ignored, and it may result in a different sparsity pattern  in the presence of high coherence and large $p$.
Fairly speaking, the performance of IHT is not on par with   some standard statistical methods and packages (such as the lasso). This is why  we performed  theoretical   analysis in the hopes of  discovering  and developing  new techniques.

In a theoretical study, \cite{liu2020between} obtained a convergence result in terms of function value under $$\vartheta>  \rho_+^2(2q) /\rho_-^2(2q) ,$$  which improves the  condition in \cite{jain2014iterative} $$\vartheta> 32 \rho_+^2(2q) /\rho_-^2(2q). $$  Our condition in Theorem \ref{th:statistical_accuracy}  is even less restrictive.
For example,  a sufficient condition  for \eqref{eq:assumption_r0} is
$$\vartheta >  \{\rho_+ (2q) + \eta_0\}^2/[4\{\rho_- (q+s) + \eta_0\}^2],$$ or $$\vartheta >  \{\rho_+ (2q) + \eta_0\}^2/[4\{\rho_- (2q) + \eta_0\}^2)] $$  since   $ \rho_- (q+s) \ge  \rho_- (2q ) $, which becomes   $\vartheta >   \rho_+^2 (2q)  /\{4 \rho_-^2 (2q)  \} $ in the worst case of   $\eta_0=0$. In conclusion,  $32 \rho_+^2(2q) /\rho_-^2(2q)\ge\rho_+^2(2q) /\rho_-^2(2q)\ge\rho_+^2 (2q)  /\{4 \rho_-^2 (2q)  \} \ge \rho_+^2 (2q) /[4\{\rho_- (q+s) \}^2]\ge \{\rho_+ (2q) + \eta_0\}^2/[4\{\rho_- (q+s) + \eta_0\}^2]  $, and our  obtained error rate of $\sigma^2 s \log (ep/s)$ is minimax optimal.

Interested readers may also refer to  \cite{wang2014optimal,loh2015regularized,She2016,zhao2018pathwise,liu2020between,SheBregman}, for example, for the analyses of various penalties and  mixed thresholding rules, with an error rate of $ \sigma^2 s \log (e p)$.  
Since our purpose is to  design a new {backward} selection  algorithm for  problems with a predetermined number of features, we will not discuss their technical assumptions.  The  experiments in Section \ref{experiments}  make a comprehensive comparison of different methods  in various scenarios.
\end{remark}

\subsection{New means of improvement for large-scale data}
\label{subsec:newmeans}
Providing   provable guarantees for prediction, estimation, and variable selection   is reassuring. But the real challenge lies in finding innovative techniques that can \emph{relax} the required regularity conditions to ensure good statistical accuracy, while being more cost-effective than using multiple random starts. To gain further insights, we can use the restricted isometry numbers (as defined in \eqref{ripconsts}) to provide a sufficient condition for  \eqref{eq:assumption_r0}:
\begin{equation} \label{eq:rip_to_r0_origin}
\rho < 2\sqrt{\vartheta}\rho_-(q+s) + (2\sqrt \vartheta -1)\eta_0 \mbox{\ \ or \ } 4\vartheta >  \frac{(\rho+\eta_0)^2}{(\rho_- (q+s) +  \eta_0)^2}.
\end{equation}
\subsubsection{``Fast'' learning}

One key takeaway from the results presented in Section \ref{subsec:IQerranal} is the importance of the inverse learning rate, $\rho$. In the field of machine learning, it is commonly advised to use a ``slow" learning rate when training a nonconvex model. This  can ensure good computational performance, as evidenced by the lower bound of $\rho$ in Theorem \ref{th:regconvergence}. However, it is  important to note that according to \eqref{eq:rip_to_r0_origin}, using an excessively large value for $\rho$ may compromise the statistical guarantee of the model.

In fact,  \eqref{eq:assumption_r0}  suggests that smaller values of $\rho$ are preferred, and combining statistical and numerical analysis leads to the following range for $\rho$:
\begin{equation} \label{rho_condition}
\rho_+(2q) \le \rho \le 2 \sqrt \vartheta \rho_- (q+s)+ (2\sqrt \vartheta -1)\eta_0.
\end{equation}
In convex programming, the choice of stepsize  does not affect the optimality of the solution as long as the algorithm converges. However, in our case of nonconvex constrained optimization, it is important to choose a large enough value for $1/\rho$ not only to gain fast convergence, but also to ensure statistical accuracy. To the best of our knowledge, this is a novel finding.  Since it may not be easy to determine the theoretical restricted isometry numbers in practice, a routine line search for the step size can be used. Specifically,  according to the proof in Appendix \ref{app:numericalproof}, one can use the majorization condition
$ 
f(\bsbb^{(t+1)}) \leq g(\bsbb^{(t+1)}, \bsbb^{(t)}) \mbox{  or   }
\|\bsbX(\bsbb^{(t+1)} - \bsbb^{(t)})\|_2^2 \le \rho\|\bsbb^{(t+1)}-\bsbb^{(t)}\|_2^2
$ 
to prevent $\rho$ from becoming too large while still preserving the convergence properties stated  in Theorem \ref{th:regconvergence}.
The concept of using an iteration-varying sequence $\rho_t$ will be important in   the next section.

\subsubsection{``Backward'' selection}
Another important discovery is the influence of cardinality control.
 If we use a conservative inverse learning rate of $\rho = \rho_+(2q)$, then \eqref{eq:rip_to_r0_origin} imposes a limit on the restricted condition number of the design matrix:
\begin{equation} \label{theta_condition}
\vartheta >  [\rho_+ (2q) + \eta_0]^2/\{4[\rho_- (q+s) + \eta_0]^2)\}.
\end{equation}
This suggests a promising approach to relax the regularity condition by increasing the value of $\vartheta$. 

Figure \ref{figure:rip_rate}    confirms the point assuming random designs: the larger the value of $q$ is, the more likely it is for \eqref{eq:rip_to_r0_origin} to hold on large-scale data.    Random matrix theory also   supports this idea.

\begin{figure}[h!]
\centering
\includegraphics[width=0.4\textwidth, height=2in]{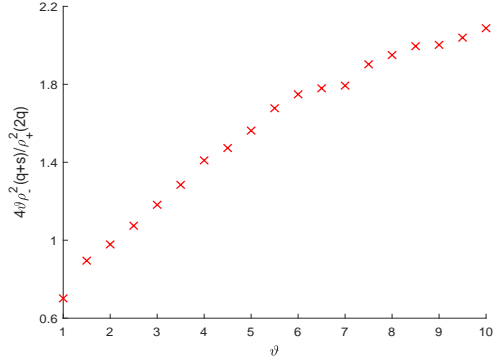}
\caption{ An illustration of how $4\vartheta \rho_-^2(q+s)/\rho_+^2(2q)$ varies as $\vartheta$ increases. Here, the rows of $\bsbX$ are independently drawn from a multivariate Gaussian distribution with zero mean and the covariance  $\bsbSig = [0.5^{|i-j|}]$,  $n = 2\mbox{,}000, p = 4\mbox{,}000, s = 4$. To determine $\rho_{\pm}$ for a given matrix $\bsbX$, we perform a random sampling.   The results are averaged over 100 independent $\bsbX$'s that are generated from the same distribution. } \label{figure:rip_rate}
\end{figure}


\begin{theorem} \label{rmt}
Assume that the rows of the random matrix $\bsbX \in \mathbb R^{n\times p}$ are independent and identically distributed as $N(\bsb{0}, \bsbSig)$, where $\Sigma_{ii}  \le 1$.
Let $\lambda^{(2q)}_{\max}$ be the largest eigenvalue of $\bsbSig_I$ for all $I \subset [p]$ with $|I| \le 2q$, and
$\lambda^{(q+s)}_{\min}$ be the smallest eigenvalue of $\bsbSig_I$ for all $I \subset [p]$ with $|I| \le q+s$.
Then for any $0<c<1$,
{
\begin{align} \label{rate_compare}
        \frac{\rho_+(2q)}{\rho_-(q+s)}  \le
        \left\{  \frac{ (1+c)\sqrt{\lambda^{(2q)}_{\max}} + \sqrt{\{2\lambda^{(2q)}_{\max}q\log(ep/q)\}/n} + \sqrt{2q/n} }{  (1-c)\sqrt{\lambda^{(q+s)}_{\min}} - \sqrt{\{\lambda^{(q+s)}_{\min}(q+s)\log(ep/q)\}/n} - \sqrt{(q+s)/n} }   \right\}^2
\end{align}}\normalsize
with probability at least $1 - 2\exp(-nc^2/2)$, assuming $n \ge \{2(q+s)/(1-c)^2\} \{ 1/\lambda^{(q+s)}_{\min} + \log(ep/q) \}$.
\end{theorem}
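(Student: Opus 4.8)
The plan is to translate the restricted isometry numbers into extreme singular values of the Gaussian submatrices and then run a per-support Gaussian concentration argument closed by a union bound. Writing $\overline\sigma(\cdot)$ and $\underline\sigma(\cdot)$ for the largest and smallest singular values, the definition \eqref{ripconsts} gives $\rho_+(2q)=\max_{|I|\le 2q}\overline\sigma(\bsbX_I)^2$ and $\rho_-(q+s)=\min_{|I|\le q+s}\underline\sigma(\bsbX_I)^2$, and by eigenvalue interlacing the extrema are attained at $|I|=2q$ and $|I|=q+s$, so it suffices to bound $\overline\sigma(\bsbX_I)$ from above uniformly over $|I|=2q$ and $\underline\sigma(\bsbX_I)$ from below uniformly over $|I|=q+s$. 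For each fixed $I$ with $|I|=k$ I would factor $\bsbX_I=G_I\bsbSig_I^{1/2}$, where $G_I\in\mathbb{R}^{n\times k}$ has i.i.d.\ standard Gaussian entries; the whole bound then comes from controlling $G_I$ together with the spectrum of $\bsbSig_I$.

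The lower bound (the denominator) is the cleaner half. Since $\underline\sigma(\bsbX_I)=\min_{\|v\|_2=1}\|G_I\bsbSig_I^{1/2}v\|_2\ge \underline\sigma(G_I)\,\min_{\|v\|_2=1}\|\bsbSig_I^{1/2}v\|_2=\underline\sigma(G_I)\sqrt{\lambda_{\min}(\bsbSig_I)}$, and the standard Gaussian bound gives $\underline\sigma(G_I)\ge \sqrt n-\sqrt{q+s}-t$ with probability at least $1-e^{-t^2/2}$, I get $\underline\sigma(\bsbX_I)\ge\sqrt{\lambda_{\min}(\bsbSig_I)}\,(\sqrt n-t)-\sqrt{\lambda_{\min}(\bsbSig_I)}\sqrt{q+s}$. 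The hypothesis $\Sigma_{ii}\le1$ enters precisely here, through $\lambda_{\min}(\bsbSig_I)\le \tr(\bsbSig_I)/|I|\le1$, which lets me drop the spectral factor on the dimension term and leave the $\lambda$-free $\sqrt{q+s}$; on the other hand $\lambda_{\min}(\bsbSig_I)\ge\lambda^{(q+s)}_{\min}$ together with $\sqrt n-t>0$ keeps the leading term $\sqrt{\lambda^{(q+s)}_{\min}}(\sqrt n-t)$. Choosing $t$ of order $c\sqrt n+\sqrt{(q+s)\log(ep/q)}$ and normalizing by $\sqrt n$ reproduces exactly the denominator of \eqref{rate_compare}.

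The upper bound (the numerator) is the mirror image, but to obtain the $\lambda$-free term $\sqrt{2q}$ I would control the mean by Chevet's inequality rather than by the crude factorization, giving $\EE\,\overline\sigma(\bsbX_I)\le\sqrt{\lambda_{\max}(\bsbSig_I)}\,\sqrt n+\sqrt{\tr(\bsbSig_I)}\le\sqrt{\lambda_{\max}(\bsbSig_I)}\,\sqrt n+\sqrt{2q}$, again using $\Sigma_{ii}\le1$ for $\tr(\bsbSig_I)\le|I|=2q$. Since $\overline\sigma(\bsbX_I)$ is a $\sqrt{\lambda_{\max}(\bsbSig_I)}$-Lipschitz function of $G_I$, Gaussian concentration adds a fluctuation of size $\sqrt{\lambda_{\max}(\bsbSig_I)}\,t$, and $\lambda_{\max}(\bsbSig_I)\le\lambda^{(2q)}_{\max}$ yields the term $\sqrt{\lambda^{(2q)}_{\max}}\{(1+c)\sqrt n+\sqrt{2q\log(ep/q)}\}+\sqrt{2q}$. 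Taking a union bound over the $\binom{p}{k}\le(ep/k)^k$ supports on each side—so that $\log\binom{p}{k}$ is absorbed into $t$ and $\log(ep/k)\le\log(ep/q)$—drives the total failure probability to $2e^{-nc^2/2}$, and the stated sample-size condition $n\ge\{2(q+s)/(1-c)^2\}\{1/\lambda^{(q+s)}_{\min}+\log(ep/q)\}$ is exactly what makes $\sqrt n-t>0$ and keeps the whole denominator positive, so the ratio in \eqref{rate_compare} is well defined. Squaring and dividing the two one-sided bounds completes the argument.

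The main obstacle I anticipate is the bookkeeping that pins the spectral factors to the correct terms while keeping the failure probability at $2e^{-nc^2/2}$: the $\lambda$-free dimension terms must come from $\tr(\bsbSig_I)\le|I|$ (numerator) and $\lambda_{\min}(\bsbSig_I)\le1$ (denominator), whereas the $(1\pm c)$ and logarithmic terms must carry $\sqrt{\lambda^{(2q)}_{\max}}$ and $\sqrt{\lambda^{(q+s)}_{\min}}$ respectively, which forces the asymmetric treatment above (Chevet for the top, the factored bound for the bottom). Getting the constant inside $\sqrt{2q\log(ep/q)}$ sharp also requires care in how $\log\binom{p}{2q}$ is charged to the deviation parameter—most cleanly by applying the Gaussian comparison directly to the set of $2q$-sparse unit vectors instead of a naive union over subsets. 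Once these are arranged, the remaining steps (the standard Gaussian singular-value and Lipschitz-concentration estimates, and the final algebra) are routine.
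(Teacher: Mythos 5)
Your proposal is correct and follows essentially the same route as the paper: a per-support concentration bound for the extreme singular values of $\bsbX_I$ (with the $\Sigma_{ii}\le 1$ hypothesis converting $\tr(\bsbSig_I)$ into the $\lambda$-free dimension terms $\sqrt{2q/n}$ and $\sqrt{(q+s)/n}$), followed by a union bound over supports whose $\log\binom{p}{k}$ cost is absorbed into the deviation parameter via $\log\binom{p}{k}\le k\log(ep/q)$, yielding the $2\exp(-nc^2/2)$ failure probability. The only cosmetic difference is that the paper invokes Theorem 6.1 of Wainwright (2019) as a black box for the single-submatrix bounds, whereas you rederive them from Chevet/Gordon plus Gaussian Lipschitz concentration (numerator) and the factorization $\underline\sigma(G_I\bsbSig_I^{1/2})\ge\underline\sigma(G_I)\sqrt{\lambda_{\min}(\bsbSig_I)}$ (denominator), which lands on the same one-sided estimates.
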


The results can be extended to sub-Gaussian designs (by using, for example, Theorem 6.2 of \cite{wainwright2019high} and Weyl's theorem).
Let us consider the Toeplitz design $\bsbSig = [\tau^{|i-j|}]$ with $0 \le \tau < 1$. By the interlacing theorem,
\begin{equation*}
        \begin{split}
                (1-\tau)/(1+\tau) = \lambda_{\min}(\bsbSig) \le  \lambda^{(q+s)}_{\min}
                \le \lambda^{(2q)}_{\max} \le \lambda_{\max}(\bsbSig) = (1+\tau)/(1-\tau),
        \end{split}
\end{equation*}
and so the right-hand side of \eqref{rate_compare} is bounded by a {constant} with high probability as $n \gg q\log{(ep/q)}$. 
Accordingly, the  regularity condition can be satisfied with a properly large $\vartheta$. 

Of course,  the error bound in \eqref{pred_error_bound} also increases with larger values of $q$. To address this issue, we propose employing a \textit{decreasing} sequence of $q_t$ to progressively tighten the cardinality constraint. Based on previous discussions, it is  thus advisable to use    \textit{increasing} learning rates $1/\rho_t$ (such as $1/\rho_+(2q_t)$) in the iterative process. It may also be beneficial to adjust the shrinkage parameter to a sequence $\eta_t$, particularly when $q_t > n$.
This resulting algorithm, which combines progressive quantiles, $\ell_2$-shrinkage, and learning rates, will be referred to as ``slow kill."  It differs from the pure optimization algorithm \eqref{eq:update_rule} with a fixed $q$ and from various bottom-up boosting and greedy algorithms that are commonly used in the literature.


The purpose of this section is to provide a  compelling rationale for certain aspects of slow kill techniques. We will  present  results in a more general setting, including fast convergence of the iterates and how slow kill improves the quality of the initial estimate as $q_t$ approaches $q$,   further relaxing the regularity conditions.


\section{Adaptive Control of   Quantiles, Learning Rates, and $\ell_2$-shrinkage} \label{piq}
Given a general  loss, based on the  discussions in  the last section, we  pursue sparsity in $\bsbb$ via
\begin{equation}\label{eq:general criterion}
\min_{\bsbb \in\mathbb{R}^{p}} l_{0}(\bsbX\bsbb;\bsby)+\frac{\eta_0}{2}\|\bsbb\|_2^{2} \,\equiv l ( \bsbb )+\frac{\eta_0}{2}\|\bsbb\|_2^{2} \equiv f(\bsbb) \mbox{ s.t. } \lVert\bsbb\lVert_{0}\leq q,
\end{equation}
where for notational ease,  $l_0(\bsbX\bsbb;\bsby)$ is  often abbreviated as $l(\bsbb)$.
Again, the use of hybrid regularization is intended to address collinearity and large $p$. We assume that the regularization parameters $q, \eta_0$ are given in the  algorithm design and theoretical analysis. (Of course, given $q$, one can easily tune the value of $\eta_0$   using methods such as AIC; as for the selection of $q$, an information criterion is provided in the Appendix \ref{appsub:tuning}.) The {generalized Bregman function} for a differentiable $l$ is one of the main tools we use to handle a variety of losses:
\begin{equation} \label{bregman}
\breg_{l}(\bsbb_1, \bsbb_2) := l(\bsbb_1) - l(\bsbb_2) - \langle \nabla l(\bsbb_2), \bsbb_1 - \bsbb_2 \rangle,
\end{equation}
 where the differentiability can be replaced by directional differentiability to analyze a  wide range of algorithms in statistical computation  \cite{SheBregman}.
If $l$ is also strictly convex, $\breg_{l}$ becomes the standard Bregman divergence \citep{Bregman1967,Zhang2010Bregman}.
When  $l( \cdot ) = \| \cdot \|_2^2/2$, $\breg_{l}(\bsbb_1, \bsbb_2) = \|\bsbb_1 - \bsbb_2\|_2^2/2$, which is symmetric, and we abbreviate it to $\Breg_2(\bsbb_1, \bsbb_2)$.  Define the symmetrized version of $\breg_l (\bsbb_1, \bsbb_2)$ by
$
\bar \breg_{l} (\bsbb_1, \bsbb_2) := \{\breg_{l}(\bsbb_1, \bsbb_2) + \back\breg_{l}(\bsbb_1, \bsbb_2) \}/ 2,
$ 
where $\back \breg_{l}(\bsbb_1, \bsbb_2) = \breg_{l}(\bsbb_2, \bsbb_1)$.
As an extension of  \eqref{ripconsts}, we introduce two generalized restricted isometry numbers $\rho_+^{l}(s_1,s_2)$, $\rho_-^{l}(s_1, s_2)$ 
that satisfy
\begin{align}
&\breg_{l} (\bsbb_1, \bsbb_2) \le \rho_+^{l}(s_1,s_2) \Breg_2(\bsbb_1, \bsbb_2), \ \forall \bsbb_i: \| \bsbb_i\|_0\le s_{i}, i = 1,2 \label{gen-rip-upp}
\\ & \breg_{l} (\bsbb_1, \bsbb_2) \ge \rho_-^{l}(s_1,s_2) \Breg_2(\bsbb_1, \bsbb_2), \ \forall \bsbb_i: \| \bsbb_i\|_0\le s_{i}, i = 1,2. \label{gen-rip-low}
\end{align}
We differentiate $s_1, s_2$  because $\breg_l$ may not be symmetric.  These numbers  will be convenient and useful for theoretical purposes; for example,    Theorem \ref{th:general convergence} and Theorem \ref{th:new}    will use  positive     $\rho_+^l(q, q)$ and $\rho_+^{l}(q, s)$, respectively, while   Theorem \ref{th:iter} will  use nonnegative $\rho_-^{l}$.  When $l(\bsbb) = \|\bsbX\bsbb - \bsby\|_2^2/2$, $\breg_{l}(\bsbb_1, \bsbb_2) = \|\bsbX\bsbb_1 - \bsbX\bsbb_2\|_2^2/2$ and $\rho^l_+(s_1, s_2) = \rho_+(s_1 + s_2)$. More generally, if the gradient of $l_0(\cdot;\bsby)$ is $L$-Lipschitz continuous, as is the case in regression or logistic regression,
\begin{equation} \label{eq:Lipschitz}
\Vert\nabla l_0(\bsbxi_{1};\bsby)-\nabla l_0(\bsbxi_{2};\bsby)\Vert_{2}\leq L\|\bsbxi_{1}-\bsbxi_{2}\|_{2},
\end{equation}
for all $\bsbxi_1, \bsbxi_2\in \mathbb R^n$,
then it is easy to show that
\begin{equation} \label{rho_condition_general}
\rho_{+}^{ l }(s_1,s_2) \le L\rho_+(s_1+s_2) \ (\le L \|\bsbX\|_2^2).
\end{equation}

\subsection{Numerical convergence and statistical accuracy for  the general optimization algorithm}
First, we extend the previous iterative quantile-thresholding algorithm to handle losses that may not be quadratic.  Construct the following surrogate function
\begin{equation} \label{g-opt-general}
g(\bsbb,  \bsbb^{-})  = l_0(\bsbX \bsbb; \bsby) + \frac{\eta_0}{2}\|\bsbb\|_2^2 + (\rho\Breg_2 - \breg_{l})(\bsbb, \bsbb^{-}),
\end{equation}
which is by linearizing the loss (only).
Then, similar to the derivation  in Section \ref{iq},    \eqref{g-opt-general} leads to an algorithm
\begin{equation} \label{eq:general_update_0}
\bsbb^{(t+1)} = \Theta^{\#}\Big\{\bsbb^{(t)} - \frac{1}{\rho}\bsbX^T\nabla l_0(\bsbX\bsbb^{(t)}; \bsby); q, \frac{\eta_0}{\rho}\Big\}.
\end{equation}
Some  basic numerical properties are summarized as follows.
\begin{theorem} \label{th:general convergence}
Assume that $\inf_{\bsbxi, \bsby} l_0(\bsbxi;\bsby) > - \infty$.
Consider   \eqref{eq:general_update_0} starting from an arbitrary feasible $\bsbb^{(0)}$.
Then $\rho \geq \rho_{+}^{ l }(q, q)$ guarantees that for all $t \geq 0$, $f(\bsbb^{(t+1)}) \leq g(\bsbb^{(t+1)}, \bsbb^{(t)})$ and
$ 
(\rho-\rho_{+}^{ l }(q, q))\Breg_2(\bsbb^{(t+1)}, \bsbb^{(t)})\le f(\bsbb^{(t)}) - f(\bsbb^{(t+1)}),
$ 
and so the objective function values converge as $t \to \infty$.
Assume $\rho >\rho_{+}^{ l }(q,q) $, $\eta_0 > 0$ and $\nabla l_0$ is continuous. Then every accumulation point $\hat \bsbb $ of $ \bsbb^{(t)} $ satisfies the fixed-point equation
\begin{equation} \label{eq:fix_point_property_general}
        \hat\bsbb= \Theta^{\#} \{\hat\bsbb - \bsbX^T\nabla l_0(\bsbX\hat\bsbb;\bsby)/\rho; q, \eta_0/\rho \}.
\end{equation}
Furthermore, if $l_0(\cdot;\bsby)$ is convex, $\lim_{t \to \infty} \bsbb^{(t)} = \hat\bsbb$, and under $\|\hat\bsbb\|_0=q$, $\hat\bsbb$ is a local minimizer to problem \eqref{eq:general criterion} and the support of $\bsbb^{(t)}$ stabilizes in \emph{finitely} many iterations.
\end{theorem}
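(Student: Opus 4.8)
The plan is to read \eqref{eq:general_update_0} as a majorization--minimization (MM) scheme, exploiting that after cancelling the loss term the surrogate becomes a strongly convex quadratic subject only to the $\ell_0$ constraint. First I would record the identity $g(\bsbb, \bsbb^{-}) - f(\bsbb) = \rho\Breg_2(\bsbb, \bsbb^{-}) - \breg_l(\bsbb, \bsbb^{-})$, obtained by cancelling $l(\bsbb)$ in \eqref{g-opt-general}. Feeding in the upper bound \eqref{gen-rip-upp} with $s_1 = s_2 = q$ (both $\bsbb^{(t+1)}$ and $\bsbb^{(t)}$ are $q$-sparse) gives $g(\bsbb^{(t+1)}, \bsbb^{(t)}) - f(\bsbb^{(t+1)}) \ge (\rho - \rho_+^l(q,q))\Breg_2(\bsbb^{(t+1)}, \bsbb^{(t)}) \ge 0$ whenever $\rho \ge \rho_+^l(q,q)$, which is the majorization claim. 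Combining this with $g(\bsbb^{(t+1)}, \bsbb^{(t)}) \le g(\bsbb^{(t)}, \bsbb^{(t)}) = f(\bsbb^{(t)})$ (optimality of the update, and $\breg_l(\bsbb^{(t)}, \bsbb^{(t)}) = \Breg_2(\bsbb^{(t)}, \bsbb^{(t)}) = 0$) yields the stated descent inequality; monotonicity plus lower-boundedness from $\inf l_0 > -\infty$ and $\eta_0 \ge 0$ then forces $f(\bsbb^{(t)})$ to converge.

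For the accumulation-point statement I would telescope the descent inequality to get $\sum_t \Breg_2(\bsbb^{(t+1)}, \bsbb^{(t)}) < \infty$, so $\|\bsbb^{(t+1)} - \bsbb^{(t)}\|_2 \to 0$; and since $\eta_0 > 0$, the sublevel set $\{\tfrac{\eta_0}{2}\|\bsbb\|_2^2 \le f(\bsbb^{(0)}) - \inf l_0\}$ bounds the iterates, so accumulation points exist. The essential device is the variational description $\Theta^{\#}(\bsbs; q, \eta) = \argmin_{\|\bsbb\|_0 \le q}\{\tfrac{1}{2}\|\bsbb - \bsbs\|_2^2 + \tfrac{\eta}{2}\|\bsbb\|_2^2\}$, which lets me bypass the fact that $\Theta^{\#}$ is itself discontinuous at ties. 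Along a convergent subsequence $\bsbb^{(t_k)} \to \hat\bsbb$, continuity of $\nabla l_0$ makes the thresholded argument $u^{(t_k)} := \bsbb^{(t_k)} - \bsbX^T\nabla l_0(\bsbX\bsbb^{(t_k)}; \bsby)/\rho$ converge to $\hat u$, and $\bsbb^{(t_k+1)} \to \hat\bsbb$ as well since consecutive differences vanish. Passing to the limit in the optimality inequality $\tfrac{1}{2}\|\bsbb^{(t_k+1)} - u^{(t_k)}\|_2^2 + \tfrac{\eta_0}{2\rho}\|\bsbb^{(t_k+1)}\|_2^2 \le \tfrac{1}{2}\|\bsbb - u^{(t_k)}\|_2^2 + \tfrac{\eta_0}{2\rho}\|\bsbb\|_2^2$ (valid for every $q$-sparse $\bsbb$), together with lower semicontinuity of $\|\cdot\|_0$, shows $\hat\bsbb$ minimizes the same quadratic and hence, by the $\Theta^{\#}$-uniqueness assumption, equals $\Theta^{\#}(\hat u; q, \eta_0/\rho)$, which is the fixed-point equation \eqref{eq:fix_point_property_general}.

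For the convex case I would add that $f$ is then $\eta_0$-strongly convex, so its minimizer over any fixed support is unique. Assuming $\|\hat\bsbb\|_0 = q$, the $\Theta^{\#}$-uniqueness assumption forces a strict gap $|\hat u_{(q)}| > |\hat u_{(q+1)}|$ at the accumulation point; by continuity of $\bsbb \mapsto \bsbb - \bsbX^T\nabla l_0(\bsbX\bsbb; \bsby)/\rho$ there is a neighborhood of $\hat\bsbb$ on which the top-$q$ index set of the thresholded argument is the constant $S = \mathcal J(\hat\bsbb)$. Once an iterate enters this neighborhood the update is supported exactly on $S$ and reduces to the restricted MM step $\bsbb^{(t+1)}_S = u^{(t)}_S/(1 + \eta_0/\rho)$ for the strongly convex problem $\min_{\mathcal J(\bsbb) \subseteq S} f(\bsbb)$, whose unique minimizer is the only fixed point and toward which these restricted iterates converge; this both stabilizes the support after finitely many steps and forces $\lim_t \bsbb^{(t)} = \hat\bsbb$. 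Finally, because $\hat\bsbb$ has exactly $q$ nonzeros, every feasible point in a small enough ball around it is also supported on $S$, so minimality of $\hat\bsbb$ over $\mathbb{R}^S$ upgrades to local minimality for \eqref{eq:general criterion}.

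The hard part will be the discontinuity of $\Theta^{\#}$: neither the fixed-point identity nor the support stabilization can be read off by naive continuity, and the whole argument hinges on replacing the operator by its $\argmin$ characterization and on extracting the strict top-$q$ gap from the $\Theta^{\#}$-uniqueness assumption. I expect the convex-case support-freezing step to be the most delicate, since one must rule out the support oscillating indefinitely while the iterates themselves move by vanishing amounts, which is precisely what the strict-gap plus neighborhood argument is designed to preclude.
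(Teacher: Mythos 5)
Your descent and majorization steps coincide with the paper's: both use $g(\bsbb,\bsbb^-) - f(\bsbb) = (\rho\Breg_2 - \breg_l)(\bsbb,\bsbb^-)$, the bound \eqref{gen-rip-upp} with $s_1=s_2=q$, the optimality of the update against $g(\bsbb^{(t)},\bsbb^{(t)})=f(\bsbb^{(t)})$, and boundedness of the iterates from $\eta_0\|\bsbb^{(t)}\|_2^2/2 \le f(\bsbb^{(0)})$. For the fixed-point equation you take a genuinely different (and arguably more careful) route: the paper simply asserts that $\lim_k \Theta^{\#}\{\bsbb^{(j_k)} - \nabla l(\bsbb^{(j_k)})/\rho;q,\eta_0/\rho\} = \Theta^{\#}\{\hat\bsbb - \nabla l(\hat\bsbb)/\rho;q,\eta_0/\rho\}$ ``by continuity of $\nabla l$ and the $\Theta^{\#}$-uniqueness assumption,'' whereas you replace $\Theta^{\#}$ by its $\argmin$ characterization (which is exactly Lemma \ref{le:basic_l0} with $m=1$) and pass to the limit in the optimality inequality; this buys you an argument that does not lean on continuity of a discontinuous operator, at the cost of a few extra lines.

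The real divergence, and the one place you should be careful, is the convex case. The paper proves $\lim_{t\to\infty}\bsbb^{(t)}=\hat\bsbb$ by a global topological argument: Ostrowski's theorem makes the set of accumulation points connected, each accumulation point is the unique minimizer of the strongly convex problem \eqref{stationary_point} restricted to its own support, the collection of such restricted minimizers over all supports is finite, and a finite connected set is a singleton. This needs no assumption on $\|\hat\bsbb\|_0$. Your strict-gap/local-contraction argument instead requires $\|\hat\bsbb\|_0=q$ to extract the gap $|\hat u_{(q)}|>|\hat u_{(q+1)}|$ from the $\Theta^{\#}$-uniqueness assumption, so as written it does not deliver the theorem's unconditional claim that the whole sequence converges whenever $l_0$ is convex (if $\|\hat\bsbb\|_0<q$ the uniqueness assumption gives $\hat u_{(q)}=\hat u_{(q+1)}=0$ and there is no gap to exploit). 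Separately, your phrase ``toward which these restricted iterates converge'' hides the point that the iterates must not leave the neighborhood where the top-$q$ selection is frozen; this can be repaired by noting that on the fixed support $S$ the update is $\bsbb_S\mapsto\{\bsbb_S-\nabla_S l(\bsbb)/\rho\}/(1+\eta_0/\rho)$, which under convexity and $\rho\ge\rho_+^l(q,q)$ is a contraction with factor $1/(1+\eta_0/\rho)<1$ about its fixed point $\hat\bsbb_S$, so a small ball around $\hat\bsbb$ is invariant. With that repair your argument correctly yields support stabilization and local minimality under $\|\hat\bsbb\|_0=q$, which is where the paper also imposes that hypothesis; only the unconditional whole-sequence convergence statement remains uncovered by your approach and would need the paper's Ostrowski-plus-finiteness device (or an equivalent).
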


Next,   we turn to the  statistical accuracy of the  estimators that are defined by \eqref{eq:fix_point_property_general}. 
To overcome the  obstacle that  the loss is not necessarily associated with a probability density function,      we define the concept of \emph{effective noise} with respect to the statistical truth $\bsbb^*$  as
\begin{equation} \label{effective-noise}
\bsbeps = - \nabla l_0(\bsbX \bsbb^*; \bsby),
\end{equation}
where we treat $\bsbX$ as   fixed  and $\bsby$ as random in this section.
The definition of effective noise in  \eqref{effective-noise} does not depend on the regularizer.
In the special case of a generalized linear model with cumulant function $b$ and canonical link function $g = (b')^{-1}$, 
the loss is $l(\bsbb) = l_0(\bsbX\bsbb;\bsby) = - \langle\bsby, \bsbX\bsbb\rangle + \langle 1 ,b(\bsbX\bsbb)\rangle$, and so
$\bsbeps = \bsby - g^{-1}(\bsbX\bsbb^*) = \bsby - \EE\bsby$.
For regression, the effective noise term $\bsbeps$ is equivalent to the raw noise, which is usually assumed to be Gaussian. In the case of classification using the logistic deviance, $\bsbeps$ is bounded, making it sub-Gaussian. In fact, any loss function with a bounded derivative, such as Huber's loss, Hampel's loss, or the hinge loss, will always result in a sub-Gaussian $\bsbeps$, regardless of the distribution of $\bsby$.
In this section, we assume that the effective noise is a sub-Gaussian random vector with mean zero and scale bounded by $\sigma$. However, our proof techniques can be applied more generally. The following theorem provides a risk bound for the estimators obtained by    \eqref{eq:general_update_0}, and also demonstrates the impact of the quality of the starting point on the regularity condition.

\begin{theorem} \label{th:new}
Let $\hat \bsbb: \|\hat \bsbb\|_0 = q$ be an estimate obtained from \eqref{eq:general_update_0} with a feasible starting point $\bsbb^{(0)}$, namely, $\hat \bsbb \in \min_{\|\bsbb\|_0 \le q} g(\bsbb, \hat\bsbb)$ and $f(\hat\bsbb) \le f(\bsbb^{(0)})$ with $\|\bsbb^{(0)}\|_0 \le q$.
Define
{        \begin{equation}
                {                P_o(q) = q\log(ep/q).}
\end{equation}}Suppose  that   $\bsbb^{(0)}$ satisfies
{  \begin{equation}
                \begin{split}
                        \EE \Breg_2 ( \bsbb^{(0)},  \bsbb^*) = \mathcal{O}(M) \frac{\sigma^2 P_o(q) + \sigma^2}{n}
                        \mbox{  for some} \ \, M: 1 \le M \le +\infty.
                \end{split}
        \end{equation}}
Let $Q = \{\rho_+(q+s)M/n\}^{1/2} + \{\rho_+^l(q, s) + \eta_0\}M/n$. Assume for some $\delta>0, 0<\linrateparam\le 1$ and large $K\ge 0$,
{
\begin{equation} \label{regu_feasible_point}
         \begin{split}
                &K\sigma^2P_o(\vartheta s) + \Big\{ 2(1- \frac{1}{M})  \bar\breg_{l_0} + \frac{C}{M(Q\delta \vee 1)} \breg_{l_0} - \delta \Breg_2 \Big\} (\bsbX\bsbb, \bsbX\bsbb') \\
                &\, \ge
                \frac{1 - 1/M }{\sqrt{\vartheta}}\big[\rho - \{(2-\linrateparam)\sqrt{\vartheta} - 1\}\eta_0 \big]\Breg_2(\bsbb, \bsbb'),  \forall  \bsbb, \bsbb': \|\bsbb\|_0 \le \vartheta s, \|\bsbb'\|_0 \le s,
        \end{split}
\end{equation}}
where $C$ is some positive constant. Then
{\begin{equation} \label{feasible_point_bound}
                \begin{split}
                        \EE \big\{\Breg_2 (\bsbX\hat\bsbb, \bsbX\bsbb^*) \vee \frac{\eta_0\linrateparam}{\delta}\Breg_2(\hat\bsbb, \bsbb^*) \big\}
                        \lesssim
                        \frac{K\delta \vee 1}{\delta^2} \Big\{\sigma^2 \vartheta s \log\big(\frac{ep}{\vartheta s}\big) +\sigma^2\Big\} + \frac{\eta_0}{\delta\linrateparam}\|{\bsbb}^{*}\|_2^2.
                \end{split}
\end{equation}
}
\end{theorem}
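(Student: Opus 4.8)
The plan is to follow the skeleton of the proof of Theorem~\ref{th:statistical_accuracy}, but to replace the squared prediction error $\tfrac12\|\bsbX\bsbb\|_2^2$ everywhere by the symmetrized Bregman curvature $\bar\breg_{l_0}$ of \eqref{bregman}, and, crucially, to keep the warm start $\bsbb^{(0)}$ in play through the monotonicity hypothesis $f(\hat\bsbb)\le f(\bsbb^{(0)})$. Writing $\bsbDelta=\hat\bsbb-\bsbb^*$ and recalling the effective noise $\bsbeps=-\nabla l_0(\bsbX\bsbb^*;\bsby)$ of \eqref{effective-noise}, I would extract two inequalities and then combine them against the regularity condition \eqref{regu_feasible_point}.

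First, a \emph{comparison inequality} from the fixed-point property. Since $\|\bsbb^*\|_0\le s\le q$, the truth $\bsbb^*$ is feasible in the surrogate minimization \eqref{eq:fix_point_property_general}, so $g(\hat\bsbb,\hat\bsbb)\le g(\bsbb^*,\hat\bsbb)$. Expanding $g$ with $\Breg_2(\bsbb,\bsbb^{-})=\tfrac12\|\bsbb-\bsbb^{-}\|_2^2$ and $\nabla l(\hat\bsbb)=\bsbX^T\nabla l_0(\bsbX\hat\bsbb;\bsby)$, and invoking the identity $\langle\bsbX\bsbDelta,\nabla l_0(\bsbX\hat\bsbb)-\nabla l_0(\bsbX\bsbb^*)\rangle=2\bar\breg_{l_0}(\bsbX\hat\bsbb,\bsbX\bsbb^*)$ together with $\nabla l_0(\bsbX\bsbb^*)=-\bsbeps$, I obtain a master inequality of the form
\begin{equation*}
2\bar\breg_{l_0}(\bsbX\hat\bsbb,\bsbX\bsbb^*)+\frac{\eta_0-\rho}{2}\|\bsbDelta\|_2^2+\eta_0\langle\bsbDelta,\bsbb^*\rangle\le\langle\bsbX\bsbDelta,\bsbeps\rangle,
\end{equation*}
in which $\bar\breg_{l_0}$ plays exactly the role $\tfrac12\|\bsbX\bsbDelta\|_2^2$ did in \eqref{eq:assumption_r0}. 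Fed into \eqref{regu_feasible_point} in the regime $M=+\infty$ (no warm start, where the $\breg_{l_0}$-slack drops out), this single inequality already recovers the analysis behind Theorem~\ref{th:statistical_accuracy}. Next I would turn $f(\hat\bsbb)\le f(\bsbb^{(0)})$ into a form carrying the starting point: since $f(\bsbb)-f(\bsbb^*)=\breg_{l_0}(\bsbX\bsbb,\bsbX\bsbb^*)-\langle\bsbX(\bsbb-\bsbb^*),\bsbeps\rangle+\tfrac{\eta_0}{2}(\|\bsbb\|_2^2-\|\bsbb^*\|_2^2)$, subtracting the instances at $\hat\bsbb$ and $\bsbb^{(0)}$ gives
\begin{equation*}
\breg_{l_0}(\bsbX\hat\bsbb,\bsbX\bsbb^*)-\langle\bsbX\bsbDelta,\bsbeps\rangle+\tfrac{\eta_0}{2}\|\hat\bsbb\|_2^2\le\breg_{l_0}(\bsbX\bsbb^{(0)},\bsbX\bsbb^*)-\langle\bsbX(\bsbb^{(0)}-\bsbb^*),\bsbeps\rangle+\tfrac{\eta_0}{2}\|\bsbb^{(0)}\|_2^2.
\end{equation*}
On the right I would bound $\breg_{l_0}(\bsbX\bsbb^{(0)},\bsbX\bsbb^*)$ by \eqref{gen-rip-upp} and the $\eta_0$-quadratic directly, giving a $\{\rho_+^l(q,s)+\eta_0\}\Breg_2(\bsbb^{(0)},\bsbb^*)$ scaling, apply Cauchy--Schwarz with $\|\bsbX(\bsbb^{(0)}-\bsbb^*)\|_2\le\{2\rho_+(q+s)\Breg_2(\bsbb^{(0)},\bsbb^*)\}^{1/2}$, and insert the hypothesis $\EE\Breg_2(\bsbb^{(0)},\bsbb^*)=\mathcal O(M)\{\sigma^2 P_o(q)+\sigma^2\}/n$; this is what manufactures the quantity $Q$ and the $1/M$ factor. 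Forming the weighted sum of the two inequalities with weights of order $(1-1/M)$ and $\tfrac{C}{M(Q\delta\vee1)}$ aligns the left-hand curvature with $2(1-1/M)\bar\breg_{l_0}+\tfrac{C}{M(Q\delta\vee1)}\breg_{l_0}$ of \eqref{regu_feasible_point}, pushing the warm-start terms to the right.

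The only stochastic quantities are $\langle\bsbX\bsbDelta,\bsbeps\rangle$ and $\langle\bsbX(\bsbb^{(0)}-\bsbb^*),\bsbeps\rangle$, both involving vectors of sparsity at most $q+s=(1+\vartheta)s$. Because the support of $\hat\bsbb$ is data-dependent, I would control $\sup\{\langle\bsbX\bsbDelta,\bsbeps\rangle:\|\bsbDelta\|_0\le(1+\vartheta)s\}$ by a sub-Gaussian maximal inequality over the $\binom{p}{q+s}$ candidate supports; since $\log\binom{p}{\vartheta s}\asymp\vartheta s\log(ep/(\vartheta s))=P_o(\vartheta s)$, this yields the $\sigma^2\vartheta s\log(ep/(\vartheta s))$ rate, and \emph{integrating} the sub-Gaussian tail---rather than freezing a high-probability event as in Theorem~\ref{th:statistical_accuracy}---produces the additive $\sigma^2$ and the expectation form of \eqref{feasible_point_bound}. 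The slack $K\sigma^2 P_o(\vartheta s)$ that \eqref{regu_feasible_point} is allowed to carry is absorbed against this same term, which is precisely what leaves the prefactor $K\delta\vee1$ in the conclusion.

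Finally, feeding the combined inequality into \eqref{regu_feasible_point}, its left side dominates the dangerous $\tfrac{\rho-\eta_0}{2}\|\bsbDelta\|_2^2=(\rho-\eta_0)\Breg_2(\hat\bsbb,\bsbb^*)$ from the first inequality, so after rearranging the curvature $\Breg_2(\bsbX\hat\bsbb,\bsbX\bsbb^*)$ and a multiple of $\tfrac{\eta_0\linrateparam}{\delta}\Breg_2(\hat\bsbb,\bsbb^*)$ survive on the left; a Young-type split of the bias $\eta_0\langle\bsbDelta,\bsbb^*\rangle$ against the $\eta_0\linrateparam$-term yields the $\tfrac{\eta_0}{\delta\linrateparam}\|\bsbb^*\|_2^2$ contribution, and dividing through by $\delta$ gives \eqref{feasible_point_bound}. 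I expect the main obstacle to be the simultaneous bookkeeping in the warm-start and combination steps: choosing the combination weight so that the factors $Q$ and $1/M$ land \emph{exactly} on the coefficients prescribed by \eqref{regu_feasible_point}, while keeping the constants in the sub-Gaussian maximal bound sharp enough to preserve the minimax rate. The asymmetry of $\breg_{l_0}$ (forcing $\breg_{l_0}$ and $\bar\breg_{l_0}$ to be tracked separately) and the passage to an expectation bound add exactly the kind of constant-chasing that is easy to mishandle.
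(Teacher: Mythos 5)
Your overall architecture matches the paper's: a comparison inequality from the surrogate optimality of $\hat\bsbb$, a second inequality from the descent property $f(\hat\bsbb)\le f(\bsbb^{(0)})$ that imports the warm start (and manufactures $Q$ via a Cauchy--Schwarz/Young split of $\langle\bsbeps,\bsbX(\bsbb^{(0)}-\bsbb^*)\rangle$ and the restricted isometry bounds on $\breg_{l_0}$ and $\Breg_2$ at $\bsbb^{(0)}$), a weighted combination with weights $(1-1/M)$ and $\mathcal O(1/\{M(Q\delta\vee1)\})$, and a sub-Gaussian maximal inequality over sparse supports whose tail is integrated to give the expectation bound with the additive $\sigma^2$. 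All of that is sound and is essentially what the paper does.

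However, there is one genuine gap, and it sits in your very first display. From $g(\hat\bsbb,\hat\bsbb)\le g(\bsbb^*,\hat\bsbb)$ alone you correctly derive
\begin{equation*}
2\bar\breg_{l_0}(\bsbX\hat\bsbb,\bsbX\bsbb^*)+\frac{\eta_0-\rho}{2}\|\bsbDelta\|_2^2+\eta_0\langle\bsbDelta,\bsbb^*\rangle\le\langle\bsbX\bsbDelta,\bsbeps\rangle,
\end{equation*}
so the quantity you must later dominate is $(\rho-\eta_0)\Breg_2(\hat\bsbb,\bsbb^*)$ (plus the $\linrateparam\eta_0$ correction from the bias split). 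But the regularity condition \eqref{regu_feasible_point} only budgets for $\frac{1-1/M}{\sqrt\vartheta}\bigl[\rho-\{(2-\linrateparam)\sqrt\vartheta-1\}\eta_0\bigr]\Breg_2(\hat\bsbb,\bsbb^*)$, which is smaller by essentially a factor $\sqrt\vartheta$; with your inequality the argument cannot be closed under the stated hypothesis. The missing ingredient is Lemma \ref{le:basic_l0}: because $\hat\bsbb$ is the \emph{quantile-thresholded} minimizer with $\|\hat\bsbb\|_0=q=\vartheta s$ while $\|\bsbb^*\|_0\le s$, one gets the strictly stronger statement $g(\bsbb^*,\hat\bsbb)-g(\hat\bsbb,\hat\bsbb)\ge\rho(1-1/\sqrt\vartheta)(1+\eta_0/\rho)\Breg_2(\bsbb^*,\hat\bsbb)$, not merely $\ge0$. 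Carrying this gain through the same algebra replaces your coefficient $\rho-\eta_0$ by $\{\rho-(\sqrt\vartheta-1)\eta_0\}/\sqrt\vartheta$, which after the bias split becomes exactly the $\{\rho-\{(2-\linrateparam)\sqrt\vartheta-1\}\eta_0\}/\sqrt\vartheta$ appearing in \eqref{regu_feasible_point}. This is not constant-chasing: the $1/\sqrt\vartheta$ gain is the whole mechanism by which overshooting the cardinality ($\vartheta>1$) relaxes the regularity condition, and it is the reason the theorem is stated the way it is. Once you restore that gain in the master inequality, the rest of your plan goes through as in the paper.
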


Therefore, we can achieve the desired level of statistical accuracy as long as $K, \delta, \vartheta$ are constants and $\eta_0$ is not excessively large. When $M=+\infty$ (no requirement on $\bsbb^{(0)}$), the regularity condition \eqref{regu_feasible_point} becomes
{\begin{equation*} 
        \begin{split}
                \frac{\rho - \{(2-\linrateparam)\sqrt{\vartheta} - 1\}\eta_0}{\sqrt{\vartheta}} \Breg_2(\bsbb, \bsbb')
                \le
                \big( 2  \bar\breg_{l_0} - \delta \Breg_2 \big) (\bsbX\bsbb, \bsbX\bsbb') + K\sigma^2P_o(\vartheta s), \forall  \bsbb, \bsbb': \|\bsbb\|_0 \le \vartheta s, \|\bsbb'\|_0 \le s,
        \end{split}
\end{equation*}}which includes  \eqref{eq:assumption_r0} as a special case. But when one uses a {decent} starting point, \eqref{regu_feasible_point} is much more relaxed. In the  extreme case where $M=1$,    the right-hand side of \eqref{regu_feasible_point}  becomes $0$, and  so with $\mu$-restricted strong convexity  $(\bar\breg_{l_0} - \mu \Breg_2)   (\bsbX\bsbb, \bsbX\bsbb')\ge 0$ for $\|\bsbb\|_0 \le \vartheta s, \|\bsbb'\|_0 \le s$,    \eqref{regu_feasible_point} is  always satisfied.
\subsection{Slow kill: algorithm design  \& sequential analysis}
Using a multi-start strategy to select a high-quality initial value for $\bsbb^{(0)}$ may be computationally infeasible for large-scale data. Fortunately,  we will see that {designing iteration-varying} thresholding and shrinkage can effectively relax the statistical regularity conditions and   improve the statistical accuracy of the   sequence of iterates.

More concretely, slow kill modifies the  optimization algorithm \eqref{eq:general_update_0} by introducing {three} auxiliary sequences $\rho_{t+1}, q_{t+1}, \eta_{t+1}$
\begin{align} \label{eq:general_update}
\bsbb^{(t+1)} = \Theta^{\#}\Big\{\bsbb^{(t)} - \rho_{t+1}^{-1} \bsbX^T\nabla l_0(\bsbX\bsbb^{(t)}; \bsby);
q_{t+1}, \bar\eta_{t+1}\Big\},   \mbox{ with } \bar\eta_{t+1} = \eta_{t+1}/\rho_{t+1}
\end{align}
where $q_t \to q$, $\eta_t \to \eta_0$. The scaled shrinkage sequence $\bar\eta_t$  will be more convenient to use  than the raw sequence  $\eta_t$  in  later analysis.
We want to understand whether adapting the inverse learning rate, cardinality, and $\ell_2$-shrinkage parameters during the iteration can lead to improved performance.
Specifically, we aim to investigate how the statistical accuracy of $\bsbb^{(t)}$ changes as $t$ increases, and under what conditions the statistical error converges geometrically fast.
The focus of Theorem \ref{th:iter} is on the statistical error of $\bsbb^{(t)}$ with respect to the statistical truth $\bsbb^*$, rather than on their optimization errors relative to a specific minimizer $\bsbb^o$.   We will see that in principle, slow kill benefits from  decreasing $q_t$ and $\rho_t$. 
It is also worth noting that the error bound in  \eqref{eq:seq_result}  places no requirements on $\vartheta_t, \rho_t, \eta_t$.  \begin{theorem} \label{th:iter}
Let the sequence of iterates $\bsbb^{(t)}: \|\bsbb^{(t)}\|_0 = q_t$ be generated from \eqref{eq:general_update} with a feasible $\bsbb^{(0)}$.
Given any $t \ge 1$, define
{ 
        \begin{align} \label{kappa_h}
                        h_{t}^{-1} &= (1 - 1/\sqrt{\vartheta_{t}})(\rho_{t}+\eta_{t}) + (1-\linrateparam)(\rho_-^{l}(q_{t}, s) + \eta_{t}), \\
                        \kappa_{t} &= (\rho_{t} - \rho^l_-(s, q_{t}) )h_{t},
        \end{align}
}
where  $\linrateparam$ is an arbitrary number in $  (0,1]$. Then the following recursive statistical error bound
\begin{align} \label{eq:seq_result}
                &\qquad \Breg_2(\bsbb^*, \bsbb^{(T+1)}) + \sum_{t=0}^T \big(\Pi_{\tau = t}^{T} h_{\tau+1}\big) (\rho_{t+1} \Breg_2 - \breg_{l})(\bsbb^{(t+1)}, \bsbb^{(t)}) \nonumber\\
                & \ \le   \sum_{t=0}^T \big( \kappa_{t+1}\cdots\kappa_{T+1}  \big) \bigg\{ \frac{A \sigma^2}{\linrateparam} \frac{\rho_+(q_{t+1} + s)}{\big(\frac{\rho_-^l(q_{t+1}, s)}{\rho_{t+1}} \vee \bar\eta_{t+1} \big) \big(1 - \frac{\rho^l_-(s, q_{t+1})}{\rho_{t+1}}\big) \rho_{t+1}^2  } \cdot  \vartheta_{t+1}s\log\big(\frac{ep}{\vartheta_{t+1} s}\big) \nonumber \\
                & \qquad\qquad\qquad\qquad\quad\quad + \frac{\bar\eta_{t+1}}{\big(1 - \frac{\rho^l_-(s, q_{t+1})}{\rho_{t+1}}\big)\linrateparam}\|\bsbb^*\|_2^2 \bigg\} + \bigg( \Pi_{t = 0}^{T} \kappa_{t+1}  \bigg)\Breg_2(\bsbb^*, \bsbb^{(0)}).
        \end{align}{holds for all $T\ge 0$, with probability at least $1 - Cp^{-cA}$, where $ C, c$ are positive constants.}
\end{theorem}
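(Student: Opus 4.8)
The plan is to reduce \eqref{eq:seq_result} to a \emph{single} one-step recursion relating $\Breg_2(\bsbb^*, \bsbb^{(t+1)})$ to $\Breg_2(\bsbb^*, \bsbb^{(t)})$, and then to unroll that recursion from $t=T$ down to $t=0$. Concretely, I would aim to establish, on one high-probability event, an inequality of the form
\begin{equation*}
\Breg_2(\bsbb^*, \bsbb^{(t+1)}) + h_{t+1}(\rho_{t+1}\Breg_2 - \breg_{l})(\bsbb^{(t+1)}, \bsbb^{(t)}) \le \kappa_{t+1}\big\{\Breg_2(\bsbb^*, \bsbb^{(t)}) + R_{t+1}\big\},
\end{equation*}
where $R_{t+1}$ is the bracketed noise-plus-bias term on the right of \eqref{eq:seq_result} and $h_{t+1}, \kappa_{t+1}$ are exactly the quantities defined in \eqref{kappa_h}. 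This is essentially the analysis behind Theorem \ref{th:new}, but carried out with the previous iterate $\bsbb^{(t)}$ playing the role of the reference point so that the dependence on $\bsbb^{(t)}$ is \emph{retained} rather than discarded by sending $M\to\infty$. Iterating the contraction and keeping the surrogate-gap terms on the left then produces the claimed products of the weights $h_{\tau+1}$ and contraction factors $\kappa_{\tau+1}$; verifying the precise telescoped coefficients is a routine backward induction.

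To obtain the one-step bound I would start from the defining property of the update \eqref{eq:general_update}: since $\bsbb^{(t+1)}$ minimizes the surrogate $g(\cdot,\bsbb^{(t)})$ of \eqref{g-opt-general} (with parameters $\rho_{t+1}, \eta_{t+1}, q_{t+1}$) over all $q_{t+1}$-sparse vectors, and $\|\bsbb^*\|_0 \le s < q_{t+1}$ makes $\bsbb^*$ a feasible comparison point, plugging $\bsbb^*$ into the surrogate yields a clean ``basic inequality'' with \emph{no} selection error. Expanding the quadratic-plus-shrinkage part and writing the gradient through the effective noise $\bsbeps = -\nabla l_0(\bsbX\bsbb^*;\bsby)$ of \eqref{effective-noise}, I split $\nabla l_0(\bsbX\bsbb^{(t)};\bsby) = [\nabla l_0(\bsbX\bsbb^{(t)};\bsby) - \nabla l_0(\bsbX\bsbb^*;\bsby)] - \bsbeps$. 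The deterministic piece becomes generalized Bregman terms $\breg_{l}$, $\back\breg_{l}$, which I sandwich between $\rho_{\pm}^{l}\Breg_2$ via \eqref{gen-rip-upp}--\eqref{gen-rip-low}; the three-point identity $\langle \bsbb^{(t+1)} - \bsbb^*, \bsbb^{(t)} - \bsbb^*\rangle = \Breg_2(\bsbb^{(t+1)}, \bsbb^*) + \Breg_2(\bsbb^{(t)}, \bsbb^*) - \Breg_2(\bsbb^{(t+1)}, \bsbb^{(t)})$, together with the $\ell_2$-shrinkage contributions, assembles these into the coefficients $h_{t+1}^{-1}$ and $\rho_{t+1}-\rho_-^{l}(s,q_{t+1})$ of \eqref{kappa_h}. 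A Young-type split governed by $\linrateparam$ separates the restricted-strong-convexity part used for contraction from the part that must absorb the stochastic term, which is the source of the $1/\linrateparam$ factors in $R_{t+1}$ and of the $(1-\linrateparam)$ term in $h_{t+1}^{-1}$.

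The main obstacle is controlling the stochastic cross term $\langle \bsbX(\bsbb^{(t+1)} - \bsbb^*), \bsbeps\rangle$. The difficulty is that the support $\mathcal J(\bsbb^{(t+1)})$ is data-dependent, so I cannot condition on it; instead I bound this term uniformly over all directions $\bsbv$ with $\|\bsbv\|_0 \le q_{t+1}+s$ by the supremum $\sup\{\langle \bsbX\bsbv, \bsbeps\rangle/\|\bsbX\bsbv\|_2\}$, and show via a sub-Gaussian maximal inequality plus a union bound over the $\binom{p}{q_{t+1}+s}$ candidate supports that this supremum is $\lesssim \sigma\{A\,\vartheta_{t+1} s\log(ep/(\vartheta_{t+1}s))\}^{1/2}$ with probability at least $1-Cp^{-cA}$. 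Crucially, because the bound is uniform over all sparse supports, a \emph{single} such event serves every iteration simultaneously, which is exactly why the probability $1-Cp^{-cA}$ in \eqref{eq:seq_result} does not deteriorate with $T$, and why \eqref{eq:seq_result} can be asserted for all $T\ge0$ at once; the tuning constant $A$ trades deviation level against exponent. Applying Cauchy--Schwarz and a weighted Young's inequality to this cross term, I route $\|\bsbX(\bsbb^{(t+1)}-\bsbb^*)\|_2^2$ into the surrogate-gap term on the left and leave the squared noise factor $\sigma^2\vartheta_{t+1}s\log(ep/(\vartheta_{t+1}s))$, scaled by the denominator factors visible in $R_{t+1}$, on the right.

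Finally, with the one-step recursion in hand on the good event, I unroll it: keeping the nonnegative surrogate-gap terms $(\rho_{t+1}\Breg_2 - \breg_{l})(\bsbb^{(t+1)}, \bsbb^{(t)})$ on the left accumulates the weights $\prod_{\tau=t}^{T}h_{\tau+1}$, while the residual and initialization terms pick up the contraction products $\kappa_{t+1}\cdots\kappa_{T+1}$ and $\prod_{t=0}^{T}\kappa_{t+1}$, respectively, yielding \eqref{eq:seq_result}. Since the derivation imposes no sign or magnitude constraints on $\rho_t,\eta_t,\vartheta_t$ (which enter only through the defined $h_t,\kappa_t$), the bound is unconditional in these parameters as stated; the geometric decay advertised after the theorem is then read off by specializing so that each $\kappa_{t+1}$ is bounded below $1$.
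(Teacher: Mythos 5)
Your overall skeleton (a one-step recursion with weights $h_{t+1}$ and contraction factors $\kappa_{t+1}$, a uniform-over-supports sub-Gaussian bound for the cross term so that a single event covers all $T$, then unrolling) matches the paper's proof. But there is a genuine gap at the very first step: you derive the one-step inequality from the \emph{plain} basic inequality $g(\bsbb^{(t+1)},\bsbb^{(t)})\le g(\bsbb^*,\bsbb^{(t)})$, asserting that feasibility of $\bsbb^*$ gives a ``clean'' comparison ``with no selection error,'' and that the three-point identity plus restricted isometry then assemble the coefficients in \eqref{kappa_h}. That cannot work. The term $(1-1/\sqrt{\vartheta_{t+1}})(\rho_{t+1}+\eta_{t+1})$ in $h_{t+1}^{-1}$ is \emph{not} obtainable from $g(\bsbb^{(t+1)},\bsbb^{(t)})\le g(\bsbb^*,\bsbb^{(t)})$: the constraint set $\{\|\bsbb\|_0\le q_{t+1}\}$ is nonconvex, so the usual strong-optimality (three-point/prox) inequality that would add a $\rho_{t+1}\Breg_2(\bsbb^{(t+1)},\bsbb^*)$ gain to the left-hand side does not follow from minimality over that set. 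The paper instead invokes Lemma \ref{le:basic_l0}, which exploits the \emph{ordering} structure of the quantile-thresholding solution (every retained coordinate dominates every discarded one in magnitude) to prove the strengthened inequality
\begin{equation*}
g(\bsbb^*,\bsbb^{(t)}) - g(\bsbb^{(t+1)},\bsbb^{(t)}) \;\ge\; \rho_{t+1}\bigl(1-\mathcal L_{t+1}\bigr)\bigl(1+\bar\eta_{t+1}\bigr)\,\Breg_2(\bsbb^{(t+1)},\bsbb^*),
\qquad \mathcal L_{t+1}\le 1/\sqrt{\vartheta_{t+1}},
\end{equation*}
and it is exactly this extra quadratic gain that supplies $(1-1/\sqrt{\vartheta_{t+1}})(\rho_{t+1}+\eta_{t+1})$ in $h_{t+1}^{-1}$.

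This is not a bookkeeping issue but the heart of the contraction. Without that term, your $h_{t+1}^{-1}$ reduces to $(1-\linrateparam)(\rho_-^{l}(q_{t+1},s)+\eta_{t+1})$ (after the Young split and absorbing $\delta_{t+1}\rho_+(q_{t+1}+s)$), so $\kappa_{t+1}=(\rho_{t+1}-\rho_-^l(s,q_{t+1}))h_{t+1}$ would typically exceed $1$ whenever $\rho_{t+1}\gtrsim\rho_+^l\gg\rho_-^l$, and the recursion would not contract; you would also simply fail to reproduce the stated $\kappa_t$. The rest of your argument is sound and consistent with the paper --- the effective-noise decomposition, sandwiching $\breg_l,\back\breg_l$ by $\rho_\pm^l\Breg_2$, the $\linrateparam$-governed split producing the $1/\linrateparam$ factors, the uniform stochastic bound (the paper's Lemma \ref{le:stochastic_bound} adds a projection decomposition trick to handle $q_{t+1}+s>p$, but your route is serviceable), and the backward unrolling --- so the fix is localized: you must prove and use the quantile-thresholding strong-optimality lemma before expanding the surrogate.
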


The   corollary   below showcases the usefulness of the theorem on   algorithm configuration.
\begin{corollary} \label{co:iter}
In the setup of Theorem \ref{th:iter}, given any $\linrateparam \in (0,1]$, if $\rho_t$ and $\eta_t$ are chosen to satisfy
{
\begin{align}
                &\qquad\qquad\qquad\rho_{t+1} \ge \rho^l_+(q_{t+1}, q_t) \label{eq:rhotchoice}
                \\
                & \bar\eta_t \ge 0 \vee
                \frac{ (1/\sqrt{\vartheta_{t}}+\linrateparam) - 2(\rho^l_-(s, q_{t}) \wedge \rho^l_-(q_{t}, s))/\rho_t }{2 - 1/\sqrt{\vartheta_{t}}  - \linrateparam}
                \label{eq:etabarchoice}
\end{align}}so that ($\rho_{t+1} \Breg_2 - \breg_{l})(\bsbb^{(t+1)}, \bsbb^{(t)} )\ge 0$ and $ \kappa_{t} \le (1+\linrateparam)^{-1}$, then with probability at least $1 - Cp^{-cA}$   the statistical error of $\{\bsbb^{(t)}\}$ decays  \emph{geometrically} fast,
{
        \begin{equation} \label{eq:seq_result2}
                \begin{split}
                \Breg_2(\bsbb^*, \bsbb^{(T+1)}) \le
                \left(\frac{1}{1+\linrateparam}\right)^{T+1} \Breg_2(\bsbb^*, \bsbb^{(0)}) +
                 \frac{1}{\linrateparam}  \sum_{t=0}^T \left(\frac{1}{1+\linrateparam}\right)^{T-t+1} E_{t+1}
                \end{split}
\end{equation}}for all $ T \ge 0$, {where}
{
        \begin{equation} \label{Errtdef}
                \begin{split}
                        E_{t+1} = \big\{ 1 - \frac{\rho^l_-(s, q_{t+1})}{\rho^l_+(q_{t+1}, q_t)}\big\}^{-1} \bigg\{ \frac{A \sigma^2}{\frac{\rho_-^l(q_{t+1}, s)}{\rho^l_+(q_{t+1}, q_t)} \vee \bar\eta_{t+1} }
                         \frac{\rho_+(q_{t+1} + s)}{(\rho^l_+(q_{t+1}, q_t))^2}  \vartheta_{t+1}s\log\big(\frac{ep}{\vartheta_{t+1} s}\big)  + \bar\eta_{t+1}\|\bsbb^*\|_2^2 \bigg\}.
                \end{split}
        \end{equation}
}
\end{corollary}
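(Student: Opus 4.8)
The plan is to treat Corollary~\ref{co:iter} as a consequence of Theorem~\ref{th:iter}: I would verify the two structural facts promised in the statement (nonnegativity of the surrogate gaps and the contraction $\kappa_t \le (1+\linrateparam)^{-1}$), then use them to collapse the recursion \eqref{eq:seq_result} into the geometric form \eqref{eq:seq_result2}, and finally simplify the per-step error coefficients into $E_{t+1}$.

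First I would establish the two facts. Nonnegativity is immediate: since $\|\bsbb^{(t+1)}\|_0 = q_{t+1}$ and $\|\bsbb^{(t)}\|_0 = q_t$, the upper generalized restricted isometry inequality \eqref{gen-rip-upp} gives $\breg_{l}(\bsbb^{(t+1)},\bsbb^{(t)}) \le \rho^l_+(q_{t+1},q_t)\,\Breg_2(\bsbb^{(t+1)},\bsbb^{(t)})$, so \eqref{eq:rhotchoice} forces $(\rho_{t+1}\Breg_2 - \breg_{l})(\bsbb^{(t+1)},\bsbb^{(t)}) \ge (\rho_{t+1}-\rho^l_+(q_{t+1},q_t))\Breg_2(\bsbb^{(t+1)},\bsbb^{(t)}) \ge 0$. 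For the contraction bound, note that every summand in $h_t^{-1}$ from \eqref{kappa_h} is nonnegative (because $\vartheta_t>1$ and $\linrateparam\le 1$) with at least one strictly positive, so $h_t>0$ and, in the meaningful regime $\rho_t > \rho^l_-(s,q_t)$, the quantity $\kappa_t$ is positive. Writing $a=\rho^l_-(s,q_t)$, $b=\rho^l_-(q_t,s)$, $v=1/\sqrt{\vartheta_t}$, the requirement $\kappa_t \le (1+\linrateparam)^{-1}$ is equivalent to $(1+\linrateparam)(\rho_t-a) \le h_t^{-1}$; expanding $h_t^{-1}$ and rearranging yields $(2-v-\linrateparam)\eta_t \ge (v+\linrateparam)\rho_t - (1+\linrateparam)a - (1-\linrateparam)b$, i.e.\ $\bar\eta_t \ge \{(v+\linrateparam)-((1+\linrateparam)a+(1-\linrateparam)b)/\rho_t\}/(2-v-\linrateparam)$ after dividing by $\rho_t>0$ (the denominator being positive since $v+\linrateparam<2$). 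The elementary bound $(1+\linrateparam)a+(1-\linrateparam)b \ge 2(a\wedge b)$, valid for $a,b\ge0$ and $\linrateparam\in(0,1]$, shows the threshold in \eqref{eq:etabarchoice} dominates this, so \eqref{eq:etabarchoice} indeed guarantees $\kappa_t \le (1+\linrateparam)^{-1}$.

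Next I would feed these into \eqref{eq:seq_result}. Since each $h_{\tau+1}>0$ and each surrogate gap is nonnegative, the entire sum on the left of \eqref{eq:seq_result} is discarded, leaving $\Breg_2(\bsbb^*,\bsbb^{(T+1)})$ alone. On the right, the contraction bound telescopes the products, $\kappa_{t+1}\cdots\kappa_{T+1}\le (1+\linrateparam)^{-(T-t+1)}$ and $\prod_{t=0}^{T}\kappa_{t+1}\le(1+\linrateparam)^{-(T+1)}$. It remains to reduce the bracketed error. Setting $c_1=\rho^l_-(q_{t+1},s)$, $c_2=\rho^l_-(s,q_{t+1})$ and $e=\eta_{t+1}$, the noise denominator collapses via $\big(\tfrac{c_1}{\rho_{t+1}}\vee\tfrac{e}{\rho_{t+1}}\big)\big(1-\tfrac{c_2}{\rho_{t+1}}\big)\rho_{t+1}^2 = (c_1\vee e)(\rho_{t+1}-c_2)$, and the shrinkage term reduces to $e/(\rho_{t+1}-c_2)$; both are decreasing in $\rho_{t+1}$ on $\rho_{t+1}>c_2$. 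Hence replacing $\rho_{t+1}$ by its lower bound $\rho^l_+(q_{t+1},q_t)$ from \eqref{eq:rhotchoice} only enlarges these coefficients, so the bracket is at most $\linrateparam^{-1}E_{t+1}$ with $E_{t+1}$ as in \eqref{Errtdef} (the choice $\rho_{t+1}=\rho^l_+(q_{t+1},q_t)$ makes this an equality, exhibiting the minimal-$\rho$ regime favored for accuracy). Substituting the telescoped products and this error bound reproduces \eqref{eq:seq_result2}, and the probability $1-Cp^{-cA}$ is inherited verbatim from Theorem~\ref{th:iter}.

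The proof is essentially bookkeeping once Theorem~\ref{th:iter} is in hand; the one place demanding care is showing \eqref{eq:etabarchoice} implies $\kappa_t \le (1+\linrateparam)^{-1}$, because $\breg_{l}$ is not symmetric and the two distinct quantities $\rho^l_-(s,q_t)$ and $\rho^l_-(q_t,s)$ enter $\kappa_t$ and $h_t^{-1}$, respectively. The crux is the $\min$-inequality $(1+\linrateparam)a+(1-\linrateparam)b\ge 2(a\wedge b)$, which allows the asymmetric combination to be replaced by the conservative symmetric term $2(\rho^l_-(s,q_t)\wedge\rho^l_-(q_t,s))$ appearing in \eqref{eq:etabarchoice}; getting the direction right so that the stated $\bar\eta_t$ threshold is \emph{sufficient} (not merely necessary) is the main subtlety, together with the monotonicity check that validates substituting the lower bound on $\rho_{t+1}$ into the error coefficients.
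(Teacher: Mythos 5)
Your proposal is correct and follows essentially the same route as the paper: the paper likewise expands $\kappa_t \le (1+\alpha)^{-1}$ into a linear inequality in $\eta_t$, replaces the asymmetric combination of $\rho^l_-(s,q_t)$ and $\rho^l_-(q_t,s)$ by $(2+\alpha-\linrateparam)$ times their minimum (which becomes your factor of $2$ upon setting $\alpha=\linrateparam$), and notes that $\rho_{t+1}\ge\rho^l_+(q_{t+1},q_t)$ forces the surrogate gaps to be nonnegative before unrolling \eqref{eq:seq_result}. Your explicit monotonicity check justifying the substitution of $\rho^l_+(q_{t+1},q_t)$ for $\rho_{t+1}$ in the error coefficients is a detail the paper leaves implicit, but it is not a different argument.
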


The theoretical results  provide valuable insights into the design  of the three main elements of slow kill.
Let's  first apply Theorem \ref{th:iter} to  analyze the basic optimization algorithm with fixed quantiles $q_t \equiv q$ and  universal values $\rho_t \equiv \rho, \bar\eta_t \equiv \bar\eta$.   \eqref{eq:seq_result2} then shows linear convergence of the statistical error, with the first term on the right-hand side indicating the impact of the initial point.   
Because $\Sigma_{t=0}^{T}\{1/(1+\linrateparam)\}^{T-t+1}\le 1/\linrateparam$, the final error 
is of the order
\begin{equation} \label{seq_error}
\frac{\rho_+(q + s)}{(\rho^l_+(q, q))^2} \, \sigma^2\vartheta s\log\big(\frac{ep}{\vartheta s}\big)  + \bar\eta\|\bsbb^*\|_2^2,
\end{equation}
where the restricted condition number  $\rho^l_+(q,q)/\{\rho^l_-(s, q) \wedge \rho^l_-(q, s)\} $ and    $\linrateparam$ are assumed to be constants.
The lower bound derived in \eqref{eq:etabarchoice} can help  reduce the bias,  and suggests the benefit of using a   large quantile in this regard.  


On the other hand, large quantiles  can lead to an inflated variance term   $  \vartheta_{t+1}s\allowbreak\log\{{ep}/({\vartheta_{t+1} s})\}$ in \eqref{Errtdef},    which motivates the use of  {decreasing} quantiles, the most distinctive feature of slow kill. Indeed, a more careful examination of \eqref{eq:seq_result2}  shows that the factor
$
{1}/({1+\linrateparam})^{T-t+1}
$ 
allows for much larger $q_t$   to be used in earlier iteration steps. This is because for small      $t$,   the associated error $E_{t+1}$ will be more heavily shrunk    in the final   bound. Although it can be difficult to theoretically derive the optimal cooling scheme for the sequence ${q_t}$,     various schemes seem to perform well in practice, such as $q_{t+1} = \lfloor q+(T-t)/(aTt + bT) \rfloor$ (inverse) or $\lfloor q + (p-q)/{1+a\exp(bt/T)}^{c} \rfloor$ (sigmoidal), among others.

After $q_t$ is given, the choice of $\rho_{t+1}$ can be determined theoretically using  \eqref{eq:rhotchoice}:   $\rho_{t+1} \geq \rho_{+}^l(q_{t+1}, q_t)$,  which gives an upper bound of the stepsize to prevent slow kill from diverging.
In implementation,  $\rho_+^l(q_{t+1}, q_t)$ is often unknown.  With regular design matrices (such as Toeplitz), a constant multiple of $L\{n + q_{t+1}\log(ep/q_{t+1})\}$ can be employed based on   \eqref{rmt-ineq} in the proof of Appendix  \ref{rmt-proof}, assuming that $\nabla l_0$ is $L$-Lipschitz continuous.
More generally, seen from the second term on the left-hand side of  \eqref{eq:seq_result}, we  can use a  line search 
with    criterion
\begin{equation} \label{eq:line_search}
(\rho_{t+1} \Breg_2 - \breg_{l})(\bsbb^{(t+1)}, \bsbb^{(t)}) \geq 0.
\end{equation}
See   Appendix \ref{appendix_impldetail} for some implementation details of the line search.   \eqref{eq:line_search} enforces the majorization condition at $(\bsbb^{(t+1)}$, $\bsbb^{(t)})$,   and so  the resulting $\rho_{t+1}$ can be even smaller than $\rho^l_+(q_{t+1}, q_t)$. The importance  of limiting the size of $\rho_t$   was previously discussed in Section \ref{subsec:newmeans} for   $\ell_0$-constrained regression.
Similarly, having a smaller $\rho_{t+1}$ can help achieve a larger $\linrateparam$, which in turn leads to faster convergence and smaller error, as demonstrated   in  \eqref{kappa_h} and \eqref{eq:etabarchoice}.

The lower bound for the scaled $\ell_2$-shrinkage sequence $\bar\eta_{t}$ in Corollary \ref{co:iter} can be rewritten as
\begin{equation} \label{rho_condition_general-l}
2 \sqrt\vartheta_{t} > \frac{\rho_{t+1}+ \bar \eta_t \rho_t}{\rho^l_-(s, q_{t}) \wedge \rho^l_-(q_{t}, s) +  \bar \eta_t \rho_t}.
\end{equation}
It is similar to a    restricted condition number condition, and  extends  \eqref{theta_condition} to a general loss.
Specifically, when $2q_t>n$,  \eqref{eq:etabarchoice} implies   $\bar\eta_t>
(  1/\sqrt{\vartheta_{t}} )/( 2 - 1/\sqrt{\vartheta_{t}}) =  {  1  }/({2  \sqrt{\vartheta_{t}}-1})  $, and as a result,   we recommend using  a scaled shrinkage sequence defined by
\begin{align}
\bar\eta_t = 1/(2\sqrt{q_t/\bar s} -1), \label{etabarchoiceforlargeq}
\end{align}
where  $\bar s = q \wedge n L^2/\log(ep)$ (a surrogate for $s$, according to Appendix \ref{appsec:proofseq}) and $L$ is the Lipschitz parameter of $\nabla l_0$. \eqref{etabarchoiceforlargeq}   plays an important role in early slow kill iterations and is independent of the learning rate.

Our analyses
support the use of the $\ell_2$-assisted {backward} quantile control to gradually tighten the constraint.
The  update formula \eqref{eq:general_update} used in slow kill   has a strong foundation in optimization, which gives it an advantage over heuristics based multi-stage procedures. The fast geometric convergence established in Theorem \ref{th:iter}, together with a strong signal strength, indicates that the zeros in $\bsbb^{(t)}$ represent irrelevant predictors with high probability (cf. Remark \ref{remark1} and Appendix \ref{subsec:recurcoorderror}). This allows us to  occasionally squeeze the design matrix  using ${\mathcal{J}(\bsbb^{(t+1)})}$     (e.g., when $q_{t+1}$ reaches $p/2^k$) to   reduce the problem size (Appendix \ref{appendix_impldetail}).
The apparent junk features are thus removed at an early stage, saving computational cost,   while the more difficult to identify irrelevant features are  addressed only when we are close to finding an optimal solution.
This trait makes slow kill particularly well-suited for big data learning. Slow kill offers similar advantages in group variable selection \citep{she2013group} and low-rank matrix estimation \citep{She2013Mat}.

In contrast,  forward pathwise and boosting algorithms \citep{buhlmann2003boosting,efron2004least,needell2009cosamp,ZhangIT11,zhang2013multi,wang2014optimal,zhao2018pathwise}  grow a model from the null in   a \emph{bottom-up} fashion. Such  algorithms   must consider almost all features at each iteration, making them computationally intensive, as they often require hundreds or thousands of boosting iterations.
Motivated by the $\ell_0$-optimization perspective, we can also investigate a class of ``steady grow" procedures in which $q_t$ increases from $0$ to $q$ in \eqref{eq:general_update}. Compared with boosting, the update and selection would incorporate the effect of the previous estimate in addition to the gradient. A retaining option can be introduced in steady grow that works in the opposite way to the squeezing operation in slow kill. The investigation of retaining and squeezing, as well as a combination of slow kill and steady grow, is left for future research.

Finally, how to obtain a sparse model with  a prescribed cardinality is the  problem of interest throughout the  paper. But   if one wants to determine the best  value for $q$, we suggest using a predictive information criterion \citep{SheCV} that can guarantee the optimal prediction error rate in a nonasymptotic sense (which  is presented  in Appendix \ref{appsub:tuning}).


\section{Experiments} \label{experiments}
\subsection{Simulations}
\label{subsec:simu}
In this part, we conduct simulation studies to compare the performance of slow kill (abbreviated as SK in tables and figures below) with some popular sparse learning methods in terms of prediction accuracy, selection consistency, and computational efficiency.  Unless otherwise mentioned, the rows $\tilde\bsbx_i^T$ of the predictor matrix $\bsbX = [\tilde \bsbx_1, \dots, \tilde \bsbx_n]^T \in \mathbb R^{n\times p}$ are independently generated from a multivariate normal distribution with covariance matrix $\bsbSig$, where $\bsbSig$ either has a Toeplitz structure $[\tau^{|i-j|}]$ or has equal correlations $[\tau 1_{i\neq j}]$. High correlation strengths such as $\tau=0.9$ will be included in our experiments. We consider both regression and classification with a sparse $\bsbb^*$: $\beta^*_j = 1,$ if $j = 10k + 1, 0 \le k < s$  and so $s = \|\bsbb^*\|_0$.
In the regression experiments, $\bsby=\bsbX\bsbb^* +\bsbeps$ with $\epsilon_i\sim N(0,1)$, and for the classification experiments, $y_i = 1$ if $\tilde \bsbx^T_{i} \bsbb^* > 0$ and 0 otherwise.

In addition to slow kill, the following methods are included for comparison: lasso \citep{tibshirani1996}, elastic net (ENET)  \citep{zou2005regularization}, MCP  \citep{zhang2010}, SCAD  \citep{fan2001variable}, and IHT and NIHT     (\cite{blumensath2009iterative,blumensath2010normalized}, for regression only). (We also evaluated the performance of picasso \citep{zhao2018pathwise} in simulations as an improved version of \cite{wang2014optimal}. However, its pathwise computation  resulted in worse error rates and missing rates than  standard nonconvex optimization on the synthetic data. Therefore, we did not present the results. We will include the algorithm in our experiments with real data in later sections.)  The quadratic loss is used in regression and the logistic deviance is used in classification.
For slow kill,    we take a simple single starting point
$\bsbb^{(0)}  = \bsb0 $   and   $\eta_0=50$; an inverse cooling schedule
$q_{t+1} = \lfloor q+(T-t)/\{tT/(p-q)+ 2T/(p-2q)\} \rfloor  \ (0 \le t \le T)$ is used so that $q_T = q$ and  $q_1 = p/2$, and we set $T = 100$ in all experiments for convenience and efficiency.
We use the R package glmnet to implement lasso and elastic net, the package ncpen \citep{kim2018unified} for the aforementioned nonconvex penalties, and the package sparsify for IHT methods. (The core of glmnet is implemented using Fortran subroutines, while ncpen is mainly based on C++. Our implementation of slow kill could potentially be made more efficient and require less memory by using C or Fortran, but it already performs comparably or better than the other methods, as shown in later tables and figures.)
To ensure a fair comparison and eliminate the influence of different parameter tuning schemes, we select the estimate with 1.5s nonzeros for each method. To calibrate the bias, we refit each obtained model using only the selected variables. All  other algorithmic parameters are set to their default values.

Given each simulation setup, we repeat the experiment for 50 times and evaluate the performance of each algorithm according to the measures defined below: the missing rate $\times 100\%$  and the prediction error. Concretely, the missing rate is the fraction of undetected true variables, and in regression, the prediction error is calculated by $10$ times $ (\hat\bsbb - \bsbb^*)^T\bsbSig(\hat\bsbb - \bsbb^*)$ using the true signal, while in classification, it refers to the misclassification error rate $\times 100\%$ on a separate test set containing the same number of observations as the training dataset. The total computational time (in seconds) is also included to describe the computational cost. Since the implementation of a penalized method  often uses  warm starts,   we terminate  the algorithm once it reaches an estimate with the prescribed cardinality.

Table \ref{table-exp1} shows some experiment results in the regression setup.  Figure \ref{figure-exp1} plots more results of some representative methods when  varying  the sparsity level $s$ and the correlation strength $\tau$ (excluding elastic net and IHT, because  their performance is similar to that of lasso and poor, respectively). It can be seen that slow kill outperforms the other methods in terms of both statistical accuracy and computational time, particularly in more challenging situations with more relevant features and coherent designs.
\begin{table}[h]
\caption{\label{table-exp1} {\fontsize{8.5}{8.5}\selectfont  Regression: performance comparison in terms of prediction error, missing rate and computational time with different correlation structures. In more details, $p=5\mbox{,}000, n=150, s = 10$ and $\bsbSig =[\tau^{|i-j|}]$ or $[\tau 1_{i\neq j}]$ with $\tau = 0.9$ } }
\begin{center}
        {\fontsize{8.5}{10}\selectfont
                \setlength{\tabcolsep}{1.1mm}
                \begin{tabular*}{0.8\linewidth}{@{\extracolsep{\fill}} l rrr r rrr}

                        & \multicolumn{3}{c}{Toeplitz structure} && \multicolumn{3}{c}{Equal correlation}\\
                        \cmidrule(lr){2-4} \cmidrule(lr){6-8}
                        & \multicolumn{1}{c}{Error} & \multicolumn{1}{c}{Miss} & \multicolumn{1}{c}{Time} &&\multicolumn{1}{c}{Error} & \multicolumn{1}{c}{Miss} & \multicolumn{1}{c}{Time} \\
                        LASSO       & 16&32 & 5  && 15& 83& 13\\
                        ENET        & 16&31 & 13 && 14& 82& 34\\
                        IHT         & 85&68 & 55 && 16& 88& 57\\
                        NIHT        & 12&22 & 4  && 17& 80& 18\\
                        MCP         & 12&23 & 34 && 18& 78& 24\\
                        SCAD        & 12&23 & 13 && 16& 85& 6 \\
                        SK          & 2 &2  & 1  && 12& 50& 1 \\

                        \hline
                \end{tabular*}
        }
\end{center}
\end{table}

\begin{figure*}[h]
\centering
\includegraphics[width=0.75\textwidth]{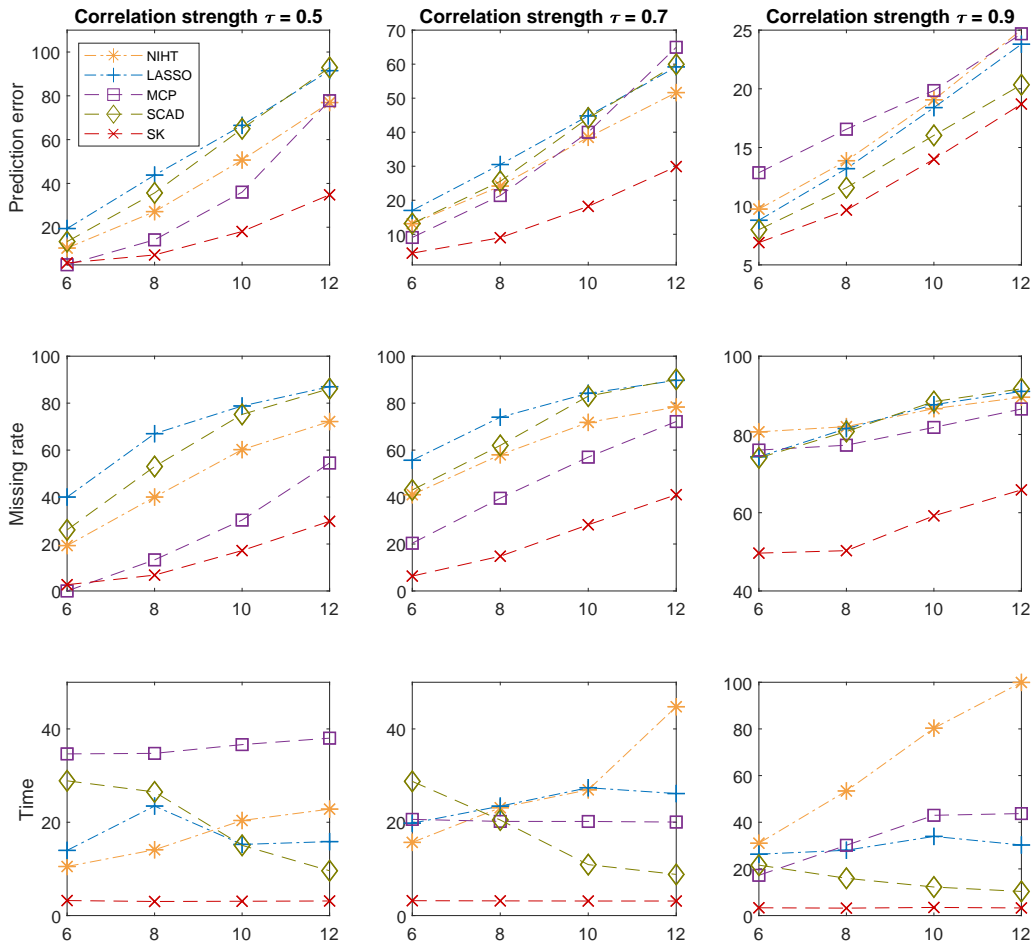}
\caption{{\fontsize{8.5}{10}\selectfont Regression: performance comparison in terms of prediction error, missing rate and computational time when varying the sparsity and the correlation strength of the model. In more details, $p = 10\mbox{,}000, n=150, s =6, 8, 10, 12$ and $\bsbSig = [\tau 1_{i\neq j}]$ with $\tau =0.5, 0.7, 0.9$.} }
\label{figure-exp1}
\end{figure*}

For classification, Table \ref{table-exp2-2} and Figure \ref{figure-exp2} make a comparison between different methods with various correlation structures and problem dimensions, and similar conclusions can be drawn. It is important to note that the excellent statistical accuracy of slow kill is \textit{not} accompanied by a sacrifice in computational time  compared to other methods.
In fact,   as seen in  Figure \ref{figure-exp2}, slow kill offers substantial time savings especially when $n$ is large, while being very successful at selection and prediction.

\begin{table}
\begin{center}
        \caption{\label{table-exp2-2} {\fontsize{8.5}{8.5}\selectfont Classification: performance comparison in terms of prediction error, missing rate and computational time with different correlation structures. In more details, $p= 2\mbox{,}000, n= 500, s = 10$ and $\bsbSig=[\tau^{|i-j|}]$ or $[\tau 1_{i\neq j}]$ with $\tau = 0.9$ } }
        {\fontsize{8.5}{10}\selectfont
                \setlength{\tabcolsep}{1.1mm}
                \begin{tabular*}{0.8\linewidth}{@{\extracolsep{\fill}} l rrr r rrr}

                        & \multicolumn{3}{c}{Toeplitz structure} && \multicolumn{3}{c}{Equal correlation}\\
                        \cmidrule(lr){2-4} \cmidrule(lr){6-8}
                        & \multicolumn{1}{c}{Error} & \multicolumn{1}{c}{Miss} & \multicolumn{1}{c}{Time} &&\multicolumn{1}{c}{Error} & \multicolumn{1}{c}{Miss} & \multicolumn{1}{c}{Time} \\
                        LASSO        & 8.0 & 24 & 10 && 5.1& 95 & 49\\
                        ENET         & 8.0 & 25 & 31 && 4.7& 95 &135\\
                        MCP          & 6.9 & 23 & 15 && 5.0& 93 & 20\\
                        SCAD         & 7.0 & 22 & 22 && 5.1& 94 & 16\\
                        SK           & 2.2 & 2  & 4  && 3.9& 78 & 4 \\
                        \hline
                \end{tabular*}
}\end{center}
\end{table}

\begin{figure*}[h!]
\centering
\includegraphics[width=0.75\textwidth]{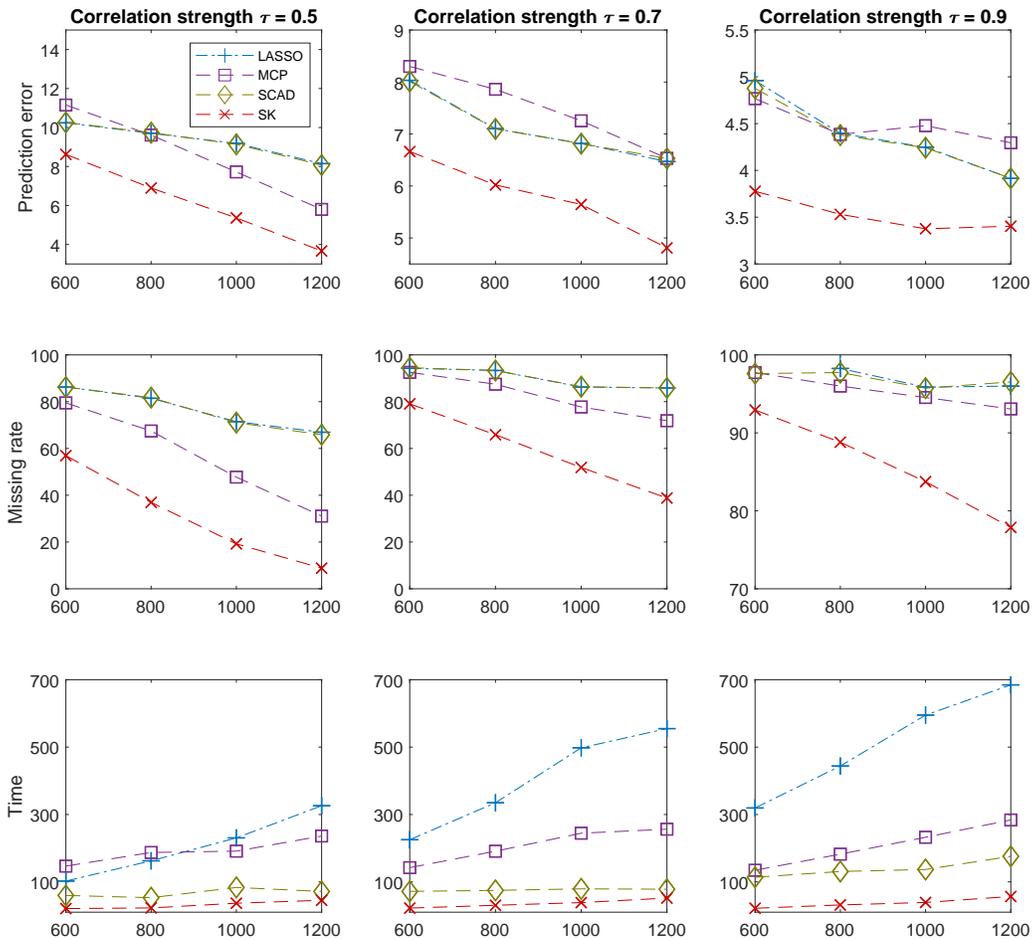}
\caption{{\fontsize{8.5}{8.5}\selectfont  Classification: performance comparison in terms of prediction error, missing rate and computational time with different correlation structures and sample sizes. In more details, $p=10\mbox{,}000, n=600, 800, 1000, 1200, s = 15$ and $[\tau 1_{i\neq j}]$ with $\tau = 0.5, 0.7, 0.9$. } }
\label{figure-exp2}
\end{figure*}


Next, we present some experiments in which the signal strength is varied. Recall that in the regression setup, we set $\beta_j^*=1$ for $j\in \mathcal J(\bsbb^*)$. For  $n=100, p=5000, \sigma=1$, the minimax  optimal rate is approximately    $\sigma \sqrt {(\log p) /  n } (\approx 0.292)$ (ignoring the constant factor for which a sharp value may be difficult to derive). We conducted additional  experiments by setting $\beta_j^* =  0.8, 0.6, 0.4, 0.2$. The comparison results for different methods are demonstrated in Figure \ref{figure-exp-betastrength}.
As the signal strength was low (e.g., $\beta_j^*=0.2, 0.4$), all methods performed poorly. For higher values,   slow kill outperformed the other methods by a large margin.

\begin{figure}[htp!]
        \centering
        \includegraphics[width=0.8\textwidth]{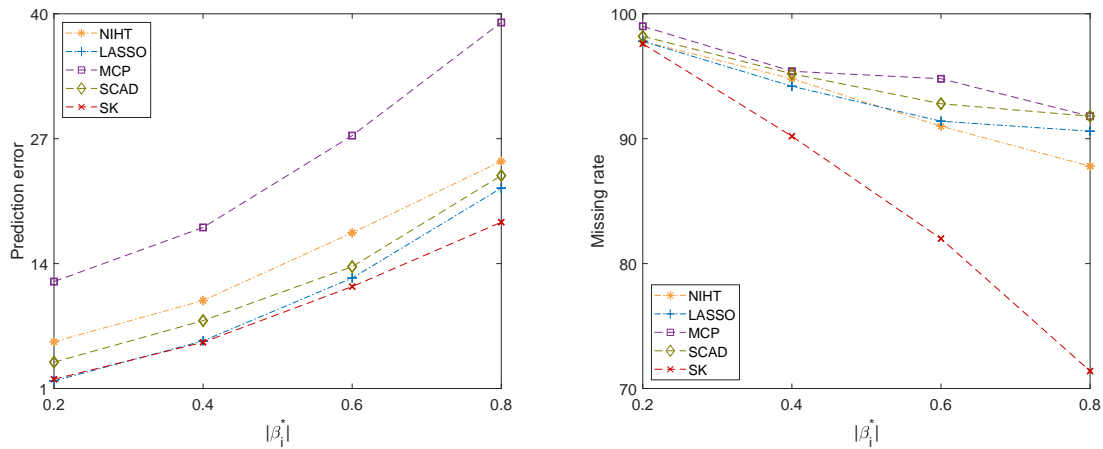}
        \caption{{\fontsize{8.5}{8.5}\selectfont  Comparison of prediction errors (left) and missing rates (right) of different methods under different signal strengths. The details of the regression setup are given in Section  \ref{subsec:simu},  and we set   $p=5\mbox{,}000, n=100, s = 10$, $ \tau = 0.8$, and   $\bsbb_j^* = 0.2, 0.4, 0.6, 0.8$ for $j\in\mathcal J(\bsbb^*)$.} }
        \label{figure-exp-betastrength}
\end{figure}

We conducted another  experiment to explore larger values of $\| \bsbb^*\|_2^2$. (As a reminder, in the previous setting where $s=10$ and $ \beta_j^*=1$, $\forall j\in \mathcal J(\bsbb^*)$, we had    $\| \bsbb^*\|_2^2=10$.) We tested $\| \bsbb^*\|_2^2= 50, 100, 150, 200$ by scaling up each $\beta_j^*$ by a corresponding factor.  The results of this experiment are shown in Figure \ref{figure-exp-betanorm}. As $\|\bsbb^*\|_2^2$ increases,  NIHT, MCP, and slow kill   exhibit clear advantages, with  the latter two showing similar prediction errors and missing rates.

\begin{figure}[h!]
        \centering
        \includegraphics[width=0.8\textwidth]{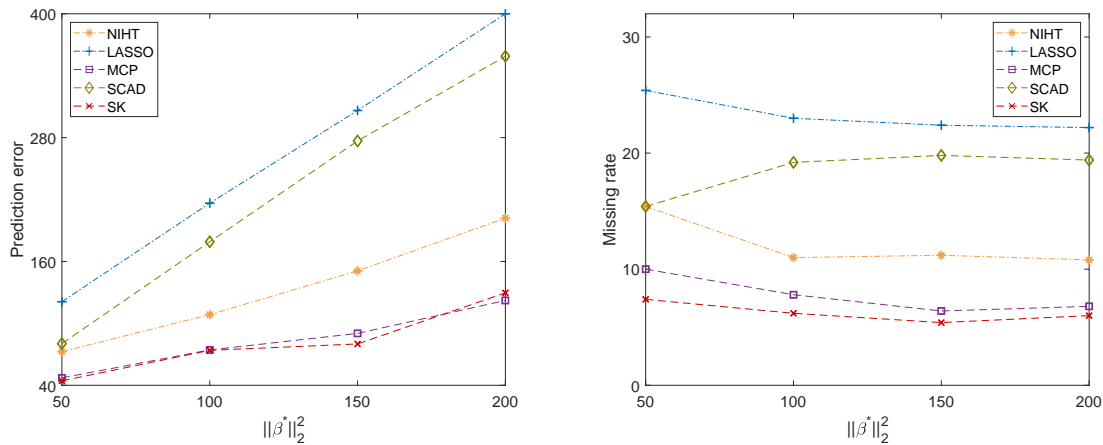}
        \caption{{\fontsize{8.5}{8.5}\selectfont  Comparison of prediction errors (left) and missing rates (right) of different methods for large signals. The details of the regression setup are given in Section  \ref{subsec:simu},  and we set   $p=5\mbox{,}000, n=100, s = 10$, $ \tau = 0.8$, and  $\|\bsbb^*\|_2^2 = 50, 100, 150, 200$ (by scaling up each $\beta_j^*$).} }
        \label{figure-exp-betanorm}
\end{figure}

\subsection{Handwritten digits classification}
The Gisette dataset \citep{Guyon2004result} was created to classify the highly confusing digits 4 and 9 for handwritten digit recognition. There are 5,000 predictors, including various pixel constructed features as well as some `probes' with little predictive power. Because the exact number of relevant features is unknown, we assess the performance of different methods given the same model cardinality to make a fair comparison. We randomly split the 7,000 samples into a training subset with 3,000 samples and a test subset with 4,000 samples for 20 times to report the average misclassification error rate and total computational time.

Due to  the relatively large size of the data, computational efficiency is a major concern. Many statistical packages were unable to deliver meaningful results in a reasonable amount of time.   Here, we compare the glmnet  \citep{Friedman2010},  logitboost \citep{friedman2000additive,scikit-learn}, picasso   with the MCP option \citep{ge2019picasso}, and slow kill with different numbers of selected features.

\begin{figure}[h!]
\centering
        \includegraphics[width=0.8\textwidth]{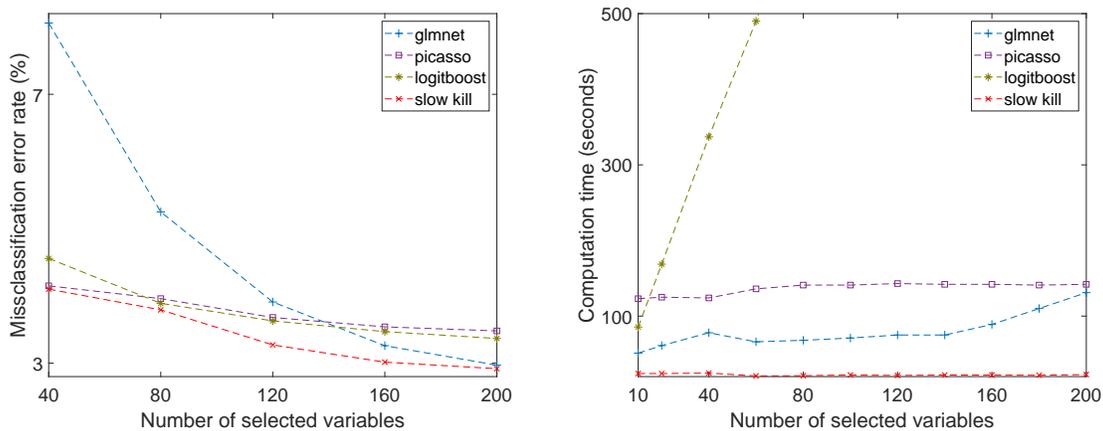}
\caption{{\fontsize{8.5}{8.5}\selectfont Gisette data. Left panel: mean misclassification error rate, right panel:  total computational time, with different numbers of selected features. Picasso is too costly compared with the other methods and only part of its cost curve is shown. }}
\label{figure:real-data}
\end{figure}

According to Figure \ref{figure:real-data}, logitboost and picasso   achieved better misclassification error rates on the dataset than glmnet, but slow kill consistently performed  the best.
In terms of computational cost, glmnet and slow kill were extremely scalable;  logitboost was quite expensive even for just $q = 40$, and picasso  suffered a similar issue when $q \ge 60$.

\subsection{Breast cancer microarray data}
The breast-cancer microarray dataset \citep{feltes2019cumida} from the Curated Microarray Database contains 35,981 gene expression levels of 143 tumor samples of patients with breast cancer and 146 paired adjacent normal breast tissue samples.
The goal is to identify some differentially expressed genes to help the classification of normal and tumor tissues.
We randomly split the dataset into a training subset (60\%) and a test subset (40\%) for 20 times and report the  misclassification error rates and total computational time of different methods in Table \ref{table-realdata2-1}. 

\begin{table}
\begin{center}
        \caption{\label{table-realdata2-1}
                {\fontsize{8.5}{8.5}\selectfont  Breast cancer microarray data:  misclassification error rate ($\times 100\%$) and total computational time (in seconds) } }
        {\fontsize{6.5}{10}\selectfont
                \setlength{\tabcolsep}{1.1mm}

                \begin{tabular*}{0.9\linewidth}{@{\extracolsep{\fill}} l rrrrrrrrrr }
                        \hline
                        & \multicolumn{2}{c}{$q=60$} & \multicolumn{2}{c}{$q=80$} & \multicolumn{2}{c}{$q=100$} & \multicolumn{2}{c}{$q=120$} & \multicolumn{2}{c}{$q=140$} \\
                        \cmidrule(lr){2-3} \cmidrule(lr){4-5} \cmidrule(lr){6-7}  \cmidrule(lr){8-9} \cmidrule(lr){10-11}
                        & Error & Time & Error & Time  & Error & Time & Error & Time & Error & Time \\
                        \hline
                        GLMNET     &10.9&19 & 10.7&19 & 10.5&50 & 10.2&50  & 10.2&50\\
                        PICASSO    &11.4&43 & 11.3&43 & 11.1&48 & 11.3&48  & 11.2&42\\
                        LogitBoost &11.2&500& 11.2&680& 10.9&860& 10.6&1080& 10.8&1220\\
                        SK  &10.8&10 & 10.2&11 & 10.2&11 & 10.1&11  & 9.8 &11\\
                        \hline
                \end{tabular*}
        }
\end{center}
\end{table}

According to Tables \ref{table-realdata2-1}, logitboost has the highest computational complexity,
and picasso  shows the worst overall classification performance on this dataset. In contrast, glmnet and slow kill can achieve lower misclassification error rates, and the latter  is much more cost-effective according to our experiments.

\subsection{Sub-Nyquist spectrum sensing and learning}
\label{subsec:gbsense}
Sub-Nyquist sampling-based wideband spectrum sensing
for millimeter wave is an important topic for   next-generation wireless communication systems.
With a multi-coset sampler \citep{ME09}, a  multiple-measurement-vector model in signal processing can be formulated as  $\bsbY = \bsbX \bsbB ^ * + \bsbE$, where the goal is to exploit the joint  (row-wise) weak sparsity of $\bsbB^*$ to reconstruct  the spectrum. Here, all the matrices are complex (e.g., $\bsbY \in \mathbb C^{n\times m}$, $\bsbX\in \mathbb C^{n\times p}$), and the size of the predictor matrix $\bsbX$ is determined by    the number  of cosets and the number of channels;
interested reader may refer to \cite{song2019real} for more detail. Nicely,  with the Hermitian inner product
$\langle \bsbA, \bsbB\rangle \triangleq \mbox{tr}\{ \bsbA^{\mbox{\tiny H}} \bsbB\}$
in place of the real inner product, and the generalized Bregman function redefined as $\breg_{l}(\bsbB_1, \bsbB_2) = l(\bsbB_1) - l(\bsbB_2) - \langle \nabla l(\bsbB_2), \bsbB_1 - \bsbB_2 \rangle/2 - \langle \bsbB_1 - \bsbB_2,\nabla l(\bsbB_2) \rangle/2 $,  all of our theorems and algorithms can be extended to the complex group sparsity pursuit.

We compared our method with two popular methods, SOMP   \citep{tropp2005simultaneous} and JB-HTP  \citep{qi2019low}, on a benchmark time-domain dataset in \cite{gao2021sub}.   Table \ref{table-gbsensedata} shows  the normalized mean square error $\|\hat \bsbB - \bsbB^*\|_F/\|\bsbB^*\|_F$ of each method as we vary $q$ (the   number of selected channels).
A demonstration of   spectral recovery is plotted in  Figure  \ref{figure:GBsensedata}, where    the predictive information criterion  in Appendix \ref{appsub:tuning}  was used for model selection  in slow kill. 

\begin{table}[htp]
\begin{center}
        \caption{\label{table-gbsensedata}
                {\fontsize{8.5}{8.5}\selectfont Spectrum reconstruction error in terms of normalized mean square error   } }
        {\fontsize{8.5}{10}\selectfont
                \setlength{\tabcolsep}{1.1mm}
                \begin{tabular*}{0.8\linewidth}{@{\extracolsep{\fill}} l cccccc}
                        \hline
                        & $q=3$ & $q=4$ & $q=5$  & $q=6$ & $q=7$ & $q=8$ \\
                        \hline
                        SOMP                    &0.83&0.93&0.82&0.91&0.92&0.94\\
                        JB-HTP                  &0.94&1.00&0.99&0.95&1.07&0.96\\
                        $\mbox{SK}$  &0.74&0.65&0.53&0.38&0.42&0.50\\
                        \hline
                \end{tabular*}
        }
\end{center}
\end{table}

\begin{figure}[h!]
\centering
\includegraphics[width=0.5\textwidth]{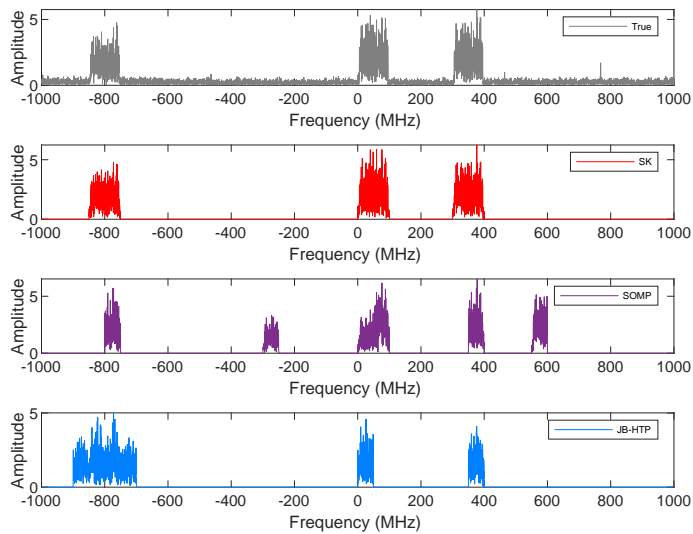}
\caption{{\fontsize{8.5}{8.5}\selectfont Spectrum sensing results by different methods. }}
\label{figure:GBsensedata}
\end{figure}

\section{Summary} \label{summary}
This paper proposed a new slow kill method for large-scale variable selection. It is  a scalable optimization-based algorithm that uses  three carefully designed and theoretically justified sequences of thresholds, shrinkage, and learning rates. 

Intuitively, slow kill uses a novel backward quantile control with adaptive $\ell_2$ shrinkage and increasing learning rates to relax regularity conditions and overcome obstacles in backward elimination. This method is significantly different from boosting and many forward stagewise procedures in the existing literature. Our theoretical studies led to insights on how to design a progressive hybrid regularization to achieve the optimal error rate and fast convergence. The technique is applicable to a general loss that is not necessarily a negative log-likelihood function, and its ability to reduce the problem size throughout the iteration makes it attractive for big data.



\numberwithin{equation}{section}
\numberwithin{theorem}{section}
\numberwithin{lemma}{section}
\numberwithin{definition}{section}
\numberwithin{table}{section}
\numberwithin{figure}{section}

\appendix
\section{Proofs} \label{sec:proofs}
The  definition of a sub-Gaussian random variable or a sub-Gaussian random vector is standard in the literature.
\begin{definition} \label{def:subgauss}
        We call $\xi$ a sub-Gaussian random variable if it has mean zero and the scale ($\psi_2$-norm) for $\xi$,   defined as $ \inf \{\sigma>0: \EE[\exp(\xi^2/\sigma^2)] \leq 2\}$, is finite. We call $\bsbxi\in \mathbb R^p$ a sub-Gaussian random vector with scale  bounded by $\sigma$ if all one-dimensional marginals $\langle \bsbxi, \bsba \rangle$ are sub-Gaussian satisfying $\|\langle\bsbxi, \bsba \rangle\|_{\psi_2}\leq\sigma \|\bsba \|_2$, $\mbox{for any} \ \bsba\in \mathbb R^{p}$.  Similarly, a random matrix $\bsbxi$ is called sub-Gaussian if $\vect(\bsbxi)$ is sub-Gaussian.
\end{definition}

\subsection{Theorem \ref{th:regconvergence} and  Theorem \ref{th:general convergence}}
\label{app:numericalproof}
First, for the algorithm \eqref{eq:update_rule} defined in the setup of Section \ref{iq}, we have the following numerical properties.
\begin{theorem} \label{th:regconvergence}
        Given any $\bsbX, \bsby$ and $\bsbb^{(0)}$, the sequence of iterates $\bsbb^{(t)}$ generated by \eqref{eq:update_rule} satisfies
        $f(\bsbb^{(t)}) - f(\bsbb^{(t+1)}) \ge   {\rho} \|\bsbb^{(t+1)}-\bsbb^{(t)}\|_2^2 /2-  \|\bsbX(\bsbb^{(t+1)} - \bsbb^{(t)})\|_2^2/2, \ \forall t \ge 0$
        and so when $\rho \ge \rho_+(2q)$,  $f(\bsbb^{(t)})$ converges, and $\bsbb^{(t)}$ satisfies
        $$ \min_{0 \le t \le T} \|\bsbb^{(t+1)} - \bsbb^{(t)} \|_2^2 \le  \frac{1}{T+1} \frac{2f(\bsbb^{(0)})}{\rho-\rho_+(2q)}.
        $$
        Moreover, as long as $\rho > \rho_+(2q) $  and $\eta_0> 0$, $\bsbb^{(t)}$ has a  {unique} limit point $\hat \bsbb $
        that satisfies the ``\emph{fixed-point}'' equation
        $$\bsbb= \Theta^{\#}\{\bsbb - \bsbX^T (\bsbX\bsbb - \bsby)/{\rho}; q, \eta_0 / \rho\},$$ 
        and when $\|\hat\bsbb\|_0=q$, $\hat\bsbb$ is also a local minimizer of problem \eqref{eq:criterion}.
\end{theorem}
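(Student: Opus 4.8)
The plan is to run a majorization--minimization argument built on the exact quadratic expansion of $f$. First I would record the identity
\begin{equation*}
g(\bsbb,\bsbb^{-}) - f(\bsbb) = \frac{\rho}{2}\|\bsbb - \bsbb^{-}\|_2^2 - \frac{1}{2}\|\bsbX(\bsbb - \bsbb^{-})\|_2^2,
\end{equation*}
which holds because the quadratic loss satisfies $\frac12\|\bsby - \bsbX\bsbb\|_2^2 = \frac12\|\bsby - \bsbX\bsbb^{-}\|_2^2 + \langle \bsbX^T(\bsbX\bsbb^{-} - \bsby), \bsbb - \bsbb^{-}\rangle + \frac12\|\bsbX(\bsbb - \bsbb^{-})\|_2^2$ exactly, together with $g(\bsbb^{-},\bsbb^{-}) = f(\bsbb^{-})$. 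Since $\bsbb^{(t+1)}$ minimizes $g(\cdot,\bsbb^{(t)})$ over the feasible set to which $\bsbb^{(t)}$ belongs, we have $g(\bsbb^{(t+1)},\bsbb^{(t)}) \le g(\bsbb^{(t)},\bsbb^{(t)}) = f(\bsbb^{(t)})$; substituting the identity at $\bsbb = \bsbb^{(t+1)}$, $\bsbb^{-} = \bsbb^{(t)}$ and rearranging yields the claimed descent bound $f(\bsbb^{(t)}) - f(\bsbb^{(t+1)}) \ge \frac{\rho}{2}\|\bsbb^{(t+1)} - \bsbb^{(t)}\|_2^2 - \frac12\|\bsbX(\bsbb^{(t+1)} - \bsbb^{(t)})\|_2^2$.

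For convergence and the rate, I would use that $\bsbb^{(t+1)} - \bsbb^{(t)}$ has at most $2q$ nonzeros, so the restricted isometry upper bound gives $\|\bsbX(\bsbb^{(t+1)} - \bsbb^{(t)})\|_2^2 \le \rho_+(2q)\|\bsbb^{(t+1)} - \bsbb^{(t)}\|_2^2$. This produces $f(\bsbb^{(t)}) - f(\bsbb^{(t+1)}) \ge \frac{\rho - \rho_+(2q)}{2}\|\bsbb^{(t+1)} - \bsbb^{(t)}\|_2^2$, so when $\rho \ge \rho_+(2q)$ the sequence $f(\bsbb^{(t)})$ is nonincreasing; being bounded below by $0$, it converges. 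Telescoping from $0$ to $T$ and using $f \ge 0$ gives $\sum_{t=0}^T \|\bsbb^{(t+1)} - \bsbb^{(t)}\|_2^2 \le 2f(\bsbb^{(0)})/(\rho - \rho_+(2q))$, from which the stated $\min_{0\le t\le T}$ bound follows by an averaging step.

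The delicate part is the unique limit point. With $\eta_0 > 0$, monotonicity gives $\frac{\eta_0}{2}\|\bsbb^{(t)}\|_2^2 \le f(\bsbb^{(t)}) \le f(\bsbb^{(0)})$, so $\{\bsbb^{(t)}\}$ is bounded, and the summability above forces $\|\bsbb^{(t+1)} - \bsbb^{(t)}\|_2 \to 0$. The key structural fact I would exploit is that, on any fixed support $J$ with $|J| \le q$, the fixed-point relation collapses to the linear system $(\bsbX_J^T\bsbX_J + \eta_0 \bsbI)\bsbb_J = \bsbX_J^T\bsby$ with $\bsbb_{J^c} = 0$; since $\eta_0 > 0$ the coefficient matrix is positive definite, so there is exactly one fixed point per support and hence only finitely many fixed points overall. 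I would then show every accumulation point is such a fixed point: passing to a subsequence along which the support $\mathcal J(\bsbb^{(t+1)})$ is constant (possible since there are finitely many supports), taking limits in the $\Theta^{\#}$ update, and invoking the $\Theta^{\#}$-uniqueness assumption to confirm the limiting support is admissible. Because $\|\bsbb^{(t+1)} - \bsbb^{(t)}\|_2 \to 0$ on a compact set makes the set of accumulation points connected, while the fixed points are isolated, this set must be a singleton and $\bsbb^{(t)}$ converges to it.

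Finally, when $\|\hat\bsbb\|_0 = q$, let $\hat J = \mathcal J(\hat\bsbb)$ and $c = \min_{j\in\hat J}|\beta_j| > 0$ (writing $\beta_j$ for the entries of $\hat\bsbb$). Any feasible $\bsbb$ with $\|\bsbb - \hat\bsbb\|_\infty < c$ keeps all coordinates in $\hat J$ nonzero, so by $\|\bsbb\|_0 \le q = |\hat J|$ it is supported on $\hat J$; thus locally the feasible set coincides with the subspace $\{\bsbb: \mathcal J(\bsbb) \subseteq \hat J\}$. On that subspace $f$ has Hessian $\bsbX_{\hat J}^T\bsbX_{\hat J} + \eta_0\bsbI \succ 0$ and $\hat\bsbb$ solves the associated stationarity equation (exactly the per-support system above), so $\hat\bsbb$ is the unique minimizer there and hence a local minimizer of \eqref{eq:criterion}. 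I expect the main obstacle to be the uniqueness of the limit point: ruling out a continuum of accumulation points requires precisely the combination of $\eta_0 > 0$ (to pin down finitely many fixed points) and the $\Theta^{\#}$-uniqueness assumption (to pass the discontinuous thresholding through the limit).
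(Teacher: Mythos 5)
Your proposal is correct and follows essentially the same route as the paper: the exact quadratic identity $g(\bsbb,\bsbb^-)-f(\bsbb)=\tfrac{\rho}{2}\|\bsbb-\bsbb^-\|_2^2-\tfrac12\|\bsbX(\bsbb-\bsbb^-)\|_2^2$ plus majorization--minimization and telescoping for the descent and rate, then boundedness from $\eta_0>0$, vanishing successive differences, the $\Theta^{\#}$-uniqueness assumption to pass to the limit, one fixed point per support via strong convexity of the restricted ridge problem, and connectedness of the accumulation set (the paper cites Ostrowski's theorem for this) against finitely many fixed points. The local-minimizer argument via a neighborhood of radius below the smallest nonzero magnitude also matches the paper's.
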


To prove the first conclusion in Theorem \ref{th:regconvergence}, notice that in the regression setting,
$$ g(\bsbb^{(t+1)}, \bsbb^{(t)}) - f(\bsbb^{(t+1)}) = \rho\|\bsbb^{(t+1)}-\bsbb^{(t)}\|_2^2/2 - \|\bsbX(\bsbb^{(t+1)} - \bsbb^{(t)})\|_2^2/2,$$
and thus
$$f(\bsbb^{(t)}) - f(\bsbb^{(t+1)}) \ge \frac{\rho}{2}\|\bsbb^{(t+1)}-\bsbb^{(t)}\|_2^2 -\frac{1}{2}\|\bsbX(\bsbb^{(t+1)} - \bsbb^{(t)})\|_2^2, \ \ \forall t \ge 0.$$
Taking the summation from $t=0$ to $t=T$ and using the fact that $\|\bsbX(\bsbb^{(t+1)} - \bsbb^{(t)})\|_2^2 \le  \rho_+(2q)\|\bsbb^{(t+1)} - \bsbb^{(t)}\|_2^2,$ we have
$$  \frac{(\rho-\rho_+(2q))}{2} \sum_{t=0}^T \|\bsbb^{(t+1)} - \bsbb^{(t)} \|_2^2 \le f(\bsbb^{(0)})-f(\bsbb^{(T+1)}),
$$
which leads to
$$ \min_{0 \le t \le T} \|\bsbb^{(t+1)} - \bsbb^{(t)} \|_2^2 \le \frac{2}{(T+1)(\rho-\rho_+(2q))} f(\bsbb^{(0)}).
$$

Next, we consider the general problem and prove Theorem \ref{th:general convergence}, which implies the second part of Theorem \ref{th:regconvergence}.
From  $\inf_{\bsbxi, \bsby} l_0(\bsbxi;\bsby) > - \infty$, we assume without loss of generality that $l_0(\bsbxi;\bsby) \geq 0$.
Recall $l_0(\bsbX\bsbb;\bsby)$ is abbreviated as $l(\bsbb)$ and thus $\nabla l(\bsbb) = \bsbX^T \nabla l_0(\bsbX \bsbb)$ by the chain rule.

From the construction $g(\bsbb, \bsbb^{(t)}) = f(\bsbb) + (\rho \Breg_2 - \breg_{l_0})(\bsbb, \bsbb^{(t)}) $, we get
\begin{equation*}
        (\rho \Breg_2 - \breg_{l_0})(\bsbb^{(t+1)}, \bsbb^{(t)})+f(\bsbb^{(t+1)}) \leq     g(\bsbb^{(t)}, \bsbb^{(t)})=f(\bsbb^{(t)}).
\end{equation*}
When  $\rho \geq \rho^l_+(q, q)$, $(\rho \Breg_2 - \breg_{l_0})(\bsbb^{(t+1)}, \bsbb^{(t)}) \geq 0$, from which it follows that the sequence of  $f(\bsbb^{(t)})$ is non-increasing and convergent.
In fact, one just needs
\begin{align}
f(\bsbb^{(t+1)}) \leq g(\bsbb^{(t+1)}, \bsbb^{(t)})
\end{align}
to enjoy the function value convergence, which can be used for line search.

In addition, we obtain
$$
(\rho-\rho_{+}^{ l }(q, q))\Breg_2(\bsbb^{(t+1)}, \bsbb^{(t)})\le f(\bsbb^{(t)}) - f(\bsbb^{(t+1)}).
$$

Finally, let us study the limit points of the sequence of iterates. We first notice that $\{ \bsbb^{(t)}\}^\infty_{t=0}$ is uniformly bounded under  $\eta_0>0$, since
$$ \eta_0\|\bsbb^{(t)}\|_2^2/2 \leq f(\bsbb^{(t)})  \leq f(\bsbb^{(0)}).$$
From  $\lim_{t \to \infty} \{f(\bsbb^{(t)}) - f(\bsbb^{(t+1)})\} = 0$, $\lim_{t \to \infty} (\rho \Breg_2 - \breg_{l_0})(\bsbb^{(t+1)}, \bsbb^{(t)})= 0$, and because $\rho > \rho^l_+(q, q)$,
$$\lim_{t \to \infty} (\bsbb^{(t+1)} - \bsbb^{(t)}) = 0.$$
Let $\hat\bsbb$ be any limit point of $\bsbb^{(t)}$ satisfying $\hat\bsbb = \lim_{k \to \infty} \bsbb^{(j_k)}$ for some sequence $j_k$. Then
\begin{align*}
        0 = \lim_{k \to \infty} (\bsbb^{(j_k +1)} - \bsbb^{(j_k)} )&= \lim_{k \to \infty} \Theta^{\#}\{ \bsbb^{(j_k)}  -\nabla l(\bsbb^{(j_k)} )/ \rho; q, \eta_0/\rho\} - \hat\bsbb\\
        &= \Theta^{\#}\{\hat\bsbb  - \nabla l(\hat\bsbb) / \rho; q, \eta_0/\rho\} - \hat\bsbb,
\end{align*}
where the second equality is due to the continuity of $\nabla l( \bsbb)$ and the $\Theta^{\#}$-uniqueness assumption.

Define ${\hat{\mathcal{J}}} = \{j: \hat\beta_j \neq 0\}$. Then we get
\begin{equation*}
        \hat\bsbb_{\hat{\mathcal{J}}} = \hat\bsbb_{\hat{\mathcal{J}}} / (1 + \eta_0/\rho) - \bsbX_{\hat{\mathcal{J}}}^T\nabla l_0(\bsbX_{\hat{\mathcal{J}}} \hat\bsbb_{\hat{\mathcal{J}}};\bsby)/(\rho  + \eta_0),
\end{equation*}
or equivalently,
\begin{equation*}
        \eta_0 \hat\bsbb_{\hat{\mathcal{J}}} + \bsbX_{\hat{\mathcal{J}}}^T\nabla l_0(\bsbX_{\hat{\mathcal{J}}}  \hat\bsbb_{\hat{\mathcal{J}}} ;\bsby) = 0.
\end{equation*}
Therefore, given $\hat{\mathcal J}$, $\hat\bsbb_{\hat{\mathcal{J}}}$ is a stationary point of
\begin{equation} \label{stationary_point}
        \min_{\bsbg}l_0( \bsbX_{\hat{\mathcal{J}}}\bsbg;\bsby) + \eta_0 \|\bsbg\|_2^2/2.
\end{equation}
When $l_0(\cdot; \bsby)$ is convex and $\eta_0 > 0$, \eqref{stationary_point} is strongly convex and thus $\hat\bsbb_{\hat{\mathcal{J}}}$ is the unique minimizer.

By Ostrowski's convergence theorem, the set of limit points of $\bsbb^{(t)}$ must be connected. On the other hand, the set of all restricted optimal solutions $\{\hat\bsbb_{\hat{\mathcal{J}}}\}$ is finite, and so
$$\lim_{t \to \infty} \bsbb^{(t)} = \hat\bsbb.$$
Under $\|\hat{\mathcal J}\|_0=q$, it is easy to see that the neighborhood  $\{\bsbb:  \| \bsbb-\hat\bsbb\|_{\infty}<\epsilon,\;J(\bsbb) \leq q\}$  with  $0<\epsilon  <\min_{j\in \hat{\mathcal J}}  \lvert\hat    \beta_j\lvert $ is just $\{\bsbb:    \mathcal J(\bsbb) =\hat{\mathcal J}, | \beta_j- \hat\beta_j | <\epsilon, \forall j \in \hat{\mathcal J}\}$. The local optimality of $\hat\bsbb$ and support stability of $\bsbb^{(t)}$ thus follow.


\subsection{Proof of Theorem \ref{th:statistical_accuracy}}
We first introduce some lemmas that are helpful in proving the theorem.
The first is a generalization of Lemma 9 in \citep{She2017RRRR}.
\begin{lemma} \label{le:basic_l0}
        Let   $\mathcal J(\bsbB)$  denote the row support of matrix $\bsbB$ and define $J(\bsbB) = \| B\|_{2,0}= |\mathcal J(\bsbB)|$.  Consider the following problem with $0\le q \le p, \eta\ge 0$:
        \begin{equation*}
                \min_{\bsbB\in \mathbb R^{p\times m}}  \frac{1}{2} \|\bsbY - \bsbB\|_F^2 + \frac{\eta}{2}\|\bsbB\|_F^2 = l(\bsbB)  \quad \mbox{subject to} \ \|\bsbB\|_{2,0}\le q.
        \end{equation*}
        Then $\hat\bsbB = \vec{\Theta}^{\#}(\bsbY; q,\eta)$ (recall    $\vec\Theta^{\#}$ defined  in Section \ref{intro}) gives a globally optimal solution, and for any $\bsbB$ satisfying $J(\bsbB)\le s$, we have
        \begin{align}
                l(\bsbB) - l(\hat\bsbB) \ge (1-\mathcal L(\mathcal J, \hat{\mathcal  J}))(1+\eta) \frac{\| \hat\bsbB - \bsbB\|_F^2}{2}
        \end{align}
        where $\mathcal J = \mathcal J(\bsbB)$,   $\hat{\mathcal J} = \mathcal J(\hat \bsbB)$,  and  $ \mathcal L(\mathcal J, \hat{\mathcal  J}) = \sqrt{|\mathcal J \setminus \hat{\mathcal  J}| / | \hat{\mathcal  J} \setminus \mathcal J|}$. When $J(\hat\bsbB) = q   $ with $\vartheta(\equiv q/s)\ge 1$, $\mathcal L(\mathcal J, \hat{\mathcal  J})\le \sqrt{|\mathcal J| / |\hat{\mathcal J}|}\le 1 / \sqrt{\vartheta}$.
In the above statement,     $0/0  $ is understood as $1$. \end{lemma}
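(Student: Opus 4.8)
The plan is to turn the $\ell_2$-regularized row-thresholding problem into a plain best-$q$-row approximation, and then to account for the one cross term created by the nonconvexity of $\{\|\bsbB\|_{2,0}\le q\}$. Completing the square row-wise, I would write $l(\bsbB)=\frac{1+\eta}{2}\|\bsbB-\bsbV\|_F^2+\frac{\eta}{2(1+\eta)}\|\bsbY\|_F^2$ with $\bsbV:=\bsbY/(1+\eta)$, whose rows I denote $\+{v}_j$. Because the additive constant and the positive factor $(1+\eta)/2$ do not move the argmin, and because $\|\+{v}_j\|_2$ orders the rows exactly as the rows of $\bsbY$ do, minimizing $l$ subject to $\|\bsbB\|_{2,0}\le q$ amounts to keeping the $q$ rows of $\bsbV$ of largest norm and zeroing the rest, i.e.\ $\hat\bsbB=\vec{\Theta}^{\#}(\bsbY;q,\eta)$; the $\Theta^{\#}$-uniqueness assumption makes this selection unambiguous and hence proves global optimality.

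It then remains to establish the scale-free inequality $\|\bsbB-\bsbV\|_F^2-\|\hat\bsbB-\bsbV\|_F^2\ge(1-\mathcal L)\|\bsbB-\hat\bsbB\|_F^2$, since multiplying by $(1+\eta)/2$ recovers the stated bound. I would start from the identity $\|\bsbB-\bsbV\|_F^2-\|\hat\bsbB-\bsbV\|_F^2=\|\bsbB-\hat\bsbB\|_F^2+2\langle\bsbB-\hat\bsbB,\hat\bsbB-\bsbV\rangle$. Since $\hat\bsbB-\bsbV$ vanishes on $\hat{\mathcal J}$ and equals $-\+{v}_j$ off $\hat{\mathcal J}$, while the rows $\+{b}_j$ of $\bsbB$ are supported on $\mathcal J$, the cross term collapses to $-2\sum_{j\in\mathcal J\setminus\hat{\mathcal J}}\langle\+{b}_j,\+{v}_j\rangle$, so everything reduces to the single bound $2\sum_{j\in\mathcal J\setminus\hat{\mathcal J}}\langle\+{b}_j,\+{v}_j\rangle\le\mathcal L\,\|\bsbB-\hat\bsbB\|_F^2$.

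This last step is the crux. Set $P^2=\sum_{j\in\mathcal J\setminus\hat{\mathcal J}}\|\+{b}_j\|_2^2$ and $R^2=\sum_{k\in\hat{\mathcal J}\setminus\mathcal J}\|\+{v}_k\|_2^2$; discarding the nonnegative $\mathcal J\cap\hat{\mathcal J}$ contribution gives $\|\bsbB-\hat\bsbB\|_F^2\ge P^2+R^2$. Cauchy--Schwarz gives $2\sum_{j\in\mathcal J\setminus\hat{\mathcal J}}\langle\+{b}_j,\+{v}_j\rangle\le 2P\big(\sum_{j\in\mathcal J\setminus\hat{\mathcal J}}\|\+{v}_j\|_2^2\big)^{1/2}$, and the top-$q$ property of $\hat{\mathcal J}$ forces $\|\+{v}_j\|_2\le\|\+{v}_k\|_2$ for every unselected $j\in\mathcal J\setminus\hat{\mathcal J}$ and every selected $k\in\hat{\mathcal J}\setminus\mathcal J$; averaging over $k$ then yields $\sum_{j\in\mathcal J\setminus\hat{\mathcal J}}\|\+{v}_j\|_2^2\le(|\mathcal J\setminus\hat{\mathcal J}|/|\hat{\mathcal J}\setminus\mathcal J|)R^2=\mathcal L^2R^2$. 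Hence the cross term is at most $2P\mathcal L R\le\mathcal L(P^2+R^2)\le\mathcal L\|\bsbB-\hat\bsbB\|_F^2$, where the middle step is AM--GM. I expect this to be the main obstacle, because the exact exponent in $\mathcal L=\sqrt{|\mathcal J\setminus\hat{\mathcal J}|/|\hat{\mathcal J}\setminus\mathcal J|}$ only survives if the ordering constraint supplies precisely the factor $\mathcal L^2R^2$ and AM--GM then turns $2PR$ into $P^2+R^2$; a cruder pairing would lose the sharp constant.

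Finally I would derive the cardinality estimate. Writing $c=|\mathcal J\cap\hat{\mathcal J}|$, one has $\mathcal L^2=(|\mathcal J|-c)/(|\hat{\mathcal J}|-c)$, which is nonincreasing in $c$ whenever $|\mathcal J|\le|\hat{\mathcal J}|$ and therefore maximal at $c=0$, giving $\mathcal L\le\sqrt{|\mathcal J|/|\hat{\mathcal J}|}$; under $J(\hat\bsbB)=q$ and $J(\bsbB)\le s$ this is at most $\sqrt{s/q}=1/\sqrt{\vartheta}$. The $0/0\mapsto 1$ convention I would dispatch separately: $\hat{\mathcal J}\setminus\mathcal J=\emptyset$ together with $|\hat{\mathcal J}|=q\ge s\ge|\mathcal J|$ forces $\mathcal J=\hat{\mathcal J}$, in which case the cross-term sum is empty and the inequality holds as an identity with $\mathcal L=1$.
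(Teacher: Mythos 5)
Your proof is correct, including the sharp constant. At its core it uses the same two ingredients as the paper's argument: the partition of indices into $\mathcal J\cap\hat{\mathcal J}$, $\hat{\mathcal J}\setminus\mathcal J$, $\mathcal J\setminus\hat{\mathcal J}$, the ordering property of the top-$q$ selection (which is what produces the ratio $|\mathcal J\setminus\hat{\mathcal J}|/|\hat{\mathcal J}\setminus\mathcal J|$), and an AM--GM step on the cross term (which is where the square root in $\mathcal L$ comes from). Where you differ is in the algebraic organization: you first complete the square to rewrite $l(\bsbB)$ as $\frac{1+\eta}{2}\|\bsbB-\bsbV\|_F^2+\mathrm{const}$ with $\bsbV=\bsbY/(1+\eta)$, so that the whole lemma reduces via the polarization identity to bounding the single cross term $2\sum_{j\in\mathcal J\setminus\hat{\mathcal J}}\langle \+{b}_j,\+{v}_j\rangle$ by $\mathcal L\,\|\bsbB-\hat\bsbB\|_F^2$. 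The paper instead expands $l(\bsbB)-l(\hat\bsbB)$ and $\frac{1+\eta}{2}\|\bsbB-\hat\bsbB\|_F^2$ directly in terms of $\bsbY_{\mathcal J_i}$ and residuals $\bsbDelta_{\mathcal J_i}$, posits a contraction factor $K$, and solves a quadratic for the largest admissible $K=1-\sqrt{J_3/J_2}$; this makes explicit that the constant is the best obtainable from that chain of sufficient conditions, at the cost of heavier bookkeeping. Your version is cleaner and makes the provenance of $\sqrt{J_3/J_2}$ transparent (ordering gives the ratio, Cauchy--Schwarz plus $2PR\le P^2+R^2$ gives the root), and your separate handling of the $\mathcal J=\hat{\mathcal J}$ case and the monotonicity argument for $\mathcal L\le\sqrt{|\mathcal J|/|\hat{\mathcal J}|}$ both match the paper's conclusions. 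The only degenerate case you leave implicit is $\hat{\mathcal J}\setminus\mathcal J=\emptyset$ with $\mathcal J\setminus\hat{\mathcal J}\neq\emptyset$, where $\mathcal L=+\infty$ and the claimed inequality is vacuous; the paper notes this explicitly, but it costs nothing.
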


\begin{lemma} \label{le:stochastic_bound}
        There exist universal constants $A, C, c > 0$ such that for any $a > 0$, the following event
        \begin{equation} \label{stochastic_bound}
                \sup_{\bsbb_1, \bsbb_2} \langle \bsbeps, \bsbX(\bsbb_1 - \bsbb_2) \rangle - \frac{1}{2a}\|\bsbX (\bsbb_1 - \bsbb_2)\|_2^2- \frac{a}{2}A\sigma^2 \{J(\bsbb_1) \vee J(\bsbb_2) \} \log\Big\{ \frac{ep}{J(\bsbb_1) \vee J(\bsbb_2)} \Big\} \ge \frac{a}{2}\sigma^2t
        \end{equation}
        occurs with probability at most $C\exp{(-ct)}p^{-{c}A}$, where $t \geq 0$.
\end{lemma}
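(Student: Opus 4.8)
The plan is to treat \eqref{stochastic_bound} as a uniform empirical-process bound and control it by \emph{stratifying over sparsity levels} and, within each level, over the finitely many coordinate subspaces that $\bsbX(\bsbb_1-\bsbb_2)$ can occupy. The first observation is that the arbitrary factor $a$ is inessential: dividing the defining inequality by $a/2>0$ shows that, for fixed supports, the event is $a$-free. Concretely, fix $\mathcal J_1 = \mathcal J(\bsbb_1)$ and $\mathcal J_2 = \mathcal J(\bsbb_2)$, set $\mathcal U = \mathcal J_1 \cup \mathcal J_2$ and $k = |\mathcal J_1| \vee |\mathcal J_2|$ so that $|\mathcal U| \le 2k$, and let $V := \mathrm{col}(\bsbX_{\mathcal U})$, which has $\dim V \le 2k$. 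As $\bsbb_1-\bsbb_2$ ranges over all vectors supported on $\mathcal U$, the image $\bsbX(\bsbb_1-\bsbb_2)$ sweeps all of $V$; writing $\langle \bsbeps,\+u\rangle = \langle P_V\bsbeps,\+u\rangle$ for $\+u \in V$ and completing the square gives $\sup_{\+u\in V}\{\langle \bsbeps,\+u\rangle - \tfrac{1}{2a}\|\+u\|_2^2\} = \tfrac{a}{2}\|P_V\bsbeps\|_2^2$, attained at $\+u = aP_V\bsbeps$, where $P_V$ is the orthogonal projector onto $V$. Thus the contribution of the support pair $(\mathcal J_1,\mathcal J_2)$ to \eqref{stochastic_bound} is exactly the $a$-independent event $\|P_V\bsbeps\|_2^2 \ge A\sigma^2 k\log(ep/k) + \sigma^2 t$.

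Next I would bound this per-subspace event and union over supports at a fixed level $k$. Since $\bsbeps$ is sub-Gaussian with scale $\sigma$ (Definition \ref{def:subgauss}) and $P_V$ is a rank-$d$ projection with $d\le 2k$, the quadratic form obeys a chi-square-type concentration (Hanson--Wright): $\EE\|P_V\bsbeps\|_2^2 \lesssim \sigma^2 d$ and $\EP(\|P_V\bsbeps\|_2^2 \ge C_1\sigma^2 d + \sigma^2 v) \le \exp(-c_1 v)$ for all $v\ge 0$. Using $d \le 2k \le 2k\log(ep/k)$ together with the choice $A \ge 4C_1$, the threshold $A\sigma^2 k\log(ep/k)+\sigma^2 t$ dominates the mean term $C_1\sigma^2 d$, leaving slack $v \ge \tfrac{A}{2}k\log(ep/k)+t$; hence each fixed subspace is violated with probability at most $\exp(-c_1\tfrac{A}{2}k\log(ep/k))\exp(-c_1 t)$. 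The number of distinct subspaces $\mathrm{col}(\bsbX_{\mathcal U})$ with $|\mathcal U|\le 2k$ is at most $\sum_{r\le 2k}\binom{p}{r} \le (ep/2k)^{2k} \le \exp(2k\log(ep/k))$, so a union bound gives the level-$k$ probability $\le \exp\{(2 - c_1 A/2)\,k\log(ep/k)\}\exp(-c_1 t)$.

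Finally I would peel over $k$. Taking $A$ large enough that $c_1A/2 - 2 \ge c_1A/4$, the level-$k$ bound becomes $\exp(-\tfrac{c_1A}{4}k\log(ep/k))\exp(-c_1 t)$; summing over $k=1,\dots,p$ (the $k=0$ term is degenerate and vanishes for $t\ge 0$) and using $k\log(ep/k)\ge \log(ep)\ge \log p$ with the rapid growth of $k\mapsto k\log(ep/k)$ collapses the geometric-type series to a constant multiple of $p^{-cA}$, yielding the claimed $C\exp(-ct)\,p^{-cA}$ after relabeling constants. The \emph{main obstacle} is the calibration of $A$: the combinatorial entropy $2k\log(ep/(2k))$ generated by counting sparse supports must be strictly dominated by the deviation exponent $c_1Ak\log(ep/k)/2$, and it is precisely this domination that forces $A$ to be a sufficiently large universal constant and produces the $p^{-cA}$ scaling. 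A secondary technical point is keeping the sub-Gaussian quadratic-form tail uniform in the rank $d\le 2k$ so that its mean never overtakes the penalty, which is secured by $d \le 2k\log(ep/k)$ and the monotonicity of $k\mapsto k\log(ep/k)$ on $[1,p]$.
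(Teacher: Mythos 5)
Your proof is correct, but it follows a genuinely different route from the paper's. The paper's concern is that the combined support of $\bsbb_1-\bsbb_2$ can have size $J(\bsbb_1)+J(\bsbb_2)>p$, and it resolves this with an orthogonal decomposition $\bsbX\bsbDelta=\mathcal P_{\mathcal J_1}\bsbX\bsbDelta+\mathcal P_{\mathcal J_1}^{\perp}\mathcal P_{\mathcal J_2}\bsbX\bsbDelta$, bounding the two inner products separately as suprema of the linear process $\langle\bsbeps,\cdot\rangle$ over the unit balls $\Gamma_J$ of at-most-$J$-column subspaces (citing auxiliary lemmas from earlier work for those suprema), and then recombining via the Pythagorean identity $\|\mathcal P_{\mathcal J_1}\bsbX\bsbDelta\|_2^2+\|\mathcal P_{\mathcal J_1}^{\perp}\mathcal P_{\mathcal J_2}\bsbX\bsbDelta\|_2^2=\|\bsbX\bsbDelta\|_2^2$ and the monotonicity of $P_o$. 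You instead stratify once over the union support $\mathcal U=\mathcal J_1\cup\mathcal J_2$, use the exact completion of squares $\sup_{\+u\in V}\{\langle\bsbeps,\+u\rangle-\tfrac{1}{2a}\|\+u\|_2^2\}=\tfrac{a}{2}\|P_V\bsbeps\|_2^2$ to reduce each stratum to an $a$-free quadratic-form tail, and then apply Hanson--Wright plus a subset count and peeling over $k$. Your handling of the $J_1+J_2>p$ issue is simply to note that $\dim V\le 2k$ always and that both the dimension and the entropy $2k\log(ep/(2k))$ are dominated by $A\,k\log(ep/k)$ for $A$ large, so the factor-of-two inflation is absorbed into the constant; this makes the two-term decomposition unnecessary. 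What each approach buys: the paper's version keeps each projection's rank at $J_i$ and reuses existing lemmas, producing separate $P_o(J_1)$ and $P_o(J_2)$ penalties that are merged only at the end, whereas yours is more self-contained and makes the role of the constant $A$ (entropy versus deviation exponent) explicit. The only shared blemish is the degenerate pair $\bsbb_1=\bsbb_2=\bsb{0}$ at $t=0$, which the paper also sets aside by declaring $R_0=0$; your remark that the $k=0$ stratum is degenerate mirrors that convention.
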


First, by definition, it is easy to show that  $\hat \bsbb$ satisfies
\begin{equation*}
        \hat\bsbb \in \argmin_{\bsbb} g(\bsbb, \hat\bsbb),
\end{equation*}
where $g(\bsbb, \bsbb^-) = \lVert\bsby-\bsbX\bsbb^{-}\rVert_{2}^{2}/2+\langle\bsbX^{T}(\bsbX\bsbb^{-}-\bsby),\bsbb-\bsbb^{-}\rangle+ {\rho} \lVert\bsbb-\bsbb^{-}\rVert_{2}^{2} /2 + \eta_0\|\bsbb\|_2^2/2$.
By $g(\hat\bsbb, \hat\bsbb) \le g(\bsbb^*, \hat\bsbb)$ and Lemma \ref{le:basic_l0},
\begin{equation*}
        \begin{split}
                \frac{1}{2}\|\bsbb^* - \hat \bsbb + \frac{1}{\rho}\bsbX^T(\bsbX\hat \bsbb - \bsby)\|_2^2 - \frac{1}{2}\|\frac{1}{\rho}\bsbX^T(\bsbX\hat \bsbb - \bsby)\|_2^2 + \frac{\eta_0}{2\rho}\|\bsbb^*\|_2^2 - \frac{\eta_0}{2\rho}\|\hat\bsbb\|_2^2 \\ \geq (1+\frac{\eta_0}{\rho})\frac{1 - \mathcal L(\mathcal J^*, \mathcal {\hat J})}{2} \|\hat \bsbb - \bsbb^*\|_2^2,
        \end{split}
\end{equation*}
where $\mathcal J^* = \mathcal J(\bsbb^*)$,  $\mathcal {\hat J} = \mathcal J(\hat \bsbb)$, and $\mathcal L(\mathcal J^*, \mathcal {\hat J}) \leq 1 / \sqrt{\vartheta}$.

It follows from the model $\bsby = \bsbX \bsbb^* + \bsbeps$ that
\begin{equation*}
        \|\bsbX \hat\bsbb - \bsbX \bsbb^*\|_2^2 + \frac{\eta_0}{2}\|\hat\bsbb\|_2^2 \leq \frac{\rho - (\sqrt{\vartheta}-1)\eta_0}{2\sqrt{\vartheta}} \|\hat\bsbb - \bsbb^*\|_2^2 +  \frac{\eta_0}{2}\|\bsbb^*\|_2^2 +  \langle\bsbX \hat\bsbb - \bsbX\bsbb^*, \bsbeps\rangle,
\end{equation*}
which gives
\begin{align}
        &\|\bsbX \hat\bsbb - \bsbX\bsbb^*\|_2^2 + \frac{\eta_0}{2}\|\hat\bsbb -\bsbb^*\|_2^2 \nonumber \\
        \le& \frac{\rho - (\sqrt{\vartheta}-1)\eta_0}{2\sqrt{\vartheta}}\|\hat\bsbb -  \bsbb^*\|_2^2 +  \eta_0\langle \hat\bsbb - \bsbb^*,  - \bsbb^*\rangle + \langle \bsbX\hat\bsbb - \bsbX \bsbb^*,  \bsbeps\rangle \nonumber \\ \label{eq:ineq_optimality}
        \le& \frac{\rho - (\sqrt{\vartheta}-1)\eta_0}{2\sqrt{\vartheta}}\|\hat\bsbb -  \bsbb^*\|_2^2 + \frac{b\eta_0}{2} \|\hat\bsbb-\bsbb^*\|_2^2 + \frac{\eta_0}{2b} \|\bsbb^*\|_2^2 + \langle \bsbX\hat\bsbb - \bsbX \bsbb^*,  \bsbeps\rangle
\end{align}
for any $b>0$.
Applying Lemma \ref{le:stochastic_bound} with $t = 0$, we can show that for any $a>0$, the following event
\begin{equation} \label{eq:stochastic_bound}
        \langle \bsbX  \hat\bsbb - \bsbX \bsbb^*, \bsbeps \rangle \leq \frac{1}{2a}\|\bsbX \hat\bsbb - \bsbX \bsbb^*\|_2^2 + \frac{a}{2}A\sigma^2\vartheta s\log{\frac{ep}{\vartheta s}}
\end{equation}
occurs with probability at least $1-Cp^{-c}$,  where $A, C, c > 0$ are some universal constants.

Combining \eqref{eq:ineq_optimality},  \eqref{eq:stochastic_bound} and  the regularity condition \eqref{eq:assumption_r0} yields
$$
\frac{\eta_0(\linrateparam - b)}{2} \|\hat\bsbb-\bsbb^*\|_2^2  + \Big(\frac{\delta}{2} - \frac{1}{2a}\Big)\|\bsbX \hat\bsbb - \bsbX\bsbb^*\|_2^2 \leq  \frac{\eta_0}{2b}\|\bsbb^*\|_2^2 + \frac{a}{2}A\sigma^2\vartheta s\log{\frac{ep}{\vartheta s}}$$
with probability at least $1- Cp^{-c}$.
By choosing $a = 2/\delta$ and $b = \linrateparam/2$, we have the bound for the prediction error as
\begin{align*}
        \|\bsbX \hat\bsbb - \bsbX \bsbb^*\|_2^2 + \frac{\eta_0\linrateparam}{\delta} \|\hat\bsbb-\bsbb^*\|_2^2
        &\le \frac{4\eta_0}{\delta\linrateparam}\|\bsbb^*\|_2^2 + \frac{4}{\delta^2}A\sigma^2\vartheta s\log{\frac{ep}{\vartheta s}} \\
        &\lesssim \frac{\eta_0}{\delta\linrateparam} \|\bsbb^*\|_2^2 + \frac{1}{\delta^2}\sigma^2\vartheta s\log{\frac{ep}{\vartheta s}},
\end{align*}
which holds with probability at least $1- Cp^{-c}$.\\


\noindent\textbf{Proof of Lemma \ref{le:basic_l0}}
In this proof, given a matrix $\bsbB\in \mathbb R^{p\times m}$ and an index set $\mathcal I \subset [p]$, we use $\bsbB_{\mathcal I}$ to denote the submatrix of $\bsbB$ by extracting its rows indexed by $\mathcal I$.
Let $\mathcal J_1 = \mathcal J \cap \hat{\mathcal J}$, $\mathcal J_2 =  \hat{\mathcal J}\setminus \mathcal J$ and $\mathcal J_3 = \mathcal J \setminus \hat{\mathcal J}$. Then $\mathcal J= \mathcal J_1 \cup \mathcal J_3$ and $\hat {\mathcal J }=  \mathcal J_1 \cup \mathcal J_2$.

It can be easily shown that $\hat\bsbB_{\mathcal J_1}= \bsbY_{{\mathcal J}_1}/(1+\eta)$ and $\hat\bsbB_{\mathcal J_2}= \bsbY_{{\mathcal J}_2}/(1+\eta)$. By writing $\bsbB_{\mathcal J_1} = \bsbY_{{\mathcal J}_1}/(1+\eta)+\bsbDelta_{{\mathcal J}_1}$ and  $\bsbB_{\mathcal J_3} = \bsbY_{{\mathcal J}_3}/(1+\eta)+\bsbDelta_{{\mathcal J}_3}$, we have
\begin{align*}
 l (\bsbB ) - l (\hat\bsbB ) & = \frac{1+\eta}{2} \|\bsbDelta_{{\mathcal J}_1}\|_F^2 + \frac{1}{2(1+\eta)} \|\bsbY_{{\mathcal J}_2}\|_F^2   +\frac{1+\eta}{2} \|\bsbDelta_{{\mathcal J}_3}\|_F^2- \frac{1}{2(1+\eta)} \|\bsbY_{{\mathcal J}_3}\|_F^2,\\
\frac{1+\eta}{2}\| \hat\bsbB - \bsbB\|_F^2 &=\frac{1+\eta}{2} \|\bsbDelta_{{\mathcal J}_1}\|_F^2+\frac{1}{2(1+\eta)} \|\bsbY_{{\mathcal J}_2}\|_F^2 + \frac{1+\eta}{2} \|\frac{1}{1+\eta}\bsbY_{{\mathcal J}_3}+\bsbDelta_{{\mathcal J}_3}\|_F^2.
\end{align*}
Let $K\le 1$ satisfy
$$ l (\bsbB ) - l (\hat\bsbB ) \ge \frac{K}{2}(1+\eta)\| \hat\bsbB - \bsbB\|_F^2,
$$
which is implied by
\begin{equation} \label{eq:middle_ineq}
        \begin{split}
                &\frac{1}{2(1+\eta)} \|\bsbY_{{\mathcal J}_2}\|_F^2+\frac{1+\eta}{2} \|\bsbDelta_{{\mathcal J}_3}\|_F^2- \frac{1}{2(1+\eta)} \|\bsbY_{{\mathcal J}_3}\|_F^2 \\ \ge \  & \frac{K}{2(1+\eta)} \|\bsbY_{{\mathcal J}_2}\|_F^2+ \frac{K(1+\eta)}{2} \|\frac{1}{1+\eta}\bsbY_{{\mathcal J}_3}+\bsbDelta_{{\mathcal J}_3}\|_F^2.
        \end{split}
\end{equation}
\eqref{eq:middle_ineq} is equivalent to
\begin{align} \label{desiredineq}
\begin{split}&        (1-K)\|\bsbY_{{\mathcal J}_2}\|_F^2+ (1+\eta)^2\|\bsbDelta_{{\mathcal J}_3}\|_F^2   \ge (1+\eta)^2 K  \|\frac{1}{1+\eta}\bsbY_{{\mathcal J}_3}+\bsbDelta_{{\mathcal J}_3}\|_F^2 + \|\bsbY_{{\mathcal J}_3}\|_F^2.
\end{split}\end{align}
By construction, $\| \bsby_i \|_2 \ge \| \bsby_j \|_2$ for any $ i \in {\mathcal J}_2$ and $ j \in {\mathcal J}_3$. Thus $\|\bsbY_{{\mathcal J}_2}\|_F^2  \ge J_2 \|\bsbY_{{\mathcal J}_3}\|_F^2/J_3$, from which it follows that \eqref{desiredineq} is implied by
\begin{align*}
&         \{(1-K) ({J_2}/{J_3})-(1+K)\} \|\bsbY_{{\mathcal J}_3}\|_F^2+  (1-K) (1+\eta)^2 \|\bsbDelta_{{\mathcal J}_3}\|_F^2  \ge     2 K(1+\eta)\langle \bsbY_{{\mathcal J}_3}, \bsbDelta_{{\mathcal J}_3}\rangle.
\end{align*}
Therefore, restricting $K$ to $ (1+K)/(1-K)\le   {J_2}/{J_3}  $ or $K\le (J_2 - J_3)/(J_2 + J_3) \le 1$,  the largest possible $K$ should satisfy
$$ \{(1-K)({J_2}/{J_3})-(1+K)\}\cdot ( {1-K} )  = |K|^2
$$
or
$(1-K)^2 = J_3/  J_2$, or $K = 1- \sqrt{J_3/J_2} (\le (J_2 - J_3)/(J_2 + J_3))$. This gives
$$\mathcal L = 1-K = ({ {J_3}/{J_2}} )^{1/2}.
$$
 Note that when $\mathcal J_2=  \emptyset$,  $K$ can take $-\infty$ for   $\mathcal J_3 \neq   \emptyset$ and    $0$ for $  \mathcal J_3 =   \emptyset$ to ensure \eqref{desiredineq}.

Now assume   $J(\hat\bsbB) = q  $ with $\vartheta\ge 1$. If $\mathcal J_2 \ne   \emptyset$, $\mathcal L\le \sqrt{({J_3+J_1})/({J_2+J_1})}= \sqrt{{J}/{\hat J}} \le 1/\sqrt{\vartheta} $. Otherwise, we must have $\mathcal J_3 = \emptyset$, $\mathcal J = \hat{\mathcal J}$ and $\vartheta=1$.
The proof is complete.

The lemma can be used in the analysis of $\ell_0$-constrained (elementwise) sparsity pursuit, as well as   group variable selection (cf. Section \ref{subsec:gbsense}).
\\

\noindent\textbf{Proof of Lemma \ref{le:stochastic_bound}}
Given a matrix $\bsbA$, denote by $\mathcal P_{\bsbA}$ the orthogonal projection onto its range, and $\mathcal P_{\bsbA}^{\perp}$ its orthogonal complement.
In the proof, $\mathcal P_{\mathcal J}$ is used as a short notation for $\mathcal P_{\bsbX_{\mathcal J}}$ in the proof for any $J \subset [p]$.
Let $\mathcal{J}_1 = \mathcal J(\bsbb_1), \mathcal{J}_2 = \mathcal J(\bsbb_2), J_1 = |\mathcal{J}_1|, J_2 = |\mathcal{J}_2|$.

First, note that the term $\{J(\bsbb_1) \vee J(\bsbb_2) \} \log[ep/\{J(\bsbb_1) \vee J(\bsbb_2)\}]$ is used in \eqref{stochastic_bound}, instead of  $J(\bsbb_1 - \bsbb_2) \log\{ep/J(\bsbb_1\allowbreak - \bsbb_2)\}$, and although $J(\bsbb_1 - \bsbb_2) \le J(\bsbb_1) + J(\bsbb_2)$, $J(\bsbb_1) + J(\bsbb_2)$ can be larger than $p$. To tackle the issue, we employ a decomposition trick
\begin{align*}
        \bsbX\bsbb_1 - \bsbX\bsbb_2 &= \mathcal P_{\mathcal{J}_1} \bsbX(\bsbb_1 - \bsbb_2) +  \mathcal P_{\mathcal{J}_1}^{\perp} \bsbX(\bsbb_1 - \bsbb_2) \\
        &= \mathcal P_{\mathcal{J}_1} \bsbX(\bsbb_1 - \bsbb_2) + \mathcal P_{\mathcal{J}_1}^{\perp} \mathcal P_{\mathcal{J}_2} \bsbX(\bsbb_1 - \bsbb_2).
\end{align*}
Let $\bsbDelta = \bsbb_1 - \bsbb_2$. Then
\begin{equation} \label{stocha_terms}
        \langle \bsbeps, \bsbX\bsbDelta \rangle = \langle \bsbeps, P_{\mathcal{J}_1} \bsbX \bsbDelta \rangle
        + \langle \bsbeps, \mathcal{P}_{\mathcal{J}_1}^{\perp} \mathcal{P}_{\mathcal{J}_2} \bsbX \bsbDelta \rangle.
\end{equation}

Let us bound the first term on the right-hand side of \eqref{stocha_terms}. Define $P_o(J) = \sigma^2J \log(ep/J)$ for $0 \le J \le p$, which is an increasing function, and
$ \Gamma_{J} = \{\bsba \in \mathbb{R}^p:  \|\bsba\|_2 \le 1, \bsba \in \mathcal P_{\mathcal J} \ \mbox{for some} \ \mathcal J\subset [p],  |\mathcal J|\le  J\}$. Then for any $a, b > 0$
\begin{align*}
        &\langle \bsbeps, \mathcal P_{\mathcal J_1} \bsbX\bsbDelta\rangle  - \frac{1}{a} \| \mathcal P_{\mathcal J_1}\bsbX\bsbDelta\|_2^2 - bL P_o (J_1)  \\
        \le\,& \| \mathcal P_{\mathcal{J}_1} \bsbX\bsbDelta\|_2 \langle \bsbeps, \frac{ \mathcal P_{\mathcal{J}_1} \bsbX\bsbDelta}{\| \mathcal P_{\mathcal{J}_1} \bsbX\bsbDelta \|_2}\rangle - 2 \| \mathcal P_{\mathcal{J}_1} \bsbX\bsbDelta\|_2 \sqrt {\frac{b}{a}L P_o({ J_1} )}  \\
        \le\,&  \frac{1}{a} \| \mathcal P_{\mathcal J_1} \bsbX\bsbDelta \|^2_2 + \frac{a}{4}  \sup_{J_1\le p} \sup_{\bsbDelta\in\Gamma_{J_1}} \big\{\langle\bsbeps, \bsbDelta\rangle - 2\sqrt{ ({b}/{a})LP_{o} (J_1)} \big\}_+^2 \\
        \equiv\,&  \frac{1}{a} \| \mathcal P_{\mathcal J_1} \bsbX\bsbDelta \|^2_2 + \frac{a}{4}  \sup_{J_1\le p} R_{J_1}^2,
\end{align*}
where $R_{J_1} := \sup_{\bsbDelta\in\Gamma_{J_1}}\big\{\langle\bsbeps, \bsbDelta\rangle - 2\sqrt{ ({b}/{a})LP_{o} (J_1)} \big\}_+$ with $L$  a sufficiently large constant.
When $J_1=0$, $R_{J_1} = 0 $. When $J_1\ge 1$, for any $t\ge 0$,
if $4b/a$ is a constant greater than $1$, we have
\begin{equation} \label{Rbound}
        \begin{split}
                &\mathbb P(\sup_{1\le J_1 \le p}R_{J_1} \ge t\sigma)\\
                \le\,&  \sum_{J_1=1}^{p} \mathbb P\bigg(\sup_{\bsbDelta\in\Gamma_{J_1}}\langle\bsbeps, \bsbDelta\rangle - \sqrt{LP_o (J_1)} \ge t\sigma + 2\sqrt{\frac{b}{a}LP_o (J_1)} - \sqrt{LP_o(J_1)}\bigg)\\
                \le\,&   C\exp(-ct^2)\sum_{J_1=1}^{p} \exp[-c(2\sqrt{b/a}-1)^2 LP_o(J_1)/\sigma^2]\\
                \le\,&  C\exp(-ct^2)\exp(-c L\log p)\sum_{J_1=1}^p \exp(-cLJ_1)\\
                \le\, & C \exp(-ct^2 ) p^{-cL}.
        \end{split}
\end{equation}
The second inequality is due to Lemma 6 of \cite{She2016}, and we used  $ J\log(ep/J) \ge  J  +\log p $ for any $J \in [p]$ in the third inequality.
Therefore, for any $a, b >0,$ $4b>a$ and $t \ge 0$, we have
\begin{equation} \label{stocha_1}
        \EP \Big\{\langle \bsbeps, \mathcal P_{\mathcal J_1} \bsbX\bsbDelta\rangle - \frac{2}{a} \| \mathcal P_{\mathcal J_1}\bsbX\bsbDelta\|_2^2 - bL P_o (J_1)  \ge \frac{a}{4}t\sigma^2 \Big\}
        \le  C \exp(-ct) p^{-Lc}.
\end{equation}

Similarly, for the second term in \eqref{stocha_terms}, we can use Lemma 7 of \cite{SheetalPIQ} to prove that for any $t \ge 0$,
\begin{equation} \label{stocha_2}
        \EP \Big[\langle \bsbeps, \mathcal P_{\mathcal{J}_1}^{\perp} \mathcal P_{\mathcal{J}_2} \bsbX\bsbDelta\rangle - \frac{2}{a} \| \mathcal P_{\mathcal{J}_1}^{\perp} \mathcal P_{\mathcal{J}_2} \bsbX\bsbDelta\|_2^2 - bL \{P_o (J_1) + P_o (J_2)\}\ge \frac{a}{4}t\sigma^2 \Big]
        \le  C \exp(-ct ) p^{-Lc}.
\end{equation}

Combining \eqref{stocha_1}, \eqref{stocha_2} and using the fact that
$\|\mathcal{P}_{\mathcal J_1} \bsbX\bsbDelta\|_2^2 + \|\mathcal{P}_{\mathcal J_1}^{\perp} \mathcal \mathcal{P}_{\mathcal{J}_2} \bsbX\bsbDelta\|_2^2 = \|\bsbX \bsbDelta\|_2^2,$
we get for any $a, b >0,$ $4b>a$ and $t \ge 0$,
\begin{equation} \label{stocha_all}
        \EP \Big[\langle \bsbeps, \bsbX\bsbDelta\rangle - \frac{4}{a} \|\bsbX\bsbDelta\|_2^2 - 3bL \{P_o (J_1) \vee P_o (J_2)\}  \ge \frac{a}{2}t\sigma^2 \Big]
        \le  C \exp(-ct ) p^{-Lc}.
\end{equation}
Finally, using the increasing property of $P_o(J)$ for $ J \in [0, p]$, we have $P_o (J_1) \vee P_o (J_2) \le (J_1 \vee J_2) \log\{ep/(J_1 \vee J_2)\}$. A reparameterization of \eqref{stocha_all} gives the conclusion.


\subsection{Proof of Theorem \ref{support_recovery}}
From the proof of Theorem \ref{th:statistical_accuracy}, we get with probability $1-Cp^{-c}$,
\begin{align*}
        \|\bsbX \hat\bsbb - \bsbX\bsbb^*\|_2^2 + \frac{\eta_0(1-b)}{2}\|\hat\bsbb-\bsbb^*\|_2^2
        \le \, &\frac{\rho - (\sqrt{\vartheta}-1)\eta_0}{2\sqrt{\vartheta}}\|\hat\bsbb -  \bsbb^*\|_2^2 + \frac{\eta_0}{2b} \|\bsbb^*\|_2^2 + \\
        &\frac{1}{2a}\|\bsbX \hat\bsbb - \bsbX \bsbb^*\|_2^2 + \frac{a}{2}A\sigma^2\vartheta s\log{\frac{ep}{\vartheta s}},
\end{align*}
which gives
\begin{align*}
        \|\bsbX \hat\bsbb - \bsbX\bsbb^*\|_2^2 - \frac{\eta_0b}{2}\|\hat\bsbb-\bsbb^*\|_2^2
        \le \, &\frac{\rho - (2\sqrt{\vartheta}-1)\eta_0}{2\sqrt{\vartheta}}\|\hat\bsbb -  \bsbb^*\|_2^2 + \frac{\eta_0}{2b} \|\bsbb^*\|_2^2 + \\
        &\frac{\rho_+((1+\vartheta)s)}{2a}\| \hat\bsbb - \bsbb^*\|_2^2 + \frac{a}{2}A\sigma^2 \vartheta s\log{\frac{ep}{\vartheta s}}.
\end{align*}
Under the regularity condition \eqref{eq:assumption_r0_new}, choosing $a = 2/\delta$ and $b = \delta\rho_+( (1+\vartheta)s )/(4\eta_0)$ give \eqref{estimation_error_bound}. (The result applies to $\eta_0 = 0$ as well.)

To show the second result, note that from Theorem 1, the fixed-point solution $\hat\bsbb$ must satisfy $\hat\bsbb = \Theta^{\#}\{\hat\bsbb - \bsbX^T \nabla l_0(\bsbX\hat\bsbb; \bsby) /\rho; q, \eta_0/\rho \}$, which means
\begin{align*}
        &\Big\|\hat\bsbb (1+\eta_0/\rho)-\hat\bsbb + \frac{1}{\rho} \bsbX^T \nabla l_0(\bsbX\hat\bsbb) \Big\|_{\infty}
        \le (1+\eta_0/\rho) \min_{j \in \hat{\mathcal{J}}} |\hat\beta_j| \\
        \implies
        &\Big\|\eta_0\hat\bsbb + \bsbX^T (\nabla l_0(\bsbX\hat\bsbb) - \nabla l_0(\bsbX\bsbb^*)) -\bsbX^T\bsbeps \Big\|_{\infty}
        \le (\rho+\eta_0) \min_{j \in \hat{\mathcal{J}}} |\hat\beta_j| \\
        \implies
        &\left\|  \bsbX^T (\nabla l_0(\bsbX\hat\bsbb) - \nabla l_0(\bsbX\bsbb^*)) + \eta_0(\hat\bsbb - \bsbb^*) \right\|_{\infty}
        \le \|\bsbX^T\bsbeps\|_{\infty} + \eta_0\|\bsbb^*\|_{\infty}  + (\rho+\eta_0)\min_{j \in \hat{\mathcal{J}}} |\hat\beta_j|.
\end{align*}
Next, we introduce a lemma.
\begin{lemma} \label{le:infnormconvt}
       Let $\tilde \bsbb, \bsbb\in \mathbb R^p$ satisfying $\| \tilde \bsbb\|_0 = q > s \ge \| \bsbb\|_0$, and for short, denote $ \mathcal J(\tilde \bsbb)$ and  $\mathcal J(\tilde \bsbb)$ by $\tilde {\mathcal J}$ and $   \mathcal J$, respectively. Then
\begin{align}
& \min_{j \in \tilde {\mathcal J} } | \tilde \bsbb_j|\le \min_{j \in \tilde {\mathcal J} \setminus \mathcal J  } | \tilde \bsbb_j| \le \frac{  \| (\tilde \bsbb  -\bsbb )_{\tilde {\mathcal J} \setminus \mathcal J  }\|_{2}}{\sqrt {|\tilde {\mathcal J} \setminus \mathcal J |}}\le \frac{  \| (\tilde \bsbb  -\bsbb )_{\tilde {\mathcal J} \setminus \mathcal J  }\|_{2}}{\sqrt {q-s}}  \le \frac{\|  \tilde \bsbb  -\bsbb  \|_{2}}{\sqrt {q-s}} \label{eq:infnormconvt1}\\
& \min_{j \in \tilde {\mathcal J} } | \tilde \bsbb_j|\le \max_{j \in \tilde {\mathcal J} \setminus \mathcal J  } | \tilde \bsbb_j| =  \| (\tilde \bsbb  -\bsbb )_{\tilde {\mathcal J} \setminus \mathcal J  }\|_{\infty} \le \|  \tilde \bsbb  -\bsbb  \|_{\infty.}  \label{eq:infnormconvt2}
\end{align}
\end{lemma}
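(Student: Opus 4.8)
The plan is to exploit a single structural identity that governs both displays: since $\bsbb$ is supported on $\mathcal J$, every coordinate of $\bsbb$ indexed by $\tilde{\mathcal J}\setminus\mathcal J$ vanishes, so that $(\tilde\bsbb - \bsbb)_j = \tilde\bsbb_j$ for all $j \in \tilde{\mathcal J}\setminus\mathcal J$. This identity lets me replace the entries of the difference $\tilde\bsbb - \bsbb$ on $\tilde{\mathcal J}\setminus\mathcal J$ by the entries of $\tilde\bsbb$ itself, which is exactly what the middle quantities in both \eqref{eq:infnormconvt1} and \eqref{eq:infnormconvt2} require. I would also record at the outset the cardinality fact $|\tilde{\mathcal J}\setminus\mathcal J| = |\tilde{\mathcal J}| - |\tilde{\mathcal J}\cap\mathcal J| \ge q - s \ge 1$, using $|\tilde{\mathcal J}| = q$, $|\mathcal J|\le s$, and $q > s$; this guarantees that $\tilde{\mathcal J}\setminus\mathcal J$ is nonempty, so all the minima and maxima below are well defined.

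For the first chain \eqref{eq:infnormconvt1}, I would proceed left to right. The first inequality is monotonicity of the minimum under set restriction, since $\tilde{\mathcal J}\setminus\mathcal J \subseteq \tilde{\mathcal J}$. The second inequality is the elementary bound that a minimum is at most the root-mean-square: writing $m = \min_{j\in\tilde{\mathcal J}\setminus\mathcal J}|\tilde\bsbb_j|$ and using the identity above,
\begin{equation*}
|\tilde{\mathcal J}\setminus\mathcal J|\,m^2 \le \sum_{j\in\tilde{\mathcal J}\setminus\mathcal J}|\tilde\bsbb_j|^2 = \|(\tilde\bsbb-\bsbb)_{\tilde{\mathcal J}\setminus\mathcal J}\|_2^2,
\end{equation*}
and taking square roots. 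The third inequality follows because enlarging the denominator can only shrink the ratio, and $\sqrt{|\tilde{\mathcal J}\setminus\mathcal J|}\ge\sqrt{q-s}$ by the cardinality fact. The final inequality is just that restricting a vector to a coordinate subset does not increase its $\ell_2$-norm.

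The second chain \eqref{eq:infnormconvt2} is handled analogously and even more directly. The first inequality combines the same set-restriction monotonicity of the minimum with the trivial fact $\min_{j\in\tilde{\mathcal J}\setminus\mathcal J}|\tilde\bsbb_j| \le \max_{j\in\tilde{\mathcal J}\setminus\mathcal J}|\tilde\bsbb_j|$ over the nonempty set $\tilde{\mathcal J}\setminus\mathcal J$. The equality is immediate from the structural identity, since $\|(\tilde\bsbb-\bsbb)_{\tilde{\mathcal J}\setminus\mathcal J}\|_\infty = \max_{j\in\tilde{\mathcal J}\setminus\mathcal J}|\tilde\bsbb_j|$, and the last inequality is again monotonicity of the $\ell_\infty$-norm under coordinate restriction. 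There is no serious obstacle here: the entire lemma reduces to the vanishing of $\bsbb$ off its support together with a cardinality count, so the only points requiring care are to keep track of the direction of each inequality---in particular that passing from $\sqrt{|\tilde{\mathcal J}\setminus\mathcal J|}$ to the smaller $\sqrt{q-s}$ \emph{relaxes} rather than tightens the bound---and to confirm nonemptiness of $\tilde{\mathcal J}\setminus\mathcal J$ before invoking any minimum or maximum.
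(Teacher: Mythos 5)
Your proof is correct, and since the paper explicitly omits the argument as "simple," there is nothing to compare against: the support-vanishing identity $(\tilde\bsbb-\bsbb)_j=\tilde\bsbb_j$ on $\tilde{\mathcal J}\setminus\mathcal J$ together with the cardinality bound $|\tilde{\mathcal J}\setminus\mathcal J|\ge q-s\ge 1$ is precisely the intended elementary route. Every step in both chains checks out, including the direction of the inequality when passing from $\sqrt{|\tilde{\mathcal J}\setminus\mathcal J|}$ to $\sqrt{q-s}$.
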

The proof is simple and omitted.
Now, combining the regularity condition \eqref{eq:assumption_infty} and \eqref{eq:infnormconvt1} or \eqref{eq:infnormconvt2}  gives the desired result.





\subsection{Proof of Theorem \ref{rmt}} \label{rmt-proof}
By definition, we have
$$\rho_+(2q) = \sup_{I \in [p]: |I| = 2q}\lambda_{\max}(\bsbX_{I}^T\bsbX_{I}),
$$
and under $q+s \le n,$
$$\rho_-(q+s) = \inf_{I \in [p]:|I| = q+s}\lambda_{\min}(\bsbX_{I}^T\bsbX_{I}).
$$

By Theorem of 6.1 of \cite{wainwright2019high}, we have
$$ \mathbb{P}\left\{\sqrt{\frac{\lambda_{\max}(\bsbX_{I}^T\bsbX_{I})}{n}} \geq (1+c_0)\sqrt{\lambda_{\max}(\bsbSig_{I})} + \sqrt\frac{\tr(\bsbSig_{I})}{n} \right\} \leq \exp(-nc_0^2/2), \ \ \forall I: |I| = 2q $$ and
$$ \mathbb{P}\left\{\sqrt{\frac{\lambda_{\min}(\bsbX_{I}^T\bsbX_{I})}{n}} \leq (1-c_0)\sqrt{\lambda_{\min}(\bsbSig_{I})} - \sqrt\frac{\tr(\bsbSig_{I})}{n} \right\} \leq \exp(-nc_0^2/2), \ \ \forall I: |I| = q+s$$
for all $c_0 > 0$.
Applying the union bound gives
\begin{align}
        &\mathbb{P}\left\{ \sqrt\frac{\rho_+(2q)}{n} \geq (1+c_0)\sqrt{\lambda^{(2q)}_{\max}} + \sqrt{\frac{2q}{n}} \right\} \le \binom{p}{2q} \exp(-nc_0^2/2).
\end{align}
Let $nc^2 = nc_0^2 - \log\binom{p}{2q}.$
Then using $\log\binom{p}{2q} \le 2q \log{(ep/q)} $, $c_0 \le c + \sqrt{2q\log(ep/q)/n}$. Therefore for any $c>0$,
\begin{equation} \label{rmt-ineq}
        \EP \left\{\sqrt{\frac{\rho_+(2q)}{n}} \ge (1+c)\sqrt{\lambda^{(2q)}_{\max}} + \sqrt{\frac{2q\log(ep/q)}{n}}\sqrt{\lambda^{(2q)}_{\max}} + \sqrt{\frac{2q}{n}} \right\} \le
        \exp(-nc^2/2).
\end{equation}
Similarly,
$$\EP \left\{\sqrt{\frac{\rho_-(q+s)}{n}} \le (1-c)\sqrt{\lambda^{(q+s)}_{\min}} - \sqrt{\frac{(q+s)\log(ep/q)}{n}}\sqrt{\lambda^{(q+s)}_{\min}} - \sqrt{\frac{q+s}{n}} \right\} \le
\exp(-nc^2/2).
$$
Let $c \in (0,1)$ and assume $n \ge \{2(q+s)/(1-c)^2\} \{ 1/\lambda^{(q+s)}_{\min} + \log(ep/q) \}.$ Then
$$
\frac{\rho_+(2q)}{\rho_-(q+s)} \le
\left\{  \frac{ (1+c)\sqrt{\lambda^{(2q)}_{\max}} + \sqrt{\{2\lambda^{(2q)}_{\max}q\log(ep/q)\}/n} + \sqrt{2q/n} } {  (1-c)\sqrt{\lambda^{(q+s)}_{\min}} - \sqrt{\{\lambda^{(q+s)}_{\min}(q+s)\log(ep/q)\}/n} - \sqrt{(q+s)/n} }   \right\}^2
$$
holds with probability at least $1 - 2\exp(-nc^2/2)$.


\subsection{Proof of Theorem \ref{th:new}}
Let $ E:= \sigma^2P_o(q) + \sigma^2$.
Similar to the proof of Theorem \ref{th:statistical_accuracy}, from the construction of $g$ and Lemma \ref{le:basic_l0}, we have
$$
\rho(1-1/\sqrt{\vartheta})(1+\eta_0/\rho)\Breg_2(\bsbb^*, \hat\bsbb)+g(\hat\bsbb, \hat\bsbb) \le g(\bsbb^*, \hat\bsbb),
$$
and thus
\begin{equation}
        2\bar\breg_{l_0}(\bsbX\hat{\bsbb}, \bsbX\bsbb^*) + \frac{\eta_0}{2}\|\hat{\bsbb}\|_2^2 \le   \frac{\rho - (\sqrt{\vartheta} - 1)\eta_0 }{\sqrt{\vartheta}}\Breg_2(\hat{\bsbb}, \bsbb^*) + \frac{\eta_0}{2}\|{\bsbb}^*\|_2^2 +  \langle \bsbeps,\bsbX\hat{\bsbb}-\bsbX\bsbb^*\rangle.
        \label{prepbound1} \end{equation}
Applying Lemma \ref{le:stochastic_bound} gives
\begin{equation} \label{stochastic1}
        \begin{split}
                \langle \bsbeps,\bsbX\hat{\bsbb}-\bsbX\bsbb^*\rangle
                \le
                \delta \Breg_2(\bsbX\hat{\bsbb}, \bsbX\bsbb^*) +  \frac{1}{\delta} A \sigma^2P_o(q) + R
        \end{split}
\end{equation}
for any $\delta>0$,
where $R :=  \sup_{\bsbb_1, \bsbb_2} \{\langle \bsbeps,\bsbX\bsbb_1-\bsbX\bsbb_2\rangle - \delta \Breg_2(\bsbX\bsbb_1, \bsbX\bsbb_2) -   A \sigma^2P_o(q)/\delta \}_+$ and
$$\EP (\delta R > \sigma^2 t) \le    C\exp(-ct)p^{-cA}, $$
where $A, C, c > 0$ are some constants.
Therefore,
\begin{equation} \label{stochastic2}
        \EE \langle \bsbeps,\bsbX\hat{\bsbb}-\bsbX\bsbb^*\rangle
        \le
        \EE \{ \delta \Breg_2(\bsbX\hat{\bsbb}, \bsbX\bsbb^*) \} +  \frac{C}{\delta} (\sigma^2P_o(q)+\sigma^2).
\end{equation}
Combining \eqref{prepbound1} and \eqref{stochastic2} gives
\begin{equation}
        \begin{split}
                &\EE \{(2 \bar{\breg}_{l_0}  - \delta \Breg_2) (\bsbX\hat\bsbb , \bsbX\bsbb^*  )
                + \eta_0 \Breg_2(\hat\bsbb, \bsbb^*) \} \\
                \le &
                \EE \Big\{   \frac{\rho - (\sqrt{\vartheta} - 1)\eta_0   }{\sqrt \vartheta}\  \Breg_2(   \hat  \bsbb,    \bsbb^*)
                + \eta_0\langle -\bsbb^*,    \hat\bsbb -  \bsbb^*   \rangle \Big\}
                + \frac{C}{\delta}E,
        \end{split}
\end{equation}
and so
\begin{equation}  \label{auxeq1}
        \EE \Big[(2 \bar{\breg}_{l_0}  - \delta \Breg_2) (\bsbX\hat\bsbb , \bsbX\bsbb^*  )
        -  \frac{\rho - \{(2-\linrateparam)\sqrt{\vartheta} - 1\}\eta_0   }{\sqrt \vartheta}\  \Breg_2(   \hat  \bsbb,    \bsbb^*) \Big] \\
        \le
        \frac{C}{\delta}E
        + \frac{\eta_0}{2\linrateparam}\|\bsbb^*\|_2^2
\end{equation}
for any $\linrateparam, \delta > 0$.

Next, from $l_0( \bsbX \hat {\bsbb} ) + \eta_0\|\hat{\bsbb}\|_2^2/2 \le l_0(  \bsbX \bsbb^{(0)}) + \eta_0\|{\bsbb}^{(0)}\|_2^2/2$,  we have
\begin{equation} \label{fixed-point_bound-new2}
        \begin{split}
                &\breg_{l_0} ( \bsbX \hat {\bsbb}, \bsbX \bsbb^*) + \eta_0\Breg_2(\hat {\bsbb},  \bsbb^*) \\
                \le & \,
                \breg_{l_0}(   \bsbX \bsbb^{(0)}, \bsbX \bsbb^*) + \eta_0\Breg_2(\bsbb^{(0)}, \bsbb^*)
                +  \eta_0\langle -\bsbb^*,    \hat\bsbb -  \bsbb^*   \rangle - \eta_0\langle -\bsbb^*,    \bsbb^{(0)} -  \bsbb^*   \rangle \\
                &+ \langle \bsbeps, \bsbX \hat {\bsbb} - \bsbX \bsbb^*   \rangle -  \langle \bsbeps, \bsbX   {\bsbb}^{(0)} - \bsbX \bsbb^*   \rangle.
        \end{split}
\end{equation}
Therefore, for any $\delta', \delta'', \linrateparam' >0 $
\begin{align*}
        &\EE \{  ( {\breg}_{l_0}  - \delta'  \Breg_2) (\bsbX \hat {\bsbb} , \bsbX \bsbb^*  ) +\eta_0\Breg_2(\hat\bsbb, \bsbb^*) \} \\
        \le&
        \EE \big\{ (\breg_{l_0} + \delta'' \Breg_2)(   \bsbX \bsbb^{(0)}, \bsbX \bsbb^*)
        + \eta_0\Breg_2(\bsbb^{(0)}, \bsbb^*)
        + \frac{\eta_0}{2\linrateparam}\|\bsbb^*\|_2^2 + \eta_0\linrateparam\Breg_2(\hat\bsbb, \bsbb^*) \\
        &+ \frac{\eta_0}{2\linrateparam'}\|\bsbb^*\|_2^2 + \eta_0\linrateparam'\Breg_2(\bsbb^{(0)}, \bsbb^*) \big\}
        + CE\big(\frac{1}{\delta'} + \frac{1}{\delta''}\big).
\end{align*}
By the assumption of the starting point $\EE \{ \Breg_2 (\bsbb^{(0)}, \bsbb^*) \} \le CME/n,$ we have
$$\EE \{ \Breg_2 (\bsbX\bsbb^{(0)}, \bsbX\bsbb^*) \} \le C\rho_+(q+s) ME/n, \EE \{ \breg_{l_0} (\bsbX\bsbb^{(0)}, \bsbX\bsbb^*) \} \le C\rho^l_+(q, s) ME/n.$$
Taking $1/\delta'' = \sqrt{\rho_+(q+s)M/n}$, we obtain
\begin{align*}
        &\EE \{  ( {\breg}_{l_0}  - \delta'  \Breg_2) (\bsbX \hat {\bsbb} , \bsbX \bsbb^*  ) +\eta_0(1-\linrateparam)\Breg_2(\hat\bsbb, \bsbb^*) \} \\
        \le& \, CE\big( \frac{1}{\delta'}
        + \sqrt{\frac{\rho_+(q+s)M}{n}}
        + \frac{\rho_+^l(q, s)}{n}M
        + \frac{\eta_0(1+\linrateparam')}{n}M \big)
        + \eta_0\big(\frac{1}{\linrateparam} + \frac{1}{\linrateparam'} \big) \frac{\|\bsbb^*\|_2^2}{2}.
\end{align*}
Let $Q_0 := \sqrt{\rho_+(q+s)M/n} + \rho_+^l(q, s)M/n + \eta_0(1+\linrateparam')M/n$.
Then
$$CE\big(\frac{1}{\delta'}+Q_0\big) \le \frac{C}{c_1 \wedge c_2 } E\big(\frac{c_1}{\delta'} + c_2Q_0\big)$$
for any $c_1, c_2 > 0$.
Taking $\delta': \delta^2 = \delta'^2 /(c_1 + c_2 Q_0 \delta')$ and $\linrateparam': 1/\linrateparam+1/\linrateparam' = (1/\delta' + Q_0)c_3\delta/\linrateparam $
for some large constant $c_3>0$, we get
\begin{equation}\label{auxeq2}
        \begin{split}
                \EE \{  (\frac{\delta}{\delta'} {\breg}_{l_0}  - \delta \Breg_2) (\bsbX \hat {\bsbb} , \bsbX \bsbb^*  ) + \frac{\delta}{\delta'}\eta_0(1-\linrateparam)\Breg_2(\hat\bsbb, \bsbb^*) \}
                \le
                \frac{CE}{c_1 \wedge c_2 } \frac{1}{\delta}
                + c_3\frac{\eta_0}{2\linrateparam}\|\bsbb^*\|_2^2.
        \end{split}
\end{equation}

Multiplying \eqref{auxeq1} by $(1 - 1/M)$ and   \eqref{auxeq2} by $1/M$ and adding the two inequalities yield
\begin{align}
        \begin{split}
                & \EE \Big[  (1-\frac{1}{M})\big\{2 \bar{\breg}_{l_0}    (\bsbX \hat {\bsbb} , \bsbX \bsbb^*  )  -    \frac{ \rho - \{(2-\linrateparam)\sqrt{\vartheta} - 1\}\eta_0  }{\sqrt \vartheta}\Breg_2(   \hat  \bsbb,    \bsbb^*)  \big\} \\
                & +( \frac{\delta}{M\delta'} {\breg}_{l_0}- \delta   \Breg_2 )(\bsbX \hat {\bsbb} , \bsbX \bsbb^*  )  + \frac{\delta}{M\delta'}\eta_0(1-\linrateparam)\Breg_2(\hat\bsbb, \bsbb^*) \Big]   \\
                \le & \,  C \big( \frac{E}{\delta} + \frac{\eta_0}{\linrateparam }\|\bsbb^*\|_2^2\big).  \label{ineq_initial}
        \end{split}
\end{align}

Simple calculation shows
$$ \frac{\delta'}{\delta}  =  \frac{ c_2 Q_0 \delta + \sqrt{c_2^2 Q_0^2 \delta^2 + 4c_1}}{2} \le  \frac{\sqrt{2} + 1}{2} \{c_2 Q_0 \delta \vee \sqrt{4c_1} \} \le C (Q_0\delta \vee 1).$$
It follows that
$$ \linrateparam' \le \frac{\linrateparam}{C(Q_0\delta \vee 1) +\delta Q_0 -1} \le C\frac{\linrateparam}{Q_0\delta \vee 1} \le C \linrateparam
$$
for some large constant $C$, and so $Q_0 \lesssim Q$. Under the condition that
\begin{equation} \label{regu-new}
        \begin{split}
                & K\sigma^2P_o(\vartheta s) + \Big\{ 2(1- \frac{1}{M})  \bar\breg_{l_0} + \frac{C}{M(Q\delta \vee 1) } \breg_{l_0} - 2\delta \Breg_2 \Big\} (\bsbX\hat\bsbb, \bsbX\bsbb^*) \\
                \ge & \frac{1 - 1/M }{\sqrt{\vartheta}}\big[\rho - \{(2-\linrateparam)\sqrt{\vartheta} - 1\}\eta_0 \big]\Breg_2(\hat\bsbb, \bsbb^*) -\frac{C}{M(Q\delta \vee 1) }\eta_0(1-\linrateparam)\Breg_2(\hat\bsbb, \bsbb^*),
        \end{split}
\end{equation}
\eqref{ineq_initial} yields
\begin{align}
        \begin{split}
                \EE [ \Breg_2  (\bsbX \hat {\bsbb} , \bsbX \bsbb^*  ) ]
                \le& \frac{K}{\delta}\sigma^2P_o(\vartheta s)  +  \frac{CE}{\delta^2} + C\frac{\eta_0}{\linrateparam }\|\bsbb^*\|_2^2\\
                \lesssim& \frac{K\delta \vee 1}{\delta^2} E + \frac{\eta_0}{\delta\linrateparam}\|{\bsbb}^{*}\|_2^2.
        \end{split}
\end{align}
With a reparameterization, the regularity condition \eqref{regu_feasible_point} implies \eqref{regu-new}.


\subsection{Proof of Theorem \ref{th:iter}}
\label{appsec:proofseq}
For convenience, denote $\Breg_2(\bsbX\bsbb, \bsbX\bsbb')$ by $\Breg_{2, \bsbX}(\bsbb, \bsbb')$.
From Lemma \ref{le:basic_l0}, we have
\begin{equation} \label{g-opt-general-proof}
        g(\bsbb^*, \bsbb^{(t)}) - g(\bsbb^{(t+1)}; \bsbb^{(t)}) \geq \rho_{t+1}(1-\mathcal L_{t+1})(1+\bar\eta_{t+1}) \Breg_2(\bsbb^{(t+1)}, \bsbb^*),
\end{equation}
where $\mathcal L_{t+1} = \mathcal L(\mathcal J(\bsbb^*), \mathcal J(\bsbb^{(t+1)})) \leq 1/\sqrt{\vartheta_{t+1}}$. (Recall $\vartheta_{t+1} = q_{t+1} / s > 1$, and $s \geq \|\bsbb^*\|_0.$)

Substituting $g(\bsbb, \bsbb^{(t)}) = l(\bsbb) + \eta_{t+1}\Breg_2(\bsbb, \bsb{0}) + (\rho_{t+1} \Breg_2 - \breg_{l})(\bsbb, \bsbb^{(t)})$ and $l(\bsbb^*) - l(\bsbb^{(t+1)}) = \langle \bsbeps, \bsbX \bsbb^{(t+1)}- \bsbX \bsbb^* \rangle-\back\breg_{l}(\bsbb^*, \bsbb^{(t+1)})$ into \eqref{g-opt-general-proof} gives
\begin{equation} \label{eq:optimality_bound_iter}
        \begin{aligned}
                &\{\rho_{t+1}(1-\mathcal L_{t+1})(1+\bar\eta_{t+1})\Breg_2 + \back\breg_{l}\}(\bsbb^*, \bsbb^{(t+1)})  + \eta_{t+1}\Breg_2(\bsbb^*, \bsbb^{(t+1)})  \\
                &+ (\rho_{t+1} \Breg_2 - \breg_{l})(\bsbb^{(t+1)}, \bsbb^{(t)}) \\
                \leq \, &(\rho_{t+1} \Breg_2 - \breg_{l})(\bsbb^*, \bsbb^{(t)})  + \langle \bsbeps, \bsbX \bsbb^{(t+1)} - \bsbX \bsbb^* \rangle + \eta_{t+1} \langle -\bsbb^*, \bsbb^{(t+1)} - \bsbb^* \rangle.
        \end{aligned}
\end{equation}

From Lemma \ref{le:stochastic_bound}, with probability at least $1 - Cp^{-cA}$
\begin{equation}
        \langle \bsbeps, \bsbX\bsbb^{(t+1)} - \bsbX\bsbb^* \rangle
        \le \delta_{t+1}\Breg_{2, \bsbX}(\bsbb^*, \bsbb^{(t+1)}) + \delta_{t+1}^{-1} A \sigma^2P_o(q_{t+1}), \ \mbox{for all} \ t \ge 0
\end{equation}
given any $\delta_{t+1} > 0$, where $A$ is a constant. Moreover, for any $\linrateparam_{t+1} > 0$,
\begin{equation}
        \langle -\bsbb^*, \bsbb^{(t+1)} - \bsbb^* \rangle \le \linrateparam_{t+1} \Breg_2(\bsbb^*, \bsbb^{(t+1)}) + \linrateparam_{t+1}^{-1}\Breg_2(\bsbb^*, \bsb{0}).
\end{equation}
Plugging these bounds into \eqref{eq:optimality_bound_iter} gives
\begin{align} \label{eq:optimality_bound_iter_all}
        &\big\{\rho_{t+1}(1-\mathcal L_{t+1})(1+\bar\eta_{t+1})\Breg_2 + \back\breg_{l} +  (1- \linrateparam_{t+1})\eta_{t+1}\Breg_2 - \delta_{t+1}\Breg_{2,\bsbX}\big\}(\bsbb^*, \bsbb^{(t+1)}) \nonumber \\
        & \, + (\rho_{t+1} \Breg_2 - \breg_{l})(\bsbb^{(t+1)}, \bsbb^{(t)}) \nonumber\\
        \leq \, &(\rho_{t+1} \Breg_2 - \breg_{l})(\bsbb^*, \bsbb^{(t)}) + \delta_{t+1}^{-1} A \sigma^2P_o(q_{t+1}) + \linrateparam_{t+1}^{-1}\eta_{t+1}\Breg_2(\bsbb^*, \bsb{0}).
\end{align}
By the definition of (generalized) isometry numbers and using $\mathcal L_{t+1} \le 1/\sqrt{\vartheta_{t+1}}$, we have
\begin{align}
        &\Big\{ \rho_{t+1}\big(1 - \frac{1}{\sqrt{\vartheta_{t+1}}}\big)(1+\bar\eta_{t+1}) + \rho_-^{l}(q_{t+1}, s)+ (1-\linrateparam_{t+1})\eta_{t+1} - \delta_{t+1}\rho_+(q_{t+1} + s) \Big\}\Breg_2(\bsbb^*, \bsbb^{(t+1)}) \nonumber\\
        & \, + (\rho_{t+1} \Breg_2 - \breg_{l})(\bsbb^{(t+1)}, \bsbb^{(t)}) \nonumber\\
        \le \, &\big\{ \rho_{t+1} - \rho^l_-(s, q_{t+1})  \big\} \Breg_2(\bsbb^*, \bsbb^{(t)}) + \delta_{t+1}^{-1} A \sigma^2P_o(q_{t+1}) + \linrateparam_{t+1}^{-1}\eta_{t+1}\Breg_2(\bsbb^*, \bsb{0}).
\end{align}
Let $\linrateparam_0$ be any number $\in (0, 1]$. Taking $\linrateparam_{t+1} = \linrateparam_0/2, \delta_{t+1} = (\linrateparam_0\rho_-^l(q_{t+1}, s) + \linrateparam_0\eta_{t+1}/2)/\rho_+(q_{t+1} + s)$, we have
\begin{align*}
        &(1 - 1/\sqrt{\vartheta_{t+1}})(1+\bar\eta_{t+1})\rho_{t+1} + \rho_-^{l}(q_{t+1}, s)+ (1-\linrateparam_{t+1})\eta_{t+1} - \delta_{t+1}\rho_+(q_{t+1} + s)\\
        = \, &(1 - 1/\sqrt{\vartheta_{t+1}})(1+\bar\eta_{t+1})\rho_{t+1} + (1-\linrateparam_0)\rho_-^{l}(q_{t+1}, s) +  (1-\linrateparam_0)\eta_{t+1}.
\end{align*}
Let
\begin{align*}
        E_{t+1} =& \, \frac{1}{\rho_{t+1} - \rho^l_-(s, q_{t+1})} \Big\{ \frac{A\sigma^2}{\linrateparam_0} \frac{\rho_+(q_{t+1} + s)}{\rho_-^l(q_{t+1}, s) + \eta_{t+1}/2} P_o(q_{t+1})  + \frac{\eta_{t+1}}{\linrateparam_{0}}\|\bsbb^*\|_2^2 \Big\} \\
        \le & \, \frac{A\sigma^2}{\linrateparam_0} \frac{\rho_+(q_{t+1} + s)}{(\rho_-^l(q_{t+1}, s)/\rho_{t+1} \vee \bar\eta_{t+1}) (1 - \rho^l_-(s, q_{t+1})/\rho_{t+1})\rho_{t+1}^2 } P_o(q_{t+1}) \\
        & \, + \frac{\bar\eta_{t+1}}{\linrateparam_{0}(1 - \rho^l_-(s, q_{t+1})/\rho_{t+1})}\|\bsbb^*\|_2^2
\end{align*}
for any $t \ge 0$.
By the definitions of $\kappa_{t}, h_t$, we can obtain
\begin{align}
        &\Breg_2(\bsbb^*, \bsbb^{(t+1)}) + h_{t+1} (\rho_{t+1} \Breg_2 - \breg_{l})(\bsbb^{(t+1)}, \bsbb^{(t)})  \le \kappa_{t+1} \Breg_2(\bsbb^*, \bsbb^{(t)}) + \kappa_{t+1}E_{t+1}.
\end{align}
Applying a recursive argument with $t = T, \dots, 0$ gives
\begin{align}
        &\Breg_2(\bsbb^*, \bsbb^{(T+1)}) + \sum_{t=0}^T \big(\Pi_{\tau = t}^{T} h_{\tau+1}\big) (\rho_{t+1} \Breg_2 - \breg_{l})(\bsbb^{(t+1)}, \bsbb^{(t)}) \nonumber\\
        \le & \,
        \bigg( \Pi_{t = 0}^{T} \kappa_{t+1}  \bigg)\Breg_2(\bsbb^*, \bsbb^{(0)})
        + \sum_{t=0}^T \Big(\Pi_{\tau = t}^{T} \kappa_{\tau+1}\Big) E_{t+1}, \nonumber
\end{align}
and thus the bound \eqref{eq:seq_result} follows.

To ensure
\begin{equation} \label{eta_choice}
        \frac{\rho_{t} - \rho^l_-(s, q_{t})}{(1 - 1/\sqrt{\vartheta_{t}})(1+\bar\eta_{t})\rho_{t} + (1-\linrateparam)(\rho_-^{l}(q_{t}, s) + \eta_{t})}
        \le \frac{1}{1+\alpha}
\end{equation}
for some $\alpha > 0$, we need
\begin{equation}
        \bar\eta_t \ge \frac{ (\alpha + 1/\sqrt{\vartheta_{t}}) - (2+\alpha-\linrateparam)\{\rho^l_-(s, q_{t}) \wedge \rho^l_-(q_{t}, s)\}/\rho_t }{2 - 1/\sqrt{\vartheta_{t}}  - \linrateparam}.
\end{equation}
The   result in the corollary follows by taking $\alpha = \linrateparam$ and noticing that $\rho_{t+1} \ge \rho^l_+(q_{t+1}, q_{t})$ implies
$(\rho_{t+1}\Breg_2 - \breg_l)(\bsbb^{t+1}, \bsbb^{t}) \ge 0$.

\subsection{A recursive coordinatewise error bound under  restricted isometry} \label{subsec:recurcoorderror}
Recall the general procedure defined in   \eqref{eq:general_update},
\begin{align}
\bsbb^{(t+1)} = \Theta^{\#}\Big\{\bsbb^{(t)} - \rho_{t+1}^{-1} \bsbX^T\nabla l_0(\bsbX\bsbb^{(t)}; \bsby);
q_{t+1}, \bar\eta_{t+1}\Big\},   \mbox{ with } \bar\eta_{t+1} = \eta_{t+1}/\rho_{t+1}. \label{genseq-app}
\end{align}
Following a similar approach  to Theorem \ref{support_recovery} for the set of fixed points, an error bound for $\bsbb^{(t+1)}$ in the $\infty$-norm can be established under appropriate regularity conditions.

To facilitate the proof, we first recall the definition of    $\rho_-^{l}(s_1, s_2)$ as given in \eqref{gen-rip-low}.  In particular, in the regression setup, $\rho_-(s_1, s_2)$ satisfies \begin{align*}
  & \|\bsbX (\bsbb_1 - \bsbb_2)\|_2^2 \ge \rho_-(s_1, s_2) \| \bsbb_1 - \bsbb_2\|_2^2, \forall \bsbb_i : \| \bsbb_i\|_0\le s_i \\ \Longleftrightarrow \quad & ( \bsbb_1 - \bsbb_2)^T ( \rho \bsbI -\bsbX^T   \bsbX)( \bsbb_1 - \bsbb_2)\le (\rho - \rho_-(s_1, s_2)) \| \bsbb_1 -\bsbb_2\|_2^2, \forall \bsbb_i : \| \bsbb_i\|_0\le s_i.
\end{align*}
The presence of positive restricted eigenvalues in the Gram matrix $\bsbX^T \bsbX$ implies the existence of proper upper bounds on the restricted eigenvalues of the matrix $\rho \bsbI - \bsbX^T \bsbX$. So when considering the $\infty$-norm error for   $\bsbb^{(t+1)}$, it appears more manageable to work with the matrix $\rho \bsbI - \bsbX^T \bsbX$ than with $\bsbX^T \bsbX$.

Motivated by this, given $l$, $\bsbX$, and $s_i$,  we introduce  a generalized restricted isometry number $\upsilon(s_1, s_2)$ that satisfies
\begin{align}
\|\rho ( \bsbb_1 - \bsbb_2) - \bsbX^T \{\nabla l_0(\bsbX \bsbb_1) - \nabla l_0(\bsbX\bsbb_2)\}  \|_{\infty} \le (\rho - \upsilon) \|\bsbb_1 - \bsbb_2\|_{\infty}, \mbox{ for all } \bsbb_i: \| \bsbb_i\|_0\le s_i, \rho \ge \upsilon.\label{geniso-inf}
\end{align}
In the case where  $l_0(\bsbX \bsbb) = \| \bsbX \bsbb - \bsby\|_2^2/2$, we have $\nabla l_0(\bsbX \bsbb_1) - \nabla l_0(\bsbX\bsbb_2)= \bsbX (\bsbb_1-\bsbb_2)$ and    $\rho ( \bsbb_1 - \bsbb_2) - \bsbX^T (\nabla l_0(\bsbX \bsbb_1) - \nabla l_0(\bsbX\bsbb_2)) = (\rho \bsbI - \bsbX^T \bsbX) (\bsbb_1 - \bsbb_2)$. Therefore,   \eqref{geniso-inf} can be understood as a variant of low coherence for the design matrix in the context of the $\infty$-norm.

\begin{theorem}
For the sequence of iterates generated by  procedure \eqref{genseq-app} and  $\upsilon_t$ denoting  $\upsilon(q_t, s)$   as defined by  \eqref{geniso-inf},
the following recursive coordinatewise error bound on $\bsbb^{(t+1)}$  holds for  any $t\ge 0$:

\begin{align*}
        \| \bsbb^{(t+1)} - \bsbb^* \|_{\infty} \le \  &  (1-\frac{\upsilon_t+\eta_{t+1}}{\rho_{t+1}+ \eta_{t+1}}) \| \bsbb^{(t)} - \bsbb^* \|_{\infty}
         + \frac{\|\bsbX^T\bsbeps\|_{\infty}}{  \rho_{t+1}+\eta_{t+1}} + \frac{\eta_{t+1} \|\bsbb^*\|_{\infty}}{\ \rho_{t+1}+\eta_{t+1}}+ \frac{1}{\sqrt{\vartheta_{t+1} - 1}} \frac{\| \bsbb^{(t+1)} - \bsbb^* \|_{2}}{\sqrt {s} }. \end{align*}
\end{theorem}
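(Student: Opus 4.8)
The plan is to bound $\|\bsbb^{(t+1)}-\bsbb^*\|_\infty$ coordinatewise, splitting the indices into three groups according to the quantile-thresholding step and treating the retained and discarded coordinates by separate arguments. Throughout, write $\rho=\rho_{t+1}$, $\eta=\eta_{t+1}$, let $\bsbeps=-\nabla l_0(\bsbX\bsbb^*;\bsby)$ be the effective noise of \eqref{effective-noise}, let $\bsbs^{(t)}=\bsbb^{(t)}-\rho^{-1}\bsbX^T\nabla l_0(\bsbX\bsbb^{(t)};\bsby)$ denote the gradient step in \eqref{genseq-app}, and set $\hat{\mathcal J}_{t+1}=\mathcal J(\bsbb^{(t+1)})$. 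By the definition of $\Theta^{\#}$, the retained coordinates satisfy $\beta^{(t+1)}_j=s^{(t)}_j/(1+\bar\eta_{t+1})$ for $j\in\hat{\mathcal J}_{t+1}$ and $0$ otherwise, where $1+\bar\eta_{t+1}=(\rho+\eta)/\rho$.

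First I would establish a one-step $\ell_\infty$ bound for the \emph{un-thresholded} shrunken vector $\bsbs^{(t)}/(1+\bar\eta_{t+1})$, valid for every coordinate $j$. Multiplying by $\rho+\eta$ and inserting the effective noise through $\bsbX^T\nabla l_0(\bsbX\bsbb^{(t)};\bsby)=\bsbX^T\{\nabla l_0(\bsbX\bsbb^{(t)};\bsby)-\nabla l_0(\bsbX\bsbb^*;\bsby)\}-\bsbX^T\bsbeps$ yields
\begin{equation*}
(\rho+\eta)\Big(\frac{s^{(t)}_j}{1+\bar\eta_{t+1}}-\beta^*_j\Big)=\big[\rho(\bsbb^{(t)}-\bsbb^*)-\bsbX^T\{\nabla l_0(\bsbX\bsbb^{(t)};\bsby)-\nabla l_0(\bsbX\bsbb^*;\bsby)\}\big]_j-\eta\beta^*_j+(\bsbX^T\bsbeps)_j.
\end{equation*}
Since $\|\bsbb^{(t)}\|_0\le q_t$ and $\|\bsbb^*\|_0\le s$, applying the isometry number $\upsilon_t=\upsilon(q_t,s)$ of \eqref{geniso-inf} (with $\rho=\rho_{t+1}$) bounds the $\infty$-norm of the bracketed vector by $(\rho-\upsilon_t)\|\bsbb^{(t)}-\bsbb^*\|_\infty$. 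Dividing by $\rho+\eta$ and using $(\rho-\upsilon_t)/(\rho+\eta)=1-(\upsilon_t+\eta)/(\rho+\eta)$ gives, uniformly in $j$,
\begin{equation*}
\Big|\frac{s^{(t)}_j}{1+\bar\eta_{t+1}}-\beta^*_j\Big|\le\Big(1-\frac{\upsilon_t+\eta}{\rho+\eta}\Big)\|\bsbb^{(t)}-\bsbb^*\|_\infty+\frac{\|\bsbX^T\bsbeps\|_\infty}{\rho+\eta}+\frac{\eta\|\bsbb^*\|_\infty}{\rho+\eta}=:D_t,
\end{equation*}
which is precisely the first three terms of the claim. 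For a retained index $j\in\hat{\mathcal J}_{t+1}$ the left-hand side equals $|\beta^{(t+1)}_j-\beta^*_j|$, and for $j\notin\hat{\mathcal J}_{t+1}\cup\mathcal J^*$ one has $\beta^{(t+1)}_j-\beta^*_j=0$; so the only remaining group is the discarded true coordinates $j\in\mathcal J^*\setminus\hat{\mathcal J}_{t+1}$.

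For such $j$ we have $\beta^{(t+1)}_j=0$, hence $|\beta^{(t+1)}_j-\beta^*_j|=|\beta^*_j|\le D_t+|s^{(t)}_j|/(1+\bar\eta_{t+1})$ by the triangle inequality and the display above. Because $\Theta^{\#}$ discards $j$ exactly when $|s^{(t)}_j|$ is no larger than the retained magnitudes, and the common shrinkage factor $1+\bar\eta_{t+1}$ preserves the ordering, we get $|s^{(t)}_j|/(1+\bar\eta_{t+1})\le\min_{k\in\hat{\mathcal J}_{t+1}}|\beta^{(t+1)}_k|$. I would then invoke Lemma \ref{le:infnormconvt} with $\tilde\bsbb=\bsbb^{(t+1)}$ (cardinality $q_{t+1}>s$) and $\bsbb=\bsbb^*$, whose first chain gives $\min_{k\in\hat{\mathcal J}_{t+1}}|\beta^{(t+1)}_k|\le\|\bsbb^{(t+1)}-\bsbb^*\|_2/\sqrt{q_{t+1}-s}=(\vartheta_{t+1}-1)^{-1/2}\|\bsbb^{(t+1)}-\bsbb^*\|_2/\sqrt{s}$. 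Taking the maximum over the three index groups produces the stated recursive bound.

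The only genuinely delicate step is the discarded-coordinate case: the naive route of comparing $\beta^*_j$ to the full thresholded update is useless because that update is set to zero there, so the crux is to reroute through the smallest \emph{retained} magnitude via the ordering property of $\Theta^{\#}$ and then convert it into an $\ell_2$-error term through Lemma \ref{le:infnormconvt}, all while carrying the shrinkage factor $1+\bar\eta_{t+1}$ consistently so that it cancels in the comparison. Everything else is routine bookkeeping with the effective-noise decomposition and the $\infty$-norm isometry number \eqref{geniso-inf}.
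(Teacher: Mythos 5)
Your proposal is correct and follows essentially the same route as the paper's proof: both rest on the $\Theta^{\#}$ ordering property to control the discarded coordinates by the smallest retained magnitude, the effective-noise decomposition of the gradient step, the $\infty$-norm isometry number \eqref{geniso-inf}, and \eqref{eq:infnormconvt1} of Lemma \ref{le:infnormconvt} to convert that magnitude into the $\ell_2$-error term. The only difference is presentational — you split the coordinates into three groups, whereas the paper packages the same estimates into a single $\infty$-norm inequality via sub-additivity — and both yield the identical bound.
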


\begin{proof}
The proof follows similar lines of the proof of Theorem \ref{support_recovery}. First, by the definition of $\Theta^\#$,
 \begin{align*}
        &\Big\| (1+\bar\eta_{t+1})\bsbb^{(t+1)} - \bsbb^{(t)} + \frac{1}{\rho_{t+1}} \bsbX^T \nabla l_0(\bsbX \bsbb^{(t)}) \Big\|_{\infty}
        \le (1+\bar\eta_{t+1}) \min_{j \in   {\mathcal{J}(\beta_j^{(t+1)})}} | \beta_j^{(t+1)}|
\end{align*}
and so
\begin{align*}
        \|(\rho_{t+1}+ \eta_{t+1}) \bsbb^{(t+1)} -\rho_{t+1}\bsbb^{(t)} + \bsbX^T (\nabla l_0(\bsbX \bsbb^{(t)}) - \nabla l_0(\bsbX\bsbb^*)) -\bsbX^T\bsbeps \|_{\infty} \\
        \le ( \rho_{t+1}+\eta_{t+1}) \min_{j \in   {\mathcal{J}(\beta_j^{(t+1)})}} | \beta_j^{(t+1)}|.  \end{align*}
Writing
$$
(\rho_{t+1}+ \eta_{t+1}) \bsbb^{(t+1)} -\rho_{t+1}\bsbb^{(t)} =(\rho_{t+1}+ \eta_{t+1}) (\bsbb^{(t+1)} - \bsbb^*) -\rho_{t+1}(\bsbb^{(t)} - \bsbb^*) + \eta_{t+1} \bsbb^*
$$
and using the sub-additivity of the $\infty$-norm, we get
\begin{align*}
       (\rho_{t+1}+ \eta_{t+1}) \| \bsbb^{(t+1)} - \bsbb^* \|_{\infty} \le \  & \|\rho_{t+1}( \bsbb^{(t)} - \bsbb^*) - \bsbX^T (\nabla l_0(\bsbX \bsbb^{(t)}) - \nabla l_0(\bsbX\bsbb^*))  \|_{\infty}
        \\ & + \|\bsbX^T\bsbeps\|_{\infty} + \eta_{t+1} \|\bsbb^*\|_{\infty}+ ( \rho_{t+1}+\eta_{t+1}) \min_{j \in   {\mathcal{J}(\beta_j^{(t+1)})}} | \beta_j^{(t+1)}|.  \end{align*}
By \eqref{eq:infnormconvt1} of Lemma \ref{le:infnormconvt} and the definition of $\upsilon_t$, we get
\begin{align*}
       (\rho_{t+1}+ \eta_{t+1}) \| \bsbb^{(t+1)} - \bsbb^* \|_{\infty} \le \  &  (\rho_{t+1}-\upsilon_t) \| \bsbb^{(t)} - \bsbb^* \|_{\infty}
        \\ & + \|\bsbX^T\bsbeps\|_{\infty} + \eta_{t+1} \|\bsbb^*\|_{\infty}+ ( \rho_{t+1}+\eta_{t+1}) \frac{\| \bsbb^{(t+1)} - \bsbb^* \|_{2}}{\sqrt {q_{t+1} - s}}. \end{align*}
Additionally, we can  obtain $(\rho_{t+1}+ \eta_{t+1}) \| (\bsbb^{(t+1)} - \bsbb^*)_{\mathcal J^*} \|_{\infty} \le   (\rho_{t+1}-\upsilon_t) \| \bsbb^{(t)} - \bsbb^* \|_{\infty}           + \|\bsbX^T\bsbeps\|_{\infty} + \eta_{t+1} \|\bsbb^*\|_{\infty}$ or
\begin{align*}
         \| (\bsbb^{(t+1)} - \bsbb^*)_{\mathcal J^*} \|_{\infty} \le \  &  (1-\frac{\upsilon_t+\eta_{t+1}}{\rho_{t+1}+ \eta_{t+1}}) \| \bsbb^{(t)} - \bsbb^* \|_{\infty}
         + \frac{\|\bsbX^T\bsbeps\|_{\infty}}{  \rho_{t+1}+\eta_{t+1}} + \frac{\eta_{t+1} \|\bsbb^*\|_{\infty}}{\ \rho_{t+1}+\eta_{t+1}}, \end{align*}
by applying  \eqref{eq:infnormconvt2}. \end{proof}

\subsection{Model selection by predictive information criterion}
\label{appsub:tuning}

Although parameter $q$ as an upper bound of the true model support size can often be directly specified  based on domain knowledge, this section develops a new information criterion for the tuning of  $q$ to achieve the best prediction performance in finite samples. We assume \textit{multiple   responses} to cover the application in Section \ref{subsec:gbsense}. Let $\bsbY \in \mathbb R^{n\times m}$, $\bsbX\in \mathbb R^{n\times p}$ be  the response matrix and predictor matrix, respectively, and  $l_0(\bsbX \bsbB; \bsbY)$ be the given loss.  We use $\mathcal J(\bsbB)$ to denote the row support of $\bsbB$ and define $J(\bsbB) = |\mathcal J(\bsbB)|$.  Assume the true $\bsbB^*\in\mathbb R^{p\times m}$ is row-sparse and let $s^* =  J(\bsbB^*)$. The problem considered in the main sections corresponds to the special case $m=1$. To choose the best (row)  support size, we advocate the following complexity penalty to be added to the loss in the predictive information criterion:
\begin{align}
P(\bsbB) = J(\bsbB) m + J(\bsbB)\log \{ep / J(\bsbB)\}.
\end{align}
 Recall $\Breg_2(\bsbA_1, \bsbA_2) = \|\bsbA_1 - \bsbA_2 \|_F^2/2$ in the matrix context.
\begin{theorem} \label{thm:tuning}
Let  the   effective noise    $E=-\nabla l_0(\bsbX \bsbB^*)$    be sub-Gaussian with mean zero and   scale bounded by a constant and   $\bsbB^*\in \mathcal M$ and $\bsbB^*\ne \bsb0 $.  Assume that there exist constants $\delta>0$ and $A_0\ge 0$ such that $(\breg_{l_0}- \delta\Breg_2)(  \bsbX  \bsbB ,   \bsbX  \bsbB ') +A_{0} (P(  \bsbB ) +  P(  \bsbB' ))\ge0$, for all  $  \bsbB,  \bsbB'\in \mathcal M$.    Then for a sufficiently large constant $A$, any
      $\hat{\bsbB}$  that minimizes
\begin{equation} \label{PIC-general}
l_{0}( \bsbX  \bsbB ; \bsbY) + A P ( \bsbB)
\end{equation}
subject to $  \bsbB\in \mathcal M$  must satisfy
\begin{equation} \label{tuning_general-1}
\EE \{  \| \bsbX \hat { \bsbB} -  \bsbX  \bsbB^* \|_F^2 \vee P(\hat \bsbB) \} \lesssim m s^*+  s^*\log(ep/s^*) .
\end{equation}
\end{theorem}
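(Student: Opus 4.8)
The plan is to run a standard penalized-estimation (oracle-inequality) argument, with essentially all of the technical work concentrated in a matrix-valued stochastic term that reproduces the penalty $P$. First I would exploit the optimality of $\hat\bsbB$ in \eqref{PIC-general}: comparing $\hat\bsbB$ against the feasible competitor $\bsbB^*\in\mathcal M$ gives $l_0(\bsbX\hat\bsbB;\bsbY)-l_0(\bsbX\bsbB^*;\bsbY)\le A P(\bsbB^*)-A P(\hat\bsbB)$. Rewriting the left-hand side through the generalized Bregman function and using $\nabla l_0(\bsbX\bsbB^*)=-E$, so that $l_0(\bsbX\hat\bsbB;\bsbY)-l_0(\bsbX\bsbB^*;\bsbY)=\breg_{l_0}(\bsbX\hat\bsbB,\bsbX\bsbB^*)-\langle E,\bsbX(\hat\bsbB-\bsbB^*)\rangle$, I obtain the basic inequality
\begin{equation*}
\breg_{l_0}(\bsbX\hat\bsbB,\bsbX\bsbB^*)+A P(\hat\bsbB)\le A P(\bsbB^*)+\langle E,\bsbX(\hat\bsbB-\bsbB^*)\rangle.
\end{equation*}

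Second --- and this is the crux --- I would establish a matrix analogue of Lemma \ref{le:stochastic_bound} adapted to the complexity $P(\bsbB)=J(\bsbB)m+J(\bsbB)\log\{ep/J(\bsbB)\}$: for any $a>0$ and $t\ge 0$,
\begin{equation*}
\langle E,\bsbX(\bsbB_1-\bsbB_2)\rangle\le \frac{1}{2a}\|\bsbX(\bsbB_1-\bsbB_2)\|_F^2+\frac{a}{2}A'\{P(\bsbB_1)\vee P(\bsbB_2)\}+\frac{a}{2}\sigma^2 t
\end{equation*}
uniformly over all row-supports, with probability at least $1-C\exp(-ct)p^{-cA'}$. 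The proof mirrors that of Lemma \ref{le:stochastic_bound}: decompose $\bsbX(\bsbB_1-\bsbB_2)$ via the orthogonal projections $\mathcal P_{\mathcal J_1}$ and $\mathcal P_{\mathcal J_1}^\perp\mathcal P_{\mathcal J_2}$ onto the two row-supports, and bound each piece by a supremum over low-dimensional projections, telescoping the union bound as in \eqref{Rbound}. The essential difference from the scalar case is the degrees-of-freedom accounting: after projecting onto a fixed support of size $k$, the relevant sub-Gaussian deviation lives in a space of dimension $km$ rather than $k$, which is precisely what contributes the $J(\bsbB)m$ term, while the union bound over the $\binom{p}{k}$ supports contributes $J(\bsbB)\log\{ep/J(\bsbB)\}$ --- together reconstructing $P$. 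I expect this to be the main obstacle, since one must combine a sub-Gaussian maximal inequality of the type in Lemma 6 of \cite{She2016} with the correct $m$-dependence and verify that the support-level union bound converges.

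Finally I would combine the two bounds. Taking $a=2/\delta$ turns the quadratic term into $\tfrac{\delta}{2}\Breg_2(\bsbX\hat\bsbB,\bsbX\bsbB^*)$ (recall $\|\bsbX\hat\bsbB-\bsbX\bsbB^*\|_F^2=2\Breg_2(\bsbX\hat\bsbB,\bsbX\bsbB^*)$), and the restricted-strong-convexity hypothesis $(\breg_{l_0}-\delta\Breg_2)(\bsbX\bsbB,\bsbX\bsbB')+A_0(P(\bsbB)+P(\bsbB'))\ge 0$ lets me replace $\breg_{l_0}$ on the left by $\delta\Breg_2-A_0(P(\hat\bsbB)+P(\bsbB^*))$. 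Using $P(\bsbB_1)\vee P(\bsbB_2)\le P(\bsbB_1)+P(\bsbB_2)$ and collecting terms yields, on the high-probability event,
\begin{equation*}
\tfrac{\delta}{2}\Breg_2(\bsbX\hat\bsbB,\bsbX\bsbB^*)+\Big(A-A_0-\tfrac{A'}{\delta}\Big)P(\hat\bsbB)\le\Big(A+A_0+\tfrac{A'}{\delta}\Big)P(\bsbB^*)+\tfrac{1}{\delta}\sigma^2 t.
\end{equation*}
Choosing the constant $A$ large enough that $A-A_0-A'/\delta\ge A/2$ forces both $\Breg_2(\bsbX\hat\bsbB,\bsbX\bsbB^*)$ and $P(\hat\bsbB)$ to be bounded by $C\,P(\bsbB^*)+C\sigma^2 t$. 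Since $P(\bsbB^*)=s^*m+s^*\log(ep/s^*)$ and the bad event has probability decaying like $\exp(-ct)$, integrating the tail over $t$ gives $\EE\{\Breg_2(\bsbX\hat\bsbB,\bsbX\bsbB^*)\vee P(\hat\bsbB)\}\lesssim ms^*+s^*\log(ep/s^*)$, which is exactly \eqref{tuning_general-1}.
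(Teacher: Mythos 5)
Your proposal matches the paper's own proof essentially step for step: the same basic inequality from the optimality of $\hat\bsbB$; the same matrix extension of Lemma \ref{le:stochastic_bound} with the complexity $P(\bsbB)=J(\bsbB)m+J(\bsbB)\log\{ep/J(\bsbB)\}$, where your degrees-of-freedom accounting (the $km$-dimensional projections giving the $J m$ term, the support union bound giving the $J\log(ep/J)$ term) is exactly the paper's remark that $Jm+J\log(ep/J)\ge m+\log(ep)$ makes the union bound close; the same use of the restricted-strong-convexity hypothesis to absorb $\breg_{l_0}$; and the same choice of a sufficiently large $A$ followed by integrating the exponential tail in $t$ to get the expectation bound. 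This is correct and is the paper's argument, so nothing further is needed.
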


Theorem \ref{thm:tuning} does not involve any  regularization parameters (like $q ,  \lambda$), but it achieves the minimax optimal error rate  \eqref{tuning_general-1}. Moreover, the justification of \eqref{PIC-general}  does not require an infinite-sample-size,     design coherence  or  signal-to-noise ratio conditions.

   When  the noise distribution has a  dispersion parameter $\sigma^2$, Theorem \ref{thm:tuning} still applies, but the penalty in \eqref{PIC-general} becomes $A \sigma^2 P (\bsbB)$ with an unknown factor.  A preliminary  scale estimate can be possibly used. But an appealing result     for  regression is that  the estimation of $\sigma$ can be bypassed.
We give a scale-free form of predictive information criterion by
\begin{equation} \label{PIC-log}
mn\log \{\|\bsbY - \bsbX\bsbB \|_F^2\}+ A P (\bsbB),
\end{equation}
where   $A$ is an absolute constant.

\begin{theorem} \label{thm:tuning_log_form}
Let $\bsbY = \bsbX\bsbB^* +\bsbE$, where  $E = [\epsilon_{i,k}]$ has independent centered sub-Gaussian$(\sigma^2)$ entries and $\EE \epsilon_{i,k}^2  \gtrsim  \sigma^2$ with  $\sigma^2$ unknown.  Define   $l_0(\bsbX\bsbB  ; \bsbY) = \|\bsbX\bsbB  - \bsbY\|_F^2$. Assume the true model is not over-complex in the sense that   $P(\bsbB^*)\le mn/A_0$ for some constant $A_0 > 0$. Let $\delta(\bsbB) = AP (\bsbB)/(mn)$, where $A$ is a positive constant satisfying $A<A_0$, and so $\delta(\bsbB^*)<1$. Then, for sufficiently large values of $A_0$ and $A$, any $\hat \bsbB$ that minimizes  $\log l_{0}(\bsbX\bsbB  ; \bsbY) + \delta(\bsbB)$
subject to $\delta(\bsbB)<1$ must satisfy
$ 
\Breg_2 ( \bsbX \hat {\bsbB},  \bsbX  \bsbB^*  )   \lesssim \sigma^2 \{s^* m + s^*\log(ep/s^*)\}
$ 
with probability at least $1 -C p^{-c}\exp\{-c  m\}-C \exp ( -c    mn  )$   for some
        constants $C,  c >0$.
\end{theorem}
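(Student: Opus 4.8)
The plan is to exploit the optimality of $\hat\bsbB$ together with the elementary logarithmic inequality $\log t \ge 1 - 1/t$ (valid for all $t>0$) to turn the scale-free criterion into an ordinary basic inequality, and then to control the stochastic and normalization terms on two high-probability events. Write $N = \|\bsbE\|_F^2$, $S = \|\bsbX\hat\bsbB - \bsbY\|_F^2$, $D = \|\bsbX(\hat\bsbB - \bsbB^*)\|_F^2 = 2\Breg_2(\bsbX\hat\bsbB, \bsbX\bsbB^*)$ and $G = 2\langle \bsbX(\hat\bsbB - \bsbB^*), \bsbE\rangle$, so that $S = N + D - G$ since $\bsbX\bsbB^* - \bsbY = -\bsbE$. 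Because $\hat\bsbB$ minimizes $mn\log\|\bsbX\bsbB - \bsbY\|_F^2 + A P(\bsbB)$ subject to $\delta(\bsbB)<1$, comparing the value at $\hat\bsbB$ with that at $\bsbB^*$ gives $mn\log(S/N) \le A\{P(\bsbB^*) - P(\hat\bsbB)\}$, and applying $\log(S/N)\ge 1 - N/S$ yields the key inequality $\tfrac{mn}{S}(D-G) \le A\{P(\bsbB^*) - P(\hat\bsbB)\}$. This is the analogue of the basic inequality used for Theorem \ref{thm:tuning}, except that the factor $mn/S$ is random.

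Next I would introduce the two events on which the argument runs. First, since the entries of $\bsbE$ are sub-Gaussian$(\sigma^2)$ with $\EE\epsilon_{i,k}^2 \gtrsim \sigma^2$, a Bernstein-type bound for the sub-exponential sum $\|\bsbE\|_F^2$ gives $c_1 mn\sigma^2 \le N \le c_2 mn\sigma^2$ with probability at least $1 - C\exp(-cmn)$; the lower bound here uses the variance assumption crucially. Second, the matrix extension of Lemma \ref{le:stochastic_bound} (with the complexity measure $P(\bsbB) = J(\bsbB)m + J(\bsbB)\log\{ep/J(\bsbB)\}$ absorbing the $m$ columns) gives, uniformly over all row-sparse $\bsbB$, the bound $G \le \tfrac12 D + C_1\sigma^2\{P(\hat\bsbB) + P(\bsbB^*)\}$ with probability at least $1 - Cp^{-c}\exp(-cm)$. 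These two failure probabilities combine to the stated $1 - Cp^{-c}\exp(-cm) - C\exp(-cmn)$.

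On the intersection of these events I would first convert the random normalization $mn/S$ into an essentially deterministic factor of order $1/\sigma^2$. From the key inequality and $P(\bsbB^*)\le mn/A_0$ one gets $1 - N/S \le A/A_0 =: c_0 < 1$, hence $S \le N/(1-c_0) \lesssim mn\sigma^2$; together with the constraint $\delta(\hat\bsbB)<1$, i.e.\ $P(\hat\bsbB)\le mn/A$, and the stochastic bound on $G$, one also gets $S \ge N - (G-D) \ge c_1 mn\sigma^2 - C_1\sigma^2 mn(1/A + 1/A_0) \gtrsim mn\sigma^2$ once $A,A_0$ are large, so $mn/S \asymp 1/\sigma^2$. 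When $D\ge G$, substituting this into the key inequality gives $D - G \lesssim \sigma^2\{P(\bsbB^*) - P(\hat\bsbB)\}$, and inserting $G \le \tfrac12 D + C_1\sigma^2\{P(\hat\bsbB)+P(\bsbB^*)\}$ produces $\tfrac12 D + (C'A - C_1)\sigma^2 P(\hat\bsbB) \le (C'A + C_1)\sigma^2 P(\bsbB^*)$; choosing $A$ large enough that $C'A\ge C_1$ lets me drop the $P(\hat\bsbB)$ term and conclude $D\lesssim \sigma^2 P(\bsbB^*) = \sigma^2\{s^* m + s^*\log(ep/s^*)\}$. When $D<G$ the linearized inequality is vacuous, so I would instead use the key inequality with the constraint to show $P(\hat\bsbB)\lesssim P(\bsbB^*)$, and then bound $D < G \le \tfrac12 D + C_1\sigma^2\{P(\hat\bsbB)+P(\bsbB^*)\} \lesssim \sigma^2 P(\bsbB^*)$. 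Since $\Breg_2(\bsbX\hat\bsbB,\bsbX\bsbB^*) = D/2$, this is the claim.

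The main obstacle is precisely the logarithmic nonlinearity: unlike the additive criterion of Theorem \ref{thm:tuning}, the penalty and the data-fit term cannot simply be added, so I must sandwich $S = \|\bsbX\hat\bsbB - \bsbY\|_F^2$ both above and below to replace the random factor $mn/S$ by a deterministic quantity of order $1/\sigma^2$, which is exactly what makes the unknown scale $\sigma$ cancel out of the final rate. The delicate part is the bookkeeping of constants: the tuning level $A$ must be large relative to the constant $C_1$ from the stochastic lemma so that the $-A P(\hat\bsbB)$ in the optimality inequality absorbs the $+C_1\sigma^2 P(\hat\bsbB)$ incurred in bounding $G$, yet $A$ must remain below $A_0$ to keep $c_0 = A/A_0 < 1$ and the lower bound on $S$ valid. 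Taking $A_0$ large first and then fixing $A$ as a suitable fraction of $A_0$ reconciles these competing requirements.
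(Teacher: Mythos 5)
Your proposal is correct, but it takes a genuinely different route from the paper's. The paper first establishes the result for a \emph{fractional} scale-free criterion, $l_0(\bsbX\bsbB;\bsbY)/(1-\delta(\bsbB))$: writing $h(\bsbB;A)=1/\{mn-AP(\bsbB)\}$, the optimality comparison yields a basic inequality whose normalizing factor is $\|\bsbE\|_F^2\,\{AP(\bsbB^*)-AP(\hat\bsbB)\}/\{mn-AP(\bsbB^*)\}$ --- a \emph{deterministic} denominator --- so the only quantity needing two-sided concentration is the raw noise energy $\|\bsbE\|_F^2$, handled by Hanson--Wright. The logarithmic form is then absorbed at the very end via $1/(1-\delta)\ge\exp(\delta)\ge 1/(1-\delta/2)$ for $0\le\delta<1$, which shows that a minimizer of the log criterion is a quasi-minimizer of the fractional one with slightly perturbed constants. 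You instead linearize the logarithm directly through $\log t\ge 1-1/t$, which produces the random factor $mn/S$ with $S=\|\bsbX\hat\bsbB-\bsbY\|_F^2$ the \emph{fitted} residual sum of squares; this forces the extra step of sandwiching $S\asymp mn\sigma^2$ (upper bound from the optimality inequality plus $P(\bsbB^*)\le mn/A_0$, lower bound from the stochastic bound on $G$ together with the feasibility constraint $\delta(\hat\bsbB)<1$) and a case split on the sign of $D-G$. Both arguments rest on the same two probabilistic ingredients --- concentration of $\|\bsbE\|_F^2$ and the uniform bound on $\langle\bsbE,\bsbX\hat\bsbB-\bsbX\bsbB^*\rangle$ from the matrix version of the stochastic lemma --- and yield the same rate and the same failure probability $Cp^{-c}e^{-cm}+Ce^{-cmn}$. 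What the paper's detour buys is a cleaner algebraic identity and portability to other non-logarithmic scale-free criteria; what your direct linearization buys is a self-contained proof for the logarithmic form that avoids introducing the auxiliary fractional criterion, at the cost of the additional bookkeeping needed to control $S$ and to reconcile the competing requirements $C'A\ge C_1$ and $A<A_0$ (which you resolve correctly by taking $A_0$ large first).
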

A more general form of $AP(\bsbB)$ can be expressed as     ``$\alpha_1 \times \text{degrees-of-freedom}    + \alpha_2 \times \text{inflation}$" with   $\alpha_1,\alpha_2$ as absolute constants.
The two theorems  can proved based on modifying the proofs of Theorems 2 and 3 in \cite{SheCV}. For completeness, we present some details below. Note that although the logarithmic form of the scale-free predictive information criterion is widely used, other non-asymptotic forms exist \citep{SheCV}. In fact, a key trick  in the proof is to convert these  forms into a fractional scale-free predictive information criterion, which is essential for establishing the desired properties.

\begin{proof}
We first prove Theorem \ref{thm:tuning} under the assumption that $\vect(\bsbE)$ is  subGaussian  with mean 0 and scale $\sigma$.
From the definition of $\hat \bsbB$, $\breg_{l_0}(\bsbX\hat \bsbB,\bsbX\bsbB^*)  + A\sigma^2P(\hat \bsbB)\le A\sigma^2P(\bsbB^*)   +   \langle \bsbE, \bsbX \hat \bsbB -       \bsbX \bsbB^* \rangle $. Similar to the proof of Lemma \ref{le:stochastic_bound}, we can show that  for any  $a, b,  a'>0$,  $4  b>a$, and   $t>0$,
\begin{align} \label{auxgroupspabound}
 \langle \bsbE, \bsbX   \bsbB -       \bsbX \bsbB^* \rangle \le(\frac{2}{a}+\frac{2}{a'})\Breg_{2}(\bsbX  \bsbB,\bsbX\bsbB^*)+ a' \sigma^2t + 4b L \sigma^2 \{P( \bsbB^* )+ P(  \bsbB )\}, \forall \bsbB\in \mathbb R^{p\times m}
\end{align}
occurs with probability at least $1 -C p^{-c} \exp(-cm) \exp(-ct)$, where $L, c, C$ are positive constants. (The probability bound can be derived by setting $L$ to a sufficiently large constant and observing that $Jm + J\log(ep/J) \ge m + \log(ep)$ holds for $J \ge 1$, and the union bound calculation, as in \eqref{Rbound}, does not need to cover the case $J=0$.)

Now, substituting $\hat \bsbB$ for $\bsbB$ in \eqref{auxgroupspabound} and  taking the expectation, we have   for any  $a, b,  a'>0$,  $4  b>a$,
\begin{align*}
&\EE \{\breg_{l_0}(\bsbX\hat \bsbB,\bsbX\bsbB^*) + A\sigma^2P(\hat \bsbB)\}\\ \le &  \EE\Big\{A\sigma^2P(\bsbB^*)  +(\frac{2}{a}+\frac{2}{a'})\Breg_{2}(\bsbX\hat \bsbB,\bsbX\bsbB^*) +c a' \sigma^2+ 4b L \sigma^2 [P(\bsbB^*)+ P( \hat \bsbB)]\Big\}.
\end{align*}
Combining it with the regularity condition gives
\begin{align*}
 \EE \big\{(\delta -\frac{2}{a}-\frac{2}{a'} )\Breg_{2}(\bsbX\hat \bsbB,\bsbX\bsbB^*) + (A - 4bL -C) P(   \hat  \bsbB ) \big\}  \le     (A+4bL+C) \sigma^2 P(\bsbB^*) + c a'  \sigma^{2}. \end{align*}
Since   $ P(\bsbB^*) \ge c > 0$,
choosing the constants satisfying $({1}/{a}+{1}/{a'})(1+{1}/{b'})<{\delta}/{2}$,  $4  b>a$, and $A > 4bL+C$  yields the   conclusion.

Next, we   prove  Theorem \ref{thm:tuning_log_form}.
We begin with a proof for $\hat B$ selected by  a fractional form of  scale-free form of predictive information criterion:  $ l_0(\bsbX   \bsbB ; \bsbY)/(1-\delta(\bsbB))$ subject to $\delta(\bsbB)\le 1$. Let     $h(\bsbB; A) = 1/\{  mn - A P(\bsbB)\}$.
From   the  optimality of   $\hat \bsbB$,  $l_0(\bsbX \hat \bsbB; \bsbY) h(\hat \bsbB; A)   \le l_0(\bsbX   \bsbB^*; \bsbY)   h(  \bsbB^*; A)$ or
\begin{align*}
 l_0(\bsbX \hat \bsbB; \bsbY) -l_0(\bsbX   \bsbB^*; \bsbY)    \le    l_0(\bsbX   \bsbB^*; \bsbY) \Big( \frac{ h(  \bsbB^*; A)}{ h(\hat \bsbB; A)}-1\Big),
\end{align*}
where we used $ h(\hat \bsbB; A)>0$.
Using the Bregman divergence for the quadratic function, we get
\begin{align}
   \Breg_2  (\bsbX \hat \bsbB, \bsbX \bsbB^* )     \le l_0(\bsbX   \bsbB^*; \bsbY) \Big( \frac{ h(  \bsbB^*; A)}{ h(\hat \bsbB; A)}-1\Big) + \langle \bsbE, \bsbX \hat \bsbB - \bsbX \bsbB^*\rangle. \label{sfpictempeq2}
\end{align}

From the definition of $h$ and the model parsimony assumption,   \eqref{sfpictempeq2} becomes
\begin{align}
&  \Breg_2  (\bsbX \hat \bsbB, \bsbX \bsbB^* )  \nonumber \\  \le \ &  l_0(\bsbX   \bsbB^*; \bsbY)\ \frac{ A P(\bsbB^*)- A P(\hat \bsbB)} {   mn  - A P(\bsbB^*)}  + \langle \bsbE, \bsbX \hat \bsbB - \bsbX \bsbB^*\rangle \nonumber\\
= \ &  \frac{1}{2}  \frac{A  \|\bsbE\|_F^2}{   mn\sigma^2  - A \sigma^2 P(\bsbB^*)}\sigma^2 P(\bsbB^*) -\frac{1}{2} \frac{A  \|\bsbE\|_F^2}{  mn - AP(\bsbB^*)}\sigma^2P(\hat\bsbB)  + \langle \bsbE, \bsbX \hat \bsbB - \bsbX \bsbB^*\rangle\nonumber\\
\le \ &  \frac{1}{2}  \frac{A  \|\bsbE\|_F^2}{  (1- A/A_{0}  )  mn\sigma^2 }\sigma^2 P(\bsbB^*) -\frac{1}{2 } \frac{A  \|\bsbE\|_F^2}{   mn \sigma^2}\sigma^2P(\hat\bsbB) + \langle \bsbE, \bsbX \hat \bsbB - \bsbX \bsbB^*\rangle. \label{eqsfpictemp1}
\end{align}

The stochastic term $ \langle \bsbE, \bsbX \hat \bsbB - \bsbX \bsbB^*\rangle$ can be bounded similarly by  \eqref{auxgroupspabound}:      for any  $a_{1}, b_{1}, a_{2}>0$ satisfying $4b_{1} >a_{1}$,
$$
\langle \bsbE, \bsbX  \hat \bsbB - \bsbX \bsbB^* \rangle
\leq2 ({1}/{a_{1}}+{1}/{a_{2}})\Breg_2  (\bsbX \hat \bsbB, \bsbX \bsbB^* )   + (b_{1}) L_{1} \sigma^2\{P(\hat\bsbB) + P(\bsbB^*)\},
$$
with probability at least $1-C p^{-c}\exp\{-c  m\}$ for some $c, C, L_{1}>0$.
 Plugging it into \eqref{eqsfpictemp1} gives
\begin{align*}
&\big(  1  -\frac{2}{a_{1}}-\frac{2}{a_2}\big)\Breg_2  (\bsbX \hat \bsbB, \bsbX \bsbB^* ) \\  \le   \; & \frac{1}{2}\Big\{ \frac{ A  \|\bsbE\|_F^2}{ (1 - A/A_{0}  ) mn\sigma^2    }+ 2 b_{1} L_1 \Big\}\sigma^2 P(\bsbB^*)   -\frac{1}{2} \Big\{\frac{A  \|\bsbE\|_F^2}{   mn \sigma^2} -  {2b_{1} L_1  }  \Big\}\sigma^2P(\hat\bsbB).
\end{align*}

Since $\epsilon_{i,k}$ are independent and  non-degenerate,  $c_{1} mn\sigma^2 \le \EE \| \bsbE\|_F^2 \le c_2mn \sigma^2$ for some  constants $c_{1}, c_2>0$. Let $\gamma$  be  some constant  satisfying $0< \gamma  <  1$.  On $\mathcal E =\{c_{1}(1-\gamma ) {mn\sigma^2}\leq \|\bsbE\|_F^2 \leq c_2(1+\gamma ) {mn\sigma^2} \}$,  we have
\begin{align*}
& \frac{ A  \|\bsbE\|_F^2}{(1- A/A_{0})mn\sigma^2 }  \le    \frac{c_2(1+\gamma )A_{0} A }{A_0 - A} \  \mbox{ and }  \ \frac{A  \|\bsbE\|_F^2}{mn \sigma^2} \ge  {c_{1}(1-\gamma) A }.
\end{align*}
Regarding the probability of the event, we write  $\| \bsbE\|_F^2 = \vect(\bsbE) \bsbA \vect(\bsbE)^T$ with $\bsbA = \bsbI\in \mathbb R^{nm\times nm}$ and bound it with the  Hanson-Wright inequality. In fact, from $\mbox{Tr}(\bsbA) =mn, \|\bsbA\|_2 =1,\| \bsbA\|_F=\sqrt{mn}$, the complement of $\mathcal E$ occurs with probability at most $C' \exp\{-c'm n\}$.

Now, with $A_0, A, a_{1}, a_{2}, b_{1}$   large enough such that       $({1}/{a_{1}}+{1}/{a_{2}})<{1}/{2}$,  $4  b_{1}>a_{1}$,   $A > 2b_{1} L_{1} /\{c_{1}(1-\gamma) \}\ $ and $A_0 >  A$,  we can obtain the desired prediction error rate   for the fractional form. Finally, based on the fact that   $1/(1-\delta) \ge \exp(\delta) \ge 1/(1-\delta/2)$ for any $0\le \delta<1$, the same error rate  holds for the logarithmic form  (see  \citep{SheCV} for more details).
\end{proof}
\subsection{More implementation details}\label{appendix_impldetail}
Slow kill  is extremely simple to implement  and a summary is given below. For ease of presentation, we define an $\bar\eta$ function based on Theorem \ref{th:iter} and its discussions,
\begin{equation}
        \bar\eta(q_+, \rho_+) =
        \begin{cases}
                \frac{1}{2\sqrt{q_+     /\bar s} -1},    &\mbox{if} \ q_{+} > 2q \ \mbox{and} \  q \ge n/2 \\
                \frac{\eta_0}{\rho_+},  &\mbox{if} \ q_{+} \le 2q,\\
                \frac{\eta_0}{\rho_+    } \wedge \frac{1}{2\sqrt{q_+    /\bar s} -1}, &\mbox{otherwise},
        \end{cases}
\end{equation}
where $\bar s = q \wedge nL^2/\log(ep) \ge s$ with $L$ the Lipschitz parameter of $\nabla l_0$ and $\eta_0$ is a user defined parameter. (Like  $q$,  $\eta_0$ is a regularization parameter   customizable by the user.) We also define a $\bsbb$ function
\begin{equation} \label{beta_func}
        \bsbb(q_+, \rho_+, \bsbb^-) = \Theta^{\#}\Big\{\bsbb^{-} - \rho_+^{-1}\bsbX^T\nabla l_0(\bsbX\bsbb^{-}; \bsby);
        q_+, \bar\eta(q_+, \rho_+)\Big\},
\end{equation}
based on \eqref{eq:general_update}. (Often, an intercept should be included (say $\beta_1$) that is subject to no regularization.
We can add a column of ones in the design matrix and redefine the $\Theta^{\#}$ in \eqref{beta_func} to keep the first entry and perform quantile-thresholding on the remaining subvector.)

Recall the line search criterion for a trial $\rho$:
\begin{equation} \label{rho_criterion}
        \begin{split}
                (\rho \Breg_2 - \breg_{l})(\bsbb(q_{t+1}, \rho, \bsbb^{(t)}), \bsbb^{(t)}) \geq 0
        \end{split}
\end{equation}
or
\begin{align*}
        \frac{\rho}{2}\|\bsbb(q_{t+1}, \rho, \bsbb^{(t)}) - \bsbb^{(t)}\|_2^2 \ge \, & l_0(\bsbX\bsbb(q_{t+1}, \rho, \bsbb^{(t)})) - l_0(\bsbX\bsbb^{(t)})\\
        & - \langle \nabla l_0(\bsbX\bsbb^{(t)}), \bsbX\bsbb(q_{t+1}, \rho, \bsbb^{(t)}) - \bsbX\bsbb^{(t)} \rangle.
\end{align*}
Then the algorithm can be summarized as follows.\\
{\small

Input: $\bsbX, \bsby$, a quantile parameter sequence $q_t \to q \in [p]$, a target $\ell_2$-shrinkage $\eta_0 \ge 0$.

Initialization: $\bsbb^{(0)}, \rho_0$ (say $\bsb{0}$ and $L\|\bsbX\|_2^2$, respectively).

For each $q_{t+1}$ ($t \ge 0$), perform the following
\begin{compactitem}
\item[a)]  Find $\rho_{t+1}$ by line search with the criterion \eqref{rho_criterion}.

\item[b)]   Perform $\bsbb^{(t+1)} \gets \bsbb(q_{t+1}, \rho_{t+1}, \bsbb^{(t)})$ according to \eqref{beta_func}.\\
\end{compactitem}
}

%
%
%
%
%
%

We can also add a squeezing operation as step c):
$  \bsbX \gets \bsbX_{\mathcal{J}(\bsbb^{(t+1)})}$
from time to time  (say when $q_{t+1}$ reaches $p/2^k$ for $k$ greater than some $k_0$).
In addition, after $q_{t+1}$ reaches $q$ and when the sparsity pattern of $\bsbb^{(t+1)}$ stabilizes, one can use a classical optimization method to solve a smooth problem to get the nonzero entries of the final estimate. As for step a), many standard line search methods can be used, e.g., backtracking  \citep{Boyd2004}. We use an adaptive search with warm starts. Concretely, given $\alpha \in (0,1)$, we begin with $\rho \gets \rho_{t}$, and set $\rho \gets \alpha \rho$ if \eqref{rho_criterion} is satisfied for $\bsbb(q_{t+1}, \rho, \bsbb^{(t)})$ and $\rho \gets \rho/\alpha$ otherwise, until a small enough $\rho_{t+1}$ makes \eqref{rho_criterion} hold while $\alpha \rho_{t+1}$ does not. In practice, it is wise to limit the number ($M$) of searches. We use  $\alpha = 0.5, M = 5$ for implementation.

{
\bibliographystyle{IEEEtranN}
\bibliography{SKrefs}
}

\end{document}